\documentclass{article}

\PassOptionsToPackage{numbers, compress}{natbib}

\usepackage[final]{neurips_2023}




\usepackage[utf8]{inputenc} 
\usepackage[T1]{fontenc}    
\usepackage{hyperref}       
\usepackage{url}            
\usepackage{booktabs}       
\usepackage{amsfonts}       
\usepackage{nicefrac}       
\usepackage{microtype}      
\usepackage{xcolor}         
\usepackage{amsmath,amssymb}
\usepackage{graphicx}
\usepackage{mathtools}
\usepackage{algorithm}
\usepackage{algorithmic}
\usepackage{amsthm}
\usepackage{booktabs}  
\usepackage{arydshln}
\usepackage{multirow}
\usepackage{url}
\usepackage{graphicx} 
\usepackage{graphics} 
\usepackage{booktabs} 
\usepackage{multirow} 
\usepackage{mathrsfs} 
\usepackage{amsmath} 
\usepackage{booktabs} 
\usepackage{bm}

\usepackage{subfigure}

\usepackage{tikz}
\tikzstyle{mathtext}=[text badly centered, font={\fontsize{8pt}{8pt}\selectfont}]
\tikzstyle{smalltext}=[text badly centered, font={\fontsize{5pt}{5pt}\selectfont}]


\usepackage{amsmath,amsfonts,bm}



\newcommand{\x}{\vx}















\def\1{\bm{1}}








\def\vmu{{\bm{\mu}}}

\def\vepsilon{{\bm{\epsilon}}}

\def\vg{{\bm{g}}}

\def\vv{{\bm{v}}}

\def\vx{{\bm{x}}}



\def\mI{{\bm{I}}}

\DeclareMathAlphabet{\mathsfit}{\encodingdefault}{\sfdefault}{m}{sl}
\SetMathAlphabet{\mathsfit}{bold}{\encodingdefault}{\sfdefault}{bx}{n}


\def\gE{{\mathcal{E}}}

\def\gL{{\mathcal{L}}}

\def\gN{{\mathcal{N}}}

\def\gP{{\mathcal{P}}}

\def\gU{{\mathcal{U}}}



\def\sW{{\mathbb{W}}}








\DeclarePairedDelimiterX{\infdivx}[2]{(}{)}{%
    #1\;\delimsize\|\;#2%
}
\newcommand{\kl}{D_{\mathrm{KL}}\infdivx}

\newcommand{\dm}{\mathrm{d}}

\newcommand{\E}{\mathbb{E}}

\newcommand{\R}{\mathbb{R}}




\DeclareMathOperator*{\argmin}{arg\,min}

\newtheorem{lem}{Lemma}

\newtheorem{thm}{Theorem}

\theoremstyle{remark}

\newcommand{\model}{ProlificDreamer}

\title{\model: High-Fidelity and Diverse Text-to-3D Generation with Variational Score Distillation}

%

\author{Zhengyi Wang\thanks{Equal contribution; \quad $^\dag$ Corresponding authors.}$^{\ \,1,3}$, \quad Cheng Lu\footnotemark[1]$^{\ \,1}$, \quad Yikai Wang$^1$, \quad Fan Bao$^{1,3}$, \\
\textbf{Chongxuan Li}$^\dag$$^{\ \,2}$\textbf{,} \quad \textbf{Hang Su}$^{1,4}$\textbf{,} \quad \textbf{Jun Zhu}$^\dag$$^{\ \,1,3,4}$
\\
$^1$Dept. of Comp. Sci. \& Tech., BNRist Center, Tsinghua-Bosch Joint ML Center, \\
Tsinghua University; $^2$Gaoling School of Artificial Intelligence, Renmin University of China, \\
Beijing Key Laboratory of Big Data Management and Analysis Methods, Beijing, China \\
$^3$ShengShu, Beijing, China; \quad  $^4$Pazhou Laboratory (Huangpu), Guangzhou, China \\
\texttt{\{wang-zy21, bf19\}@mails.tsinghua.edu.cn};~
\texttt{lucheng.lc15@gmail.com}; \\
\texttt{yikaiw@outlook.com};~
\texttt{chongxuanli@ruc.edu.cn};~
\texttt{suhangss@tsinghua.edu.cn} \\
\texttt{dcszj@tsinghua.edu.cn} \\
}

\begin{document}

\maketitle

\begin{abstract}
Score distillation sampling (SDS) has shown great promise in text-to-3D generation by distilling pretrained large-scale text-to-image diffusion models, but suffers from over-saturation, over-smoothing, and low-diversity problems. In this work, we propose to model the 3D parameter as a random variable instead of a constant as in SDS and present \emph{variational score distillation} (VSD), a principled particle-based variational framework to explain and address the aforementioned issues in text-to-3D generation. We show that SDS is a special case of VSD and leads to poor samples with both small and large CFG weights. In comparison, VSD works well with various CFG weights as ancestral sampling from diffusion models and simultaneously improves the diversity and sample quality with a common CFG weight (i.e., $7.5$). We further present various improvements in the design space for text-to-3D such as distillation time schedule and density initialization, which are orthogonal to the distillation algorithm yet not well explored. Our overall approach, dubbed \emph{ProlificDreamer}, can generate high rendering resolution (i.e., $512\times512$) and high-fidelity NeRF with rich structure and complex effects (e.g., smoke and drops). Further, initialized from NeRF, meshes fine-tuned by VSD are meticulously detailed and photo-realistic. Project page and codes: \linebreak\url{https://ml.cs.tsinghua.edu.cn/prolificdreamer/}.

\end{abstract}

\section{Introduction}
 
\begin{figure}[!ht]
    \centering  
\subfigure[\model$\mbox{ }$can generate meticulously detailed and photo-realistic 3D textured meshes.]{\includegraphics[width=.95\linewidth]{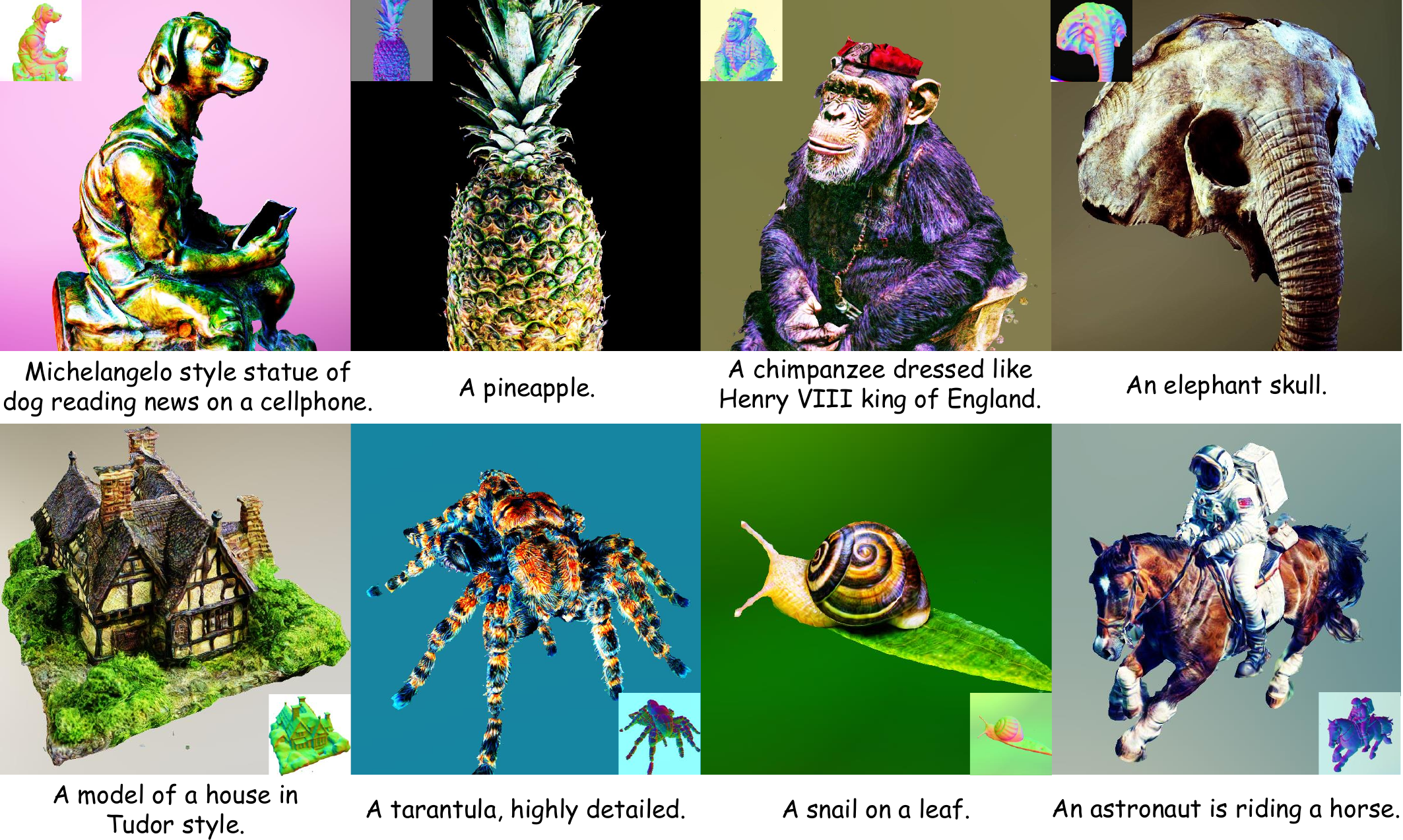}\label{fig:high-mesh}}\\
\subfigure[\model$\mbox{ }$can generate high rendering resolution (i.e., $512\times512$) and high-fidelity NeRF with rich structures and complex effects. Besides, the bottom results show that \model$\mbox{ }$can generate complex scenes with $360^\circ$ views because of our \textit{scene initialization} (see Sec.~\ref{sec:scene_init}).]{\includegraphics[width=.95\linewidth]{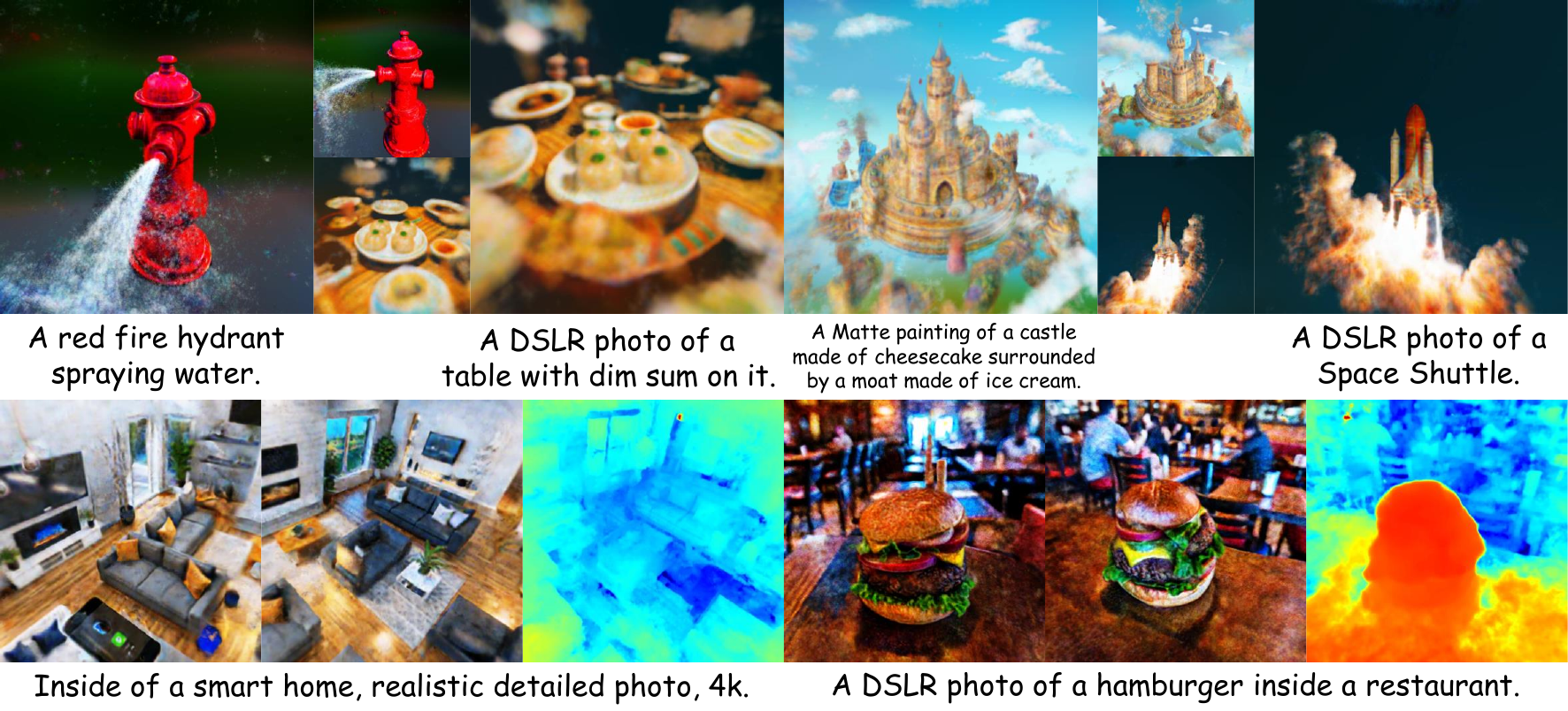}\label{fig:high-nerf}} 
\\
\subfigure[\model$\mbox{ }$can generate diverse and semantically correct 3D scenes given the same text.]{\includegraphics[width=.95\linewidth]{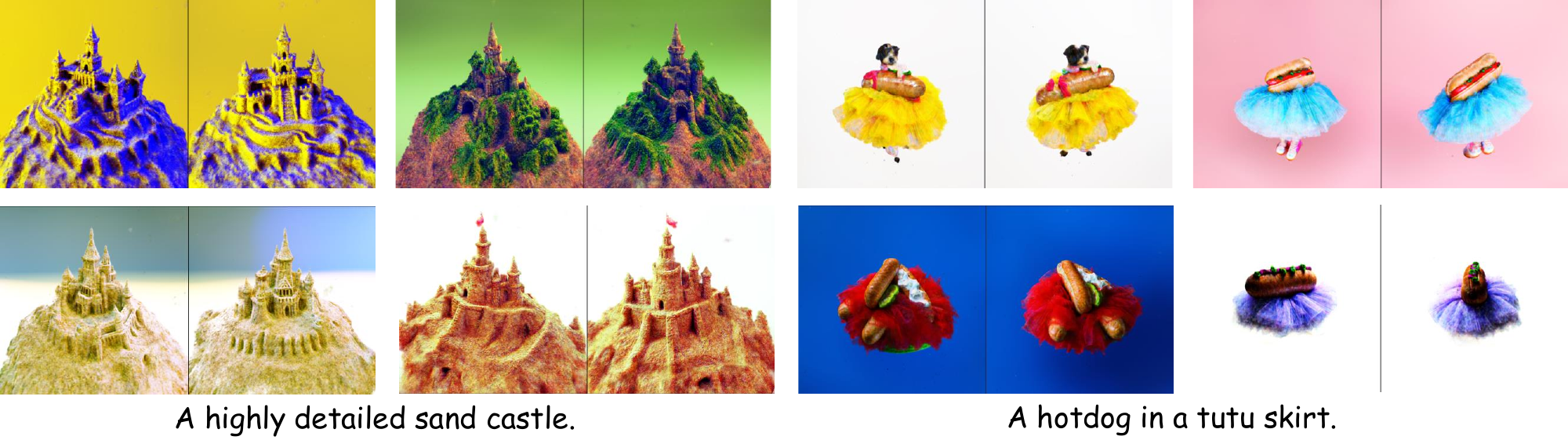}\label{fig:diversity}} 
    \label{fig:head}
    \caption{Text-to-3D samples generated by \model$\mbox{ }$from scratch. Our base model is Stable Diffusion and we do not employ any other assistant model or user-provided shape guidance (see Table~\ref{table:framework}). See our accompanying videos in our \href{https://ml.cs.tsinghua.edu.cn/prolificdreamer/}{project page} for better visual quality.}
\end{figure}

3D content and technologies enable us to visualize, comprehend, and interact with complex objects and environments that are reflective of our real-life experiences. Their pivotal role extends across a wide array of domains, encompassing architecture, animation, gaming, and the rapidly evolving fields of virtual and augmented reality.
In spite of the extensive applications, the production of premium 3D content often remains a formidable task. It necessitates a significant investment of time and effort, even when undertaken by professional designers. This challenge has prompted the development of text-to-3D methods~\citep{jain2022zeroshot,mohammad2022clip,poole2022dreamfusion,lin2022magic3d,chen2023fantasia3d,metzer2022latent,wang2022score,hollein2023text2room}.
By automating the generation of 3D content based on textual descriptions, these innovative methods present a promising way towards streamlining the 3D content creation process. Furthermore, they stand to make this process more accessible, potentially encouraging a significant paradigm shift in the aforementioned fields.

Diffusion models~\citep{sohl2015deep,ho2020denoising,song2021scorebased} have significantly advanced text-to-image synthesis~\citep{ramesh2022hierarchical,saharia2022photorealistic,rombach2022high,balaji2023ediffi}, particularly when trained on large-scale datasets~\citep{schuhmann2022laion}. 
Inspired by these developments, DreamFusion~\cite{poole2022dreamfusion} employs a pretrained, large-scale text-to-image diffusion model for the generation of 3D content from text in the wild, circumventing the need for any 3D data. DreamFusion introduces the Score Distillation Sampling (SDS) algorithm to optimize a single 3D representation such that the image rendered from any view maintains a high likelihood as evaluated by the diffusion model, given the text. Despite its wide application~\citep{lin2022magic3d,chen2023fantasia3d,metzer2022latent,wang2022score}, empirical observations~\citep{poole2022dreamfusion} indicate that SDS often suffers from over-saturation, over-smoothing, and low-diversity problems, which have yet to be thoroughly explained or adequately addressed. Additionally, orthogonal elements in the design space for text-to-3D, such as rendering resolution and distillation time schedule, have not been fully explored, suggesting a significant potential for further improvement. In this paper, we present a systematic study of all these elements to obtain elaborate 3D representations.

We first present \emph{Variational Score Distillation} (VSD), which treats the corresponding 3D scene given a textual prompt as a \emph{random variable} instead of a single point as in SDS~\citep{poole2022dreamfusion}. 
VSD optimizes a distribution of 3D scenes such that the distribution induced on images rendered from all views aligns as closely as possible, in terms of KL divergence, with the one defined by the pretrained 2D diffusion model (see Sec.~\ref{sec:derive_vsd}).
Under this variational formulation, VSD naturally characterizes the phenomenon that multiple 3D scenes can potentially align with one prompt. 
To solve it efficiently, VSD adopts particle-based variational inference~\citep{liu2016stein,chen2018unified,dong2022particle}, and maintains a set of 3D parameters as particles to represent the 3D distribution. We derive a novel gradient-based update rule for the particles via the Wasserstein gradient flow (see Sec.~\ref{sec:update_rule}) and guarantee that the particles will be samples from the desired distribution when the optimization converges (see Theorem~\ref{thrm:informal_gradient_flow}). Our update requires estimating the score function of the distribution on diffused rendered images, which can be efficiently and effectively implemented by a low-rank adaptation (LoRA)~\citep{hu2021lora,diffusion-lora} of the pretrained diffusion model. The final algorithm alternatively updates the particles and score function.

We show that SDS is a special case of VSD, by using a single-point Dirac distribution as the variational distribution (see Sec.~\ref{sec:compare}). This insight explains the restricted diversity and fidelity of the generated 3D scenes by SDS. Moreover, even with a single particle, VSD can learn a parametric score model, potentially offering superior generalization over SDS. 
We also empirically compare SDS and VSD in 2D space by using an identity rendering function that isolates other 3D factors. Similar to ancestral sampling from diffusion models, VSD is able to produce realistic samples using a normal CFG weight (i.e., 7.5). In contrast, SDS exhibits inferior results, sharing the same issues previously observed in text-to-3D, such as over-saturation and over-smoothing~\citep{poole2022dreamfusion}. 

We further systematically study other elements orthogonal to the algorithm for text-to-3D and present a clear design space in Sec.~\ref{sec:space}. Specifically, we propose a high rendering resolution of $512\times 512$ during training and an annealed distilling time schedule to improve the visual quality. We also propose \textit{scene initialization}, which is crucial for complex scene generation. Comprehensive ablations in Sec.~\ref{sec:experiments} demonstrate the effectiveness of all the aforementioned elements particularly for VSD. Our overall approach can generate high-fidelity and diverse 3D results. We term it as \emph{\model}\footnote{A prolific dreamer is someone who experiences vivid dreams quite regularly~\citep{stewart1973experiencing}, which corresponds to the high-fidelity and diverse results of our method.}. 

As shown in Fig.~\ref{fig:head} and Sec.~\ref{sec:experiments}, \model$\mbox{ }$can generate $512\times512$ rendering resolution and high-fidelity Neural Radiance Fields (NeRF) with rich structure and complex effects (e.g., smoke and drops). Besides, for the first time, \model$\mbox{ }$can successfully construct complex scenes with multiple objects in $360^\circ$ views given the textual prompt. Further, initialized from the generated NeRF, \model$\mbox{ }$can generate meticulously detailed and photo-realistic 3D textured meshes. 

\section{Background}

We present preliminaries on diffusion models, score distillation sampling, and 3D representations.

\textbf{Diffusion models.} A diffusion model~\cite{sohl2015deep,ho2020denoising,song2021scorebased} involves a forward process $\{q_t\}_{t\in[0,1]}$ to gradually add noise to a data point $\x_0\sim q_0(\x_0)$ and a reverse process $\{p_t\}_{t\in[0,1]}$ to denoise/generate data. The forward process is defined by $q_t(\vx_t|\vx_0) \coloneqq \gN(\alpha_t\vx_0, \sigma_t^2 \mI)$ and $q_t(\vx_t) \coloneqq \int q_t(\vx_t|\vx_0)q_0(\vx_0)\dm \vx_0$, where $\alpha_t,\sigma_t>0$ are hyperparameters satisfying $\alpha_0\approx 1, \sigma_0\approx 0, \alpha_1\approx 0, \sigma_1\approx 1$; and the reverse process is defined by denoising from $p_1(\x_1)\coloneqq \gN(\bm{0},\mI)$ with a parameterized \textit{noise prediction network} $\vepsilon_\phi(\vx_t, t)$ to predict the noise added to a clean data $\vx_0$, which is trained by minimizing
\begin{equation}
\label{eq:diffusion_loss}
\gL_{\text{Diff}}(\phi) \coloneqq \E_{\x_0\sim q_0(\x_0),t\sim\gU(0,1),\vepsilon\sim\gN(\bm{0},\mI)}[\omega(t)\|\vepsilon_\phi(\alpha_t\x_0+\sigma_t\vepsilon) - \vepsilon\|_2^2],
\end{equation}
where $\omega(t)$ is a time-dependent weighting function. After training, we have $p_t\approx q_t$ and thus we can draw samples from $p_0 \approx q_0$. Moreover, the noise prediction network can be used for approximating the \textit{score function} of both $q_t$ and $p_t$ by $\nabla_{\vx_t}\log q_t(\vx_t) \approx \nabla_{\vx_t}\log p_t(\vx_t)\approx -\vepsilon_\phi(\vx_t, t) / \sigma_t$.

One of the most successful applications of diffusion models is text-to-image generation~\cite{saharia2022photorealistic,ramesh2022hierarchical,rombach2022high}, where the noise prediction model $\vepsilon_\phi(\x_t,t,y)$ is conditioned on a text prompt $y$. In practice, classifier-free guidance (CFG~\cite{ho2022classifier}) is a key technique for trading off the quality and diversity of the samples, which modifies the model by $\hat\vepsilon_\phi(\x_t,t,y)\coloneqq (1+s)\vepsilon_\phi(\x_t,t,y) - s \vepsilon_\phi(\x_t,t,\varnothing)$, where $\varnothing$ is a special ``empty'' text prompt representing for the unconditional case, and $s>0$ is the guidance scale. A larger guidance scale usually improves the text-image alignment but reduces diversity.

\textbf{Text-to-3D generation by score distillation sampling (SDS)~\cite{poole2022dreamfusion}.} SDS is an optimization method by distilling pretrained diffusion models, also known as Score Jacobian Chaining (SJC)~\citep{wang2022score}. It is widely used in text-to-3D generation~\cite{poole2022dreamfusion,wang2022score,lin2022magic3d,metzer2022latent,wang2022score,chen2023fantasia3d} with great promise. Given a pretrained text-to-image diffusion model $p_t(\x_t|y)$ with the noise prediction network $\vepsilon_{\text{pretrain}}(\vx_t,t,y)$, SDS optimizes a single 3D representation with parameter $\theta\in\Theta$, where $\Theta$ is the space of $\theta$ with the Euclidean metric. Given a camera parameter $c$ with a distribution $p(c)$ and a differentiable rendering mapping $\vg(\cdot,c): \Theta \rightarrow \R^d$, denote $y^c$ as the ``\textit{view-dependent prompt}''~\citep{poole2022dreamfusion} (i.e., a text prompt with view information), $q_t^\theta(\x_t|c)$ as the distribution at time $t$ of the forward diffusion process starting from the rendered image $\vg(\theta,c)$ with the camera $c$ and 3D parameter $\theta$. SDS optimizes the parameter $\theta$ by solving
\begin{equation}
\label{eq:SDS_kl}
    \min_{\theta\in \Theta} \gL_{\mbox{\tiny SDS}}(\theta) \coloneqq \E_{t,c}\left[(\sigma_t / \alpha_t)\omega(t) \kl{q_t^\theta(\x_t|c)}{p_t(\x_t|y^c)} \right],
\end{equation}
where $t\sim \gU(0.02, 0.98)$, $\vepsilon\sim \gN(\bm{0},\mI)$, and $\x_t=\alpha_t\vg(\theta,c) + \sigma_t\vepsilon$. Its gradient is approximated by
\begin{equation}
\label{eq:sds}
    \nabla_{\theta} \gL_{\mbox{\tiny SDS}}(\theta) \approx \E_{t,\vepsilon,c}\,\left[\omega(t)(\vepsilon_{\text{pretrain}}(\vx_t,t,y^c) - \vepsilon)\frac{\partial\vg(\theta,c)}{\partial\theta}\right]\mbox{.} 
\end{equation}
Notwithstanding this progress, empirical observations~\citep{poole2022dreamfusion} show that SDS often suffers from over-saturation, over-smoothing, and low-diversity issues, which have yet to be thoroughly explained or adequately addressed.

\textbf{3D representations.} We employ NeRF~\cite{mildenhall2021nerf,muller2022instant} (Neural Radiance Fields) and textured mesh~\citep{shen2021dmtet} as two popular and important types of 3D representations. In particular, NeRF represents 3D objects using a multilayer perceptron (MLP) that takes coordinates in a 3D space as input and outputs the corresponding color and density. Here, $\theta$ corresponds to the parameters of the MLP. Given camera pose $c$, the rendering process $\vg(\theta, c)$ is defined as casting rays from pixels and computing the weighted sum of the color of the sampling points along each ray to composite the color of each pixel. NeRF is flexible for optimization and is capable of representing extremely complex scenes. Textured mesh~\citep{shen2021dmtet} 
represents the geometry of a 3D object with triangle meshes and the texture with color on the mesh surface. Here the 3D parameter $\theta$ consists of the parameters to represent the coordinates of triangle meshes and parameters of the texture. The rendering process $\vg(\theta, c)$ given camera pose $c$ is defined by casting rays from pixels and computing the intersections between rays and mesh surfaces to obtain the color of each pixel. The textured mesh allows high-resolution and fast rendering with differentiable rasterization.

\section{Variational Score Distillation}
\label{sec:vsd}

We now present Variational Score Distillation (VSD) (see Sec.~\ref{sec:derive_vsd}) that learns to sample from a distribution of the 3D scenes. By using 3D parameter particles to represent the target 3D distribution, we derive a principled gradient-based update rule for the particles via the Wasserstein gradient flow (see Sec.~\ref{sec:update_rule}). We further show that SDS is a special case of VSD and constructs an experiment in 2D space to study the optimization algorithm isolated from the 3D representations, explaining the practical issues of SDS both theoretically and empirically (see Sec.~\ref{sec:compare}).

\subsection{Sampling from 3D Distribution as Variational Inference}
\label{sec:derive_vsd}

In principle, given a valid text prompt $y$, there exists a probabilistic distribution of all possible 3D representations. 
Under a 3D representation (e.g., NeRF) parameterized by $\theta$, such a distribution can be modeled as a probabilistic density $\mu(\theta|y)$. Denote $q^{\mu}_0(\x_0|c,y)$ as the (implicit) distribution of the rendered image $\x_0\coloneqq \vg(\theta,c)$ given the camera $c$ with the rendering function $\vg(\cdot,c)$, and denote $p_0(\x_0|y^c)$ as the marginal distribution of $t=0$ defined by the pretrained text-to-image diffusion model with the view-dependent prompt $y^c$. 
To obtain 3D representations of high visual quality, we propose to optimize the distribution $\mu$ to align the rendered images of its samples with the pretrained diffusion model in all views by solving
\begin{equation}
\label{eq:kl_time_0}
    \min_{\mu}\kl{q^{\mu}_0(\x_0|c,y)}{p_0(\x_0|y^c)}.
\end{equation}
This is a typical variational inference problem that uses the variational distribution $q^{\mu}_0(\x_0|c,y)$ to approximate (distill) the target distribution $p_0(\x_0|y^c)$. 

Directly solving problem~\eqref{eq:kl_time_0} is hard because $p_0$ is rather complex and the high-density regions of $p_0$ may be extremely sparse in high dimension~\citep{song2019generative}. Inspired by the success of diffusion models~\citep{ho2020denoising,song2021scorebased}, we construct a series of optimization problems with different diffused distributions indexed by $t$. As $t$ increases to $T$, the optimization problem becomes easier because the diffused distributions get closer to the standard Gaussian. We simultaneously solve an ensemble of these problems (termed as \emph{variational score distillation} or VSD) 
as follows:
\begin{equation}
\label{eq:generalized_score_distillation}
\mu^* \coloneqq \argmin_{\mu} \E_{t,c}\left[(\sigma_t/\alpha_t)\omega(t) \kl{q^{\mu}_t(\x_t|c,y)}{p_t(\x_t|y^c)} \right],
\end{equation}
where $q_t^{\mu}(\x_t|c,y)\coloneqq \int q_0^{\mu}(\x_0|c,y)p_{t0}(\x_t|\x_0)\dm\x_0$ and $p_{t}(\x_t|y^c) \coloneqq \int p_0(\x_0|y^c)p_{t0}(\x_t|\x_0)\dm\x_0$ are the corresponding noisy distributions at time $t$ with the Gaussian transition $p_{t0}(\x_t|\x_0)=\gN(\x_t|\alpha_t\x_0,\sigma_t^2\mI)$, and $\omega(t)$ is a time-dependent weighting function.

Compared with SDS that optimizes for the single point $\theta$, VSD optimizes for the whole distribution $\mu$, from which we sample $\theta$.
Notably, we prove that introducing the additional KL-divergence for $t>0$ in VSD does not affect the global optimum of the original problem~\eqref{eq:kl_time_0}, as shown below.

\begin{thm}[Global optimum of VSD, proof in Appendix~\ref{sec:proof}.]
\label{thrm:marginal_matching}
For each $t>0$, we have
\begin{equation}
    \kl{q_t^\mu(\x_t|c,y)}{p_t(\x_t|y^c)} = 0 \Leftrightarrow q_0^{\mu}(\x_0|c,y) = p_0(\x_0|y^c).
\end{equation}
\end{thm}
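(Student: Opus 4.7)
The plan is to prove the two directions separately, with the reverse direction being essentially definitional and the forward direction reducing to injectivity of Gaussian convolution.

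For the $(\Leftarrow)$ direction, I would argue directly from the constructions given in the text: both $q_t^\mu(\x_t|y) = \int q_0^\mu(\x_0|y) p_{t0}(\x_t|\x_0)\dm\x_0$ and $p_t(\x_t|y) = \int p_0(\x_0|y) p_{t0}(\x_t|\x_0)\dm\x_0$ share the same Gaussian transition kernel $p_{t0}$. So if $q_0^\mu = p_0$, the two integrals coincide pointwise, hence $q_t^\mu = p_t$ and the KL divergence vanishes. This step is mechanical.

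For the $(\Rightarrow)$ direction, I would start from the standard fact that $\mathrm{KL}(q_t^\mu \,\|\, p_t) = 0$ implies $q_t^\mu(\x_t|y) = p_t(\x_t|y)$ almost everywhere (Gibbs' inequality). Substituting the convolution representations gives
\begin{equation*}
\int \bigl(q_0^\mu(\x_0|y) - p_0(\x_0|y)\bigr)\,p_{t0}(\x_t|\x_0)\,\dm\x_0 = 0 \quad\text{for a.e. } \x_t.
\end{equation*}
Since $p_{t0}(\x_t|\x_0) = \gN(\x_t \mid \alpha_t\x_0, \sigma_t^2\mI)$ with $\alpha_t,\sigma_t>0$ for $t>0$, a change of variables $\x_0 \mapsto \alpha_t \x_0$ recasts this as the convolution of the signed measure $q_0^\mu - p_0$ with a nondegenerate isotropic Gaussian of variance $\sigma_t^2$. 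I would then take Fourier transforms (or apply the characteristic function): convolution becomes multiplication, and since the Fourier transform of a Gaussian is a nowhere-vanishing Gaussian, I can divide it out to conclude that the Fourier transform of $q_0^\mu - p_0$ is identically zero, hence $q_0^\mu = p_0$.

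The main obstacle is ensuring the Fourier/characteristic-function manipulation is rigorous when $q_0^\mu$ and $p_0$ are general probability densities (possibly without nice tails). This is standard but deserves a careful statement: because $q_0^\mu - p_0$ is a finite signed measure, its characteristic function is well-defined and continuous, and multiplication by the everywhere-nonzero Gaussian characteristic function preserves information. An alternative, slightly slicker route would be to invoke the probability-flow ODE associated to the forward diffusion, which is a deterministic bijection between $p_t$ and $p_0$ for any $t>0$, but the Fourier argument is more self-contained and is likely what the authors use in Appendix~\ref{sec:proof}.
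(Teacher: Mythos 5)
Your proposal is correct and takes essentially the same route as the paper: the appendix proof also reduces the claim to an identity of characteristic functions, observing that $\varphi_{q_t^\mu}(s) = e^{-\sigma_t^2 s^2/2}\varphi_{q_0^\mu}(\alpha_t s)$ and likewise for $p_t$, then cancels the nowhere-vanishing Gaussian factor to get $\varphi_{q_0^\mu} = \varphi_{p_0}$. Your framing via convolution of the signed measure $q_0^\mu - p_0$ with a nondegenerate Gaussian is just a rephrasing of the same Fourier argument, and your extra care about rigor (finite signed measures, continuity of characteristic functions) is a welcome addition but not a different idea.
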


\subsection{Update Rule for Variational Score Distillation}
\label{sec:update_rule}
\begin{figure}[t]
\centering
\includegraphics[width=.95\linewidth]{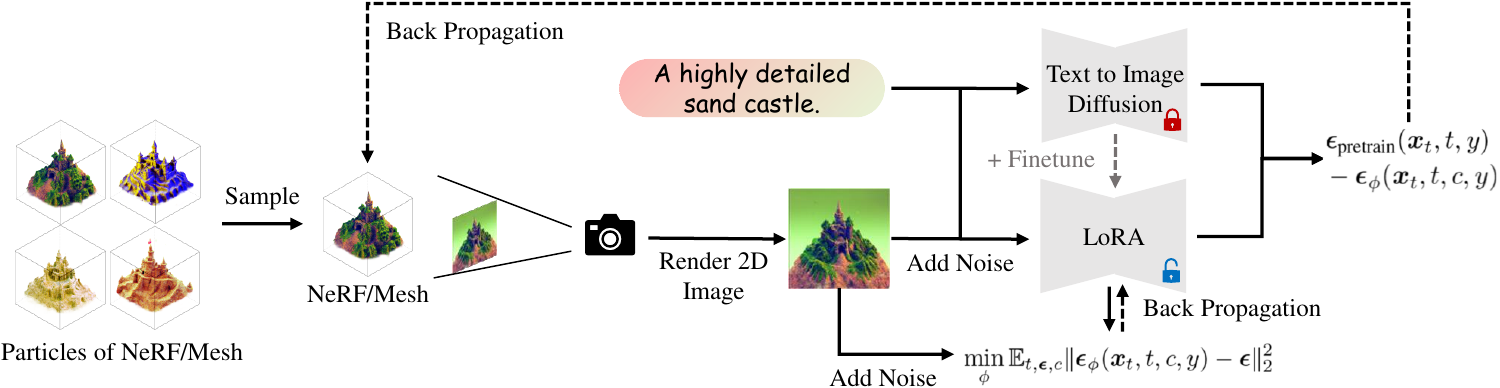}
\caption{
    Overview of VSD. The 3D representation is differentiably rendered at a random pose $c$. The rendered image is sent to the pretrained diffusion and the score of the variational distribution (estimated by LoRA) to compute the gradient of VSD. LoRA is also updated on the rendered image.
}
\label{fig:diagram}
\end{figure}

To solve problem~\eqref{eq:generalized_score_distillation}, a direct way can be to train another parameterized generative model for $\mu$, but it may bring much computation cost and optimization complexity. Inspired by previous particle-based variational inference~\citep{liu2016stein,chen2018unified,dong2022particle} methods, we maintain $n$ 3D parameters\footnote{We optimize up to $n=4$ particles due to the computation resource limit. See details in Appendix~\ref{sec:n_particles}.} $\{\theta\}_{i=1}^n$ as particles and derive a novel update rule for them. 
Intuitively, we use $\{\theta\}_{i=1}^n$ to ``represent'' the current distribution $\mu$, and $\theta^{(i)}$ will be samples from the optimal distribution $\mu^*$ if the optimization converges. Such optimization can be realized by simulating an ODE w.r.t. $\theta$, as shown in the following theorem.
\begin{thm}[Wasserstein gradient flow of VSD, proof in Appendix~\ref{sec:theory_VSD}]
\label{thrm:informal_gradient_flow}
Starting from an initial distribution $\mu_0$, denote the Wasserstein gradient flow minimizing problem~\eqref{eq:generalized_score_distillation} in the distribution (function) space at each time $\tau\geq 0$ as $\{\mu_\tau\}_{\tau\geq 0}$ with $\mu_{\infty}=\mu^*$. Then we can sample $\theta_\tau$ from $\mu_\tau$ by firstly sampling $\theta_0\sim\mu_0(\theta_0|y)$ and then simulating the following ODE:
\begin{equation}
\label{eq:ode_main_text}
\frac{\dm\theta_\tau}{\dm\tau} = -\E_{t,\vepsilon,c}\Big[
    \omega(t)
    \Big( \underbrace{-\sigma_t\nabla_{\x_t}\log p_t(\x_t|y^c)}_{\text{score of noisy real images}} - \underbrace{(-\sigma_t\nabla_{\x_t}\log q^{\mu_\tau}_t(\x_t|c,y))}_{\text{score of noisy rendered images}}  \Big)
    \frac{\partial \vg(\theta_\tau,c)}{\partial \theta_\tau}
\Big],
\end{equation}
where $q_t^{\mu_\tau}$ is the corresponding noisy distribution at diffusion time $t$ w.r.t. $\mu_\tau$ at ODE time $\tau$.
\end{thm}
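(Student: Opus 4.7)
The plan is to recognize the objective in \eqref{eq:generalized_score_distillation} as a functional $F(\mu)\coloneqq \E_{t,c}[(\sigma_t/\alpha_t)\omega(t)\,\kl{q^{\mu}_t(\x_t|c,y)}{p_t(\x_t|y)}]$ on probability densities over the parameter space $\Theta$, compute its first variation $\frac{\delta F}{\delta\mu}$, and then invoke the standard equivalence between the Wasserstein gradient flow of $F$ (the continuity equation $\partial_\tau\mu_\tau = -\nabla_{\theta}\!\cdot\!(\mu_\tau v_\tau)$ with velocity field $v_\tau = -\nabla_\theta\frac{\delta F}{\delta\mu_\tau}$) and the McKean--Vlasov particle ODE $\dot{\theta}_\tau = v_\tau(\theta_\tau)$, so that if $\theta_0\sim\mu_0$ then $\theta_\tau\sim\mu_\tau$ for every $\tau\geq 0$. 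The structural fact that unlocks the calculation is that $\mu\mapsto q_t^\mu$ is \emph{linear}: since $q_0^\mu(\cdot|c,y)$ is the pushforward of $\mu(\cdot|y)$ under $\vg(\cdot,c)$, we have $q_t^\mu(\x_t|c,y) = \int p_{t0}(\x_t|\vg(\theta,c))\,\mu(\theta|y)\,\dm\theta$ and hence $\frac{\delta q_t^\mu(\x_t|c,y)}{\delta\mu(\theta|y)} = p_{t0}(\x_t|\vg(\theta,c))$.

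With this in hand I would split the KL into entropy and cross-entropy, differentiate each in $\mu$, and check that (up to a $\theta$-independent constant which $\nabla_\theta$ kills)
\[
\frac{\delta}{\delta\mu(\theta|y)}\kl{q_t^\mu(\x_t|c,y)}{p_t(\x_t|y)} = \int p_{t0}(\x_t|\vg(\theta,c))\,\log\frac{q_t^\mu(\x_t|c,y)}{p_t(\x_t|y)}\,\dm\x_t.
\]
Reparameterizing $\x_t = \alpha_t\vg(\theta,c)+\sigma_t\vepsilon$ with $\vepsilon\sim\gN(\vzero,\mI)$ turns the right-hand side into $\E_\vepsilon[\log q_t^\mu(\x_t|c,y)-\log p_t(\x_t|y)]$, and the chain rule in $\theta$ pulls out $\alpha_t\,\frac{\partial\vg(\theta,c)}{\partial\theta}$ multiplying $(\nabla_{\x_t}\log q_t^\mu - \nabla_{\x_t}\log p_t)$. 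Multiplying by the weight $(\sigma_t/\alpha_t)\omega(t)$ cancels the $\alpha_t$, leaving $\sigma_t$ times the score difference, and reinstating the expectations over $t,c$ and the minus sign from steepest descent recovers exactly \eqref{eq:ode_main_text}. For the final statement $\mu_\infty = \mu^*$, I would appeal to the convexity of KL in $\mu$ inherited from the linearity $\mu\mapsto q_t^\mu$, which rules out spurious critical points, combined with Theorem~\ref{thrm:marginal_matching} to conclude that any minimizer satisfies $q_0^\mu = p_0$.

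The main obstacle I expect is the rigorous treatment of the entropy contribution $\int q_t^\mu\log q_t^\mu\,\dm\x_t$: its first variation is nonlocal in $\mu$, so one must justify the exchange of differentiation and integration and control the log singularity, which is standard under regularity of the Gaussian kernel $p_{t0}$, smoothness and boundedness of the rendering map $\vg$, and sufficient decay of $\mu$. A secondary subtlety is the use of the conditional KL $\kl{q_t^\mu(\x_t|c,y)}{p_t(\x_t|y)}$ rather than its $c$-marginal form, but since the conditional version differs from the marginal one only by a term whose $\theta$-gradient vanishes after the operations above, the variational calculation is unaffected and the stated ODE is recovered.
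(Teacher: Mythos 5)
Your proposal tracks the paper's proof essentially step for step: the appendix also derives the Wasserstein gradient flow by computing the first variation of the KL objective through the linearity of $\mu\mapsto q_t^{\mu}$ (the paper's Lemmas~\ref{lem:first_variation_kl}--\ref{lem:first_variation_kl_qt}), reparameterizes via $\x_t=\alpha_t\vg(\theta,c)+\sigma_t\vepsilon$ so the $\alpha_t$ cancels the $(\sigma_t/\alpha_t)$ weight, and reads off the particle ODE from the resulting continuity equation. One small factual slip in your closing aside: the $c$-conditional and $c$-marginal KL differ by the mutual information between $\x_t$ and $c$ under $\mu$, whose $\theta$-gradient does \emph{not} generally vanish; this is harmless only because both you and the paper take the $c$-conditional functional (the right-hand side of Eq.~\eqref{eq:generalized_score_distillation}, i.e., $\gE[\mu]$ in Eq.~\eqref{eq:raw_objective}) as the object whose gradient flow is computed.
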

According to Theorem~\ref{thrm:informal_gradient_flow}, we can simulate the ODE in Eq.~\eqref{eq:ode_main_text} for a large enough $\tau$ to approximately sample from the desired distribution $\mu^*$. The ODE involves the score function of noisy real images and that of noisy rendered images at each time\footnote{Note that we have two variables of time: one is the diffusion time $t\in[0, T]$ and the other is the gradient flow time $\tau$, corresponding to the optimization iteration for each $\theta$.} $\tau$. 
The score function of noisy real images $-\sigma_t\nabla_{\x_t}\log p_t(\x_t|y^c)$ can be approximated by the pretrained diffusion model $\vepsilon_{\text{pretrain}}(\x_t,t,y^c)$. The score function of noisy rendered images $-\sigma_t\nabla_{\x_t}\log q_t^{\mu_\tau}(\x_t|c,y)$ is estimated by another noise prediction network $\vepsilon_\phi(\x_t,t,c,y)$, which is trained on the rendered images by $\{\theta^{(i)}\}_{i=1}^n$ with the standard diffusion objective (see Eq.~\eqref{eq:diffusion_loss}):
\begin{equation}
\label{eq:unet-training}
    \min_{\phi} \sum_{i=1}^n \E_{t\sim\gU(0,1),\vepsilon\sim\gN(\bm{0},\mI),c\sim p(c)}\left[
    \| \vepsilon_{\phi}(\alpha_t \vg(\theta^{(i)},c) + \sigma_t\vepsilon,t,c,y) - \vepsilon \|_2^2
\right]. 
\end{equation}
In practice, we parameterize $\vepsilon_\phi$ by either a small U-Net~\citep{ronneberger2015u} or a LoRA (Low-rank adaptation~\citep{hu2021lora,diffusion-lora}) of the pretrained model $\vepsilon_{\text{pretrain}}(\x_t,t,y^c)$, and add additional camera parameter $c$ to the condition embeddings in the network. In most cases, we find that using LoRA can greatly improve the fidelity of the obtained samples (e.g., see results in Fig.~\ref{fig:head}). We believe that it is because LoRA is designed for efficient few-shot fine-tuning and can leverage the prior information in $\vepsilon_{\text{pretrain}}$ (the information of both images and text corresponding to $y$). 

Note that at each ODE time $\tau$, we need to ensure $\vepsilon_\phi$ matches the current distribution $q_t^{\mu_\tau}$. Thus, we optimize $\vepsilon_\phi$ and $\theta^{(i)}$ alternately, and each particle $\theta^{(i)}$ is updated by $\theta^{(i)}\leftarrow \theta^{(i)} - \eta \nabla_{\theta}\gL_{\text{VSD}}(\theta^{(i)})$, where $\eta>0$ is the step size (learning rate). According to Theorem~\ref{thrm:informal_gradient_flow}, the corresponding gradient is
\begin{equation}
\label{eq:vsd}
\nabla_{\theta}\gL_{\text{VSD}}(\theta) \triangleq \E_{t,\vepsilon,c}\left[
    \omega(t)
    \left( \vepsilon_{\text{pretrain}}(\x_t,t,y^c) - \vepsilon_{\phi}(\x_t,t,c,y)  \right)
    \frac{\partial \vg(\theta,c)}{\partial \theta}
\right],
\end{equation}
where $\x_t=\alpha_t\vg(\theta,c)+\sigma_t\vepsilon$. We show the approach of VSD in Fig.~\ref{fig:vsd} (see pseudo code in Appendix~\ref{appendix:algorithm}).

\subsection{Comparison with SDS}
\label{sec:compare}

\begin{figure}
\begin{center}
\subfigure[SDS~\citep{poole2022dreamfusion} (CFG = 7.5)]{\includegraphics[width=0.44\columnwidth]{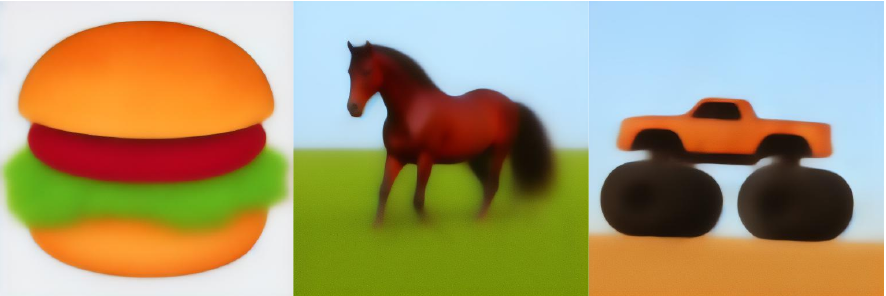}}
\hspace{0.15in}
\subfigure[SDS~\citep{poole2022dreamfusion} (CFG = 100)]{\includegraphics[width=0.44\columnwidth]{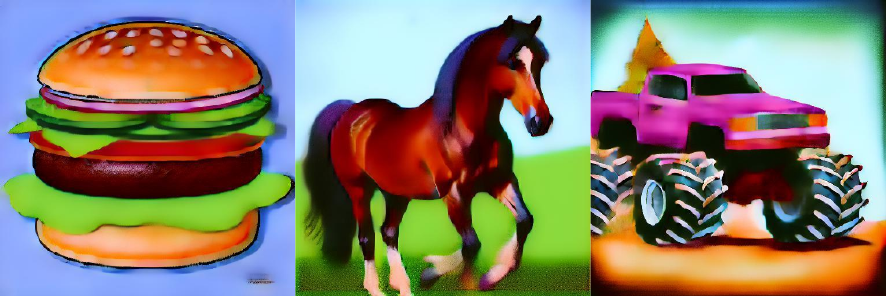}} \\
\vspace{-.1in}
\subfigure[Ancestral sampling~\citep{lu2022dpm} (CFG = 7.5)]{\includegraphics[width=0.44\columnwidth]{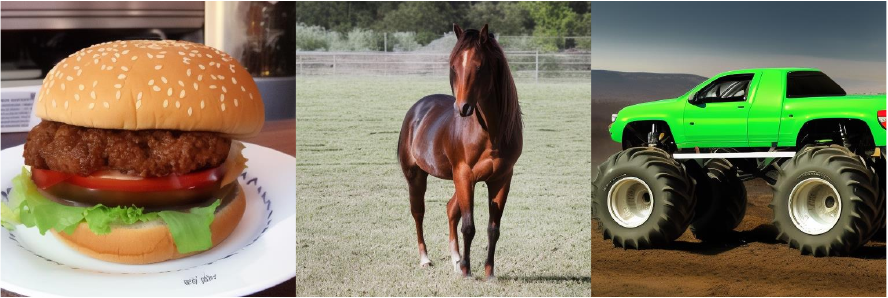}}
\hspace{0.15in}
\subfigure[VSD (CFG = 7.5, \textbf{ours})]{\includegraphics[width=0.44\columnwidth]{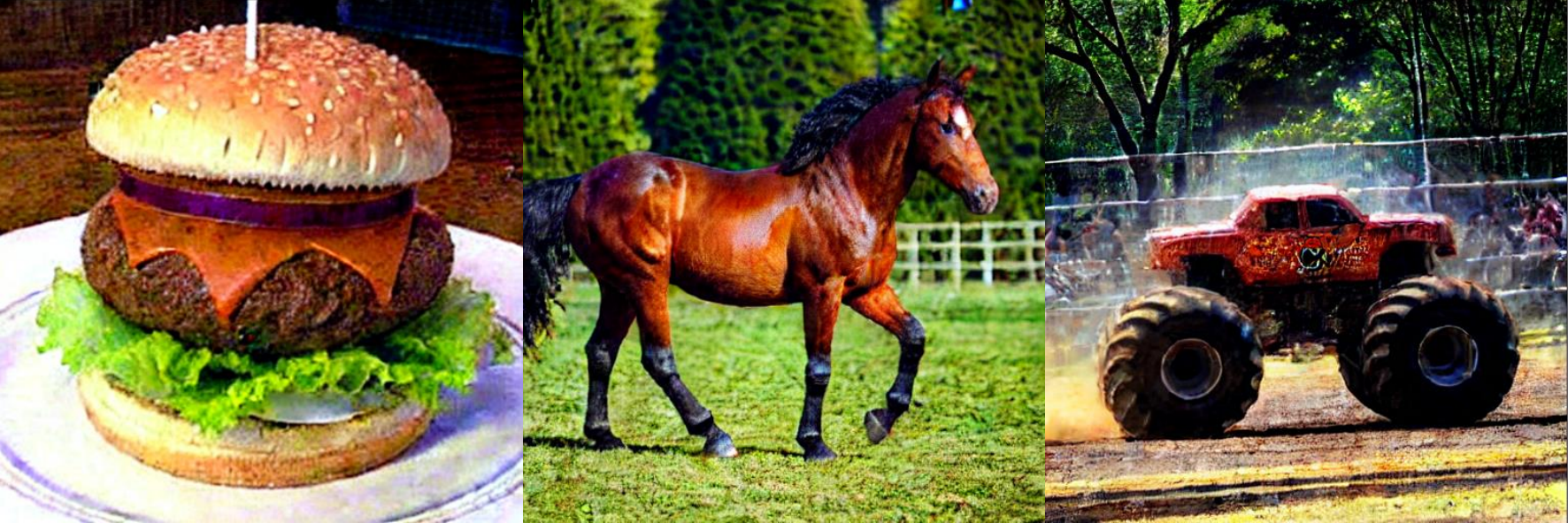}}
\end{center}
\caption{Samples of different methods in 2D space. Similarly to ancestral sampling, VSD generates realistic images with a common CFG weight of 7.5 and outperforms SDS significantly. The prompts from left to right are \textit{hamburger}, \textit{horse}, and \textit{a monster truck}, respectively. See details in Appendix~\ref{sec:2dvsd}.}
\label{fig:vsd}
\end{figure}

We now systematically compare VSD with SDS in both theory and practice. 

\textbf{SDS as a special case of VSD.}$\mbox{ }$
Theoretically, comparing the update rules of SDS (Eq.~\eqref{eq:sds}) and VSD (Eq.~\eqref{eq:vsd}), SDS is a special case of VSD by using a single-point Dirac distribution $\mu(\theta|y)\approx \delta(\theta-\theta^{(1)})$ as the variational distribution (see Appendix~\ref{sec:discuss_sds_and_vsd} for derivation). In particular, VSD not only employs potentially multiple particles but also learns a parametric score function $\vepsilon_\phi$ even for a single particle (i.e., $n=1$). Empirically, the learned neural network may potentially offer superior generalization ability over the Dirac distribution in SDS, thus it may provide more accurate updating directions in low-density regions. Moreover, by using LoRA, VSD can additionally exploit the text prompt $y$ in the estimation $\vepsilon_{\phi}(\x_t,t,c,y)$, while the Gaussian noise $\vepsilon$ used in SDS cannot leverage the information from $y$. Thus, VSD may provide samples which are more aligned with the prompt $y$.

\textbf{VSD is friendly to CFG.}$\mbox{ }$ As VSD aims to \emph{sample} $\theta$ from the optimal $\mu^*$ defined by the pretrained model $\vepsilon_{\text{pretrain}}$, the effects by tuning the CFG in $\vepsilon_{\text{pretrain}}$ for 3D sampling by VSD are quite similar to 2D sampling by the traditional ancestral sampling methods~\citep{ho2020denoising,lu2022dpm}. Therefore, VSD can tune CFG as flexibly as the classic text-to-image methods, and we use the same setting of CFG (e.g. $7.5$) as the common text-to-image generation task for the best performance. To the best of our knowledge, this for the first time addresses the problem in previous SDS~\citep{poole2022dreamfusion,lin2022magic3d,chen2023fantasia3d,metzer2022latent} that it usually requires a large CFG (i.e., $100$).

\textbf{VSD vs. SDS in 2D experiments that isolate 3D representations.}$\mbox{ }$ To directly compare SDS and VSD, we consider a special case of the rendering function $\vg(\theta)$ to decouple the optimization algorithm from 3D representations. In particular, we set $\vg(\theta,c)\equiv \theta$ for any $c$. Then the rendered image $\x=\vg(\theta,c)=\theta$ is the same 2D image as $\theta$. In such a case, optimizing the parameter $\theta$ is equivalent to generating an image in 2D space, thereby independent of the 3D representation. We show the results of different sampling methods in Fig.~\ref{fig:vsd}. SDS exhibits failure under both small and large CFG weights. Particularly with the default CFG weight (i.e., 100) used in SDS, the 2D samples share the same issues previously observed in text-to-3D such as over-saturation and over-smoothing~\citep{poole2022dreamfusion}. In contrast, VSD demonstrates flexibility in accommodating various CFG weights and produces realistic samples using a normal CFG weight (i.e., 7.5), behaving similarly to ancestral sampling from diffusion models. See more details and analysis in Appendix~\ref{sec:2dvsd}.

As other 3D factors are isolated in this comparison, these theoretical and empirical results suggest that the aforementioned practical issues of SDS~\citep{poole2022dreamfusion} stem from the oversimplified variational distribution and large CFG employed by SDS. Such results strongly motivate us to employ VSD for text-to-3D generation, where it still substantially and consistently outperforms SDS (see evidence in Sec~\ref{sec:experiments}).

\section{\model}
\label{sec:space}
\vspace{-.1cm}

\begin{table*}[t]
	\centering
 	\caption{Design space of text-to-3D via 2D diffusion. We highlight the contributions of this paper that improve the fidelity, diversity and ability to generate complex scenes by $^{*}$, $^{\dagger}$ and $^{\ddagger}$ respectively.\label{table:framework}}
  \vspace{.15cm}
         \resizebox{\textwidth}{!}{
	\begin{tabular}{lllll}
		\toprule
            Method & \textbf{DreamFusion~\citep{poole2022dreamfusion}} & \textbf{Magic3D~\citep{lin2022magic3d}} & \textbf{Fantasia3D~\citep{chen2023fantasia3d}} & \textbf{Ours}\\
        \midrule
        \textbf{NeRF Representation}&&&&\\        Resolution$^{*}$&64&64&-&512\\
        Backbone &mipNeRF360~\cite{barron2022mip}&Instant NGP~\cite{muller2022instant}&-&Instant NGP~\cite{muller2022instant}\\
        Initialization$^{\ddagger}$&Object&Object&-&Object / Scene initialization\\
        \midrule
        \textbf{NeRF Training}&&&&\\
        Base model & Imagen~\cite{saharia2022photorealistic}&eDiff-I~\cite{balaji2023ediffi} & -& Stable Diffusion~\citep{rombach2022high}\\
        Number of particles$^{\dagger}$ &1&1&-&1$\sim$4\\ 
        Distillation objective$^{*\dagger}$&SDS (Eq.~\eqref{eq:sds})&SDS (Eq.~\eqref{eq:sds})&-&VSD (Eq.~\eqref{eq:vsd})\\
        CFG$^{*}$ &100&100&-& 7.5\\
        Time schedule$^{*}$ &$\mathcal{U}(0.02,0.98)$&$\mathcal{U}(0.02,0.98)$&-& 
        $\mathcal{U}(0.02,0.98) \rightarrow \mathcal{U}(0.02,0.5)$ \\
        \midrule
        \textbf{Mesh Representation} &&&&\\
        Initialization & - & From NeRF & Handcrafted & From NeRF \\
        Texture and geometry & - &Entangled &Disentangled& Disentangled\\
        \midrule
        \textbf{Mesh Training} &&&&\\
        Distillation objective$^{*\dagger}$& - &SDS (Eq.~\eqref{eq:sds})&SDS (Eq.~\eqref{eq:sds})&VSD (Eq.~\eqref{eq:vsd})\\
        CFG$^{*}$  &-&100&100& 7.5\\
        \bottomrule
	\end{tabular}
    }
    \vspace{-.1in}
\end{table*}

We further present a clear design space for text-to-3D in Sec.~\ref{sec:sub_space} and 
systematically study other elements orthogonal to the distillation algorithm in Sec.~\ref{sec:scene_init}. 
Combining all improvements highlighted in Tab.~\ref{table:framework}, we arrive at~\emph{ProlificDreamer}, an advanced text-to-3D approach. 

\begin{figure}[thb]
 \centering	\includegraphics[width=.99\linewidth]{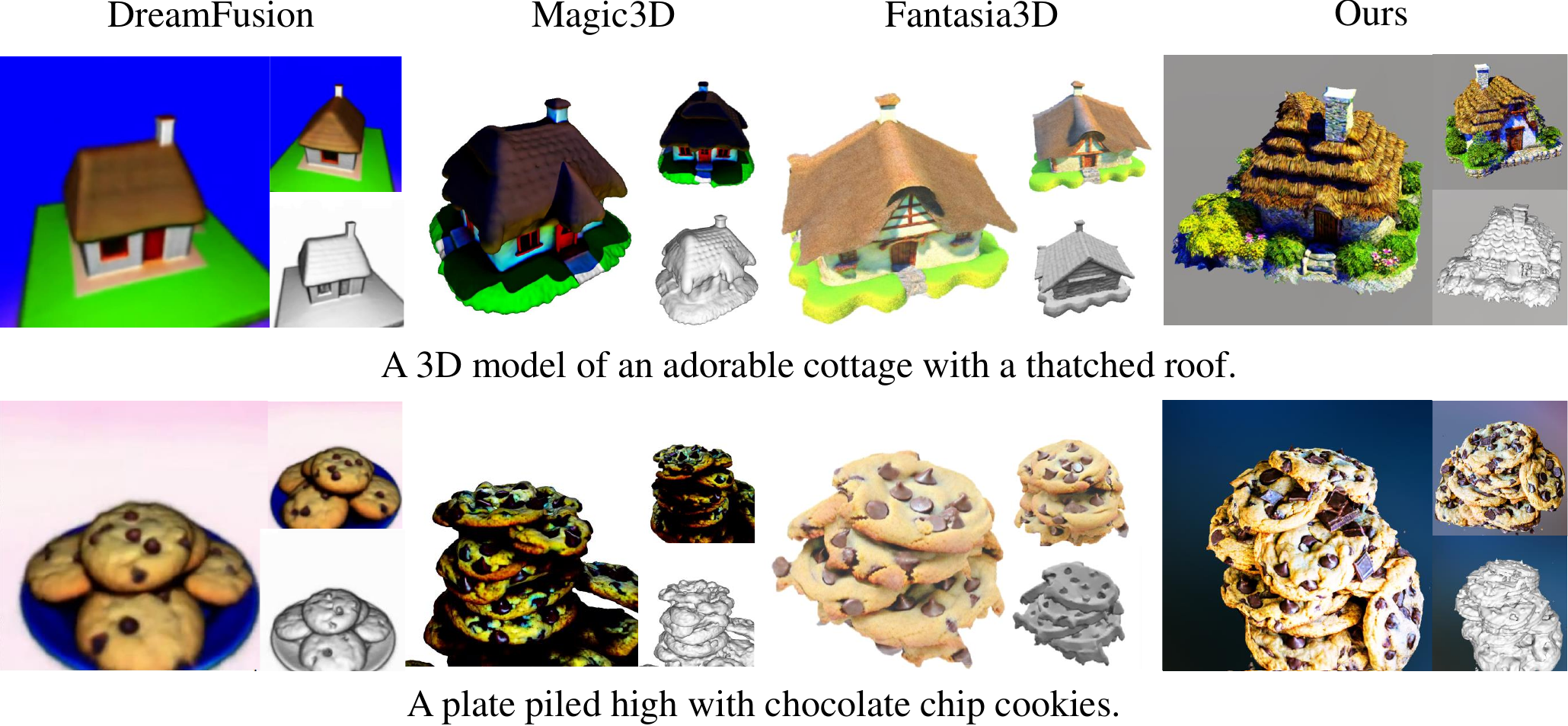}
 \vspace{-.05in}
    \caption{Comparison with baselines. Our results have higher fidelity and more details.}
    \label{fig:baselines}
\end{figure}

\subsection{Design Space of Text-to-3D Generation}
\label{sec:sub_space}

We adopt the two-stage approach~\citep{lin2022magic3d}, with several improvements in the design space of text-to-3D generation as summarized in Table~\ref{table:framework}. 
Specifically, in the first stage, we optimize a high-resolution (e.g., $512$) NeRF by VSD to utilize its high flexibility for generating scenes with complex geometry. In the second stage, we use DMTet~\citep{shen2021dmtet} to extract textured mesh from the NeRF obtained in the first stage, and further fine-tune the textured mesh for high-resolution details. The second stage is optional because both NeRF and mesh have their own advantages in representing 3D content and are preferred in certain cases. Nevertheless, ProlificDreamer can generate both high-fidelity NeRFs and meshes.

\subsection{3D Representation and Training}
\label{sec:scene_init}

We systematically study other elements orthogonal to the algorithmic formulation. Specifically, we propose a high rendering resolution of $512\times 512$ during training and an annealed distilling time schedule to improve the visual quality. We also carefully design a scene initialization, which is crucial for complex scene generation.

\textbf{High-resolution rendering for NeRF training.}$\mbox{ }$ We choose Instant NGP~\cite{muller2022instant} for efficient high-resolution rendering and optimize NeRF with up to 512 training resolution using VSD. By applying VSD, we obtain high-fidelity NeRFs with resolutions varying from $64$ to $512$.

\textbf{Scene initialization for NeRF training.}$\mbox{ }$ We initialize the density for NeRF as $\sigma_{\small \text{init}}(\vmu) = \lambda_{\sigma}\,(1 - \frac{||\vmu||_2}{r})$, where $\lambda_{\sigma}$ is the density strength, $r$ is the density radius, and $\vmu$ is the coordinate. For object-centric scenes, we follow the \textit{object-centric initialization} used in Magic3D~\cite{lin2022magic3d} with $\lambda_{\sigma} = 10$ and $r=0.5$; For complex scenes, we propose \textit{scene initialization} by setting $ \lambda_{\sigma} = -10$ to make the density ``hollow'' and $r = 2.5$ that encloses the camera. We show in Appendix~\ref{sec:ablation} that the scene initialization can help to generate high-fidelity complex scenes without other modifications to the existing algorithm.
In addition, we can further add a centric object to the complex scene by using object-centric initialization for $||\vmu||_2 < 5/6$ and scene initialization for others, where the hyperparameter $5/6$ ensures the initial density function is continuous.

\textbf{Annealed time schedule for score distillation.}$\mbox{ }$
We utilize a simple two-stage annealing of time step $t$ in the score distillation objective, suitable for both SDS (Eq.~\eqref{eq:sds}) and VSD (Eq.~\eqref{eq:vsd}). For the first several steps we sample time steps $t\sim \gU(0.02, 0.98)$ and then anneal into $t\sim \gU(0.02,0.50)$.
The key insight is that, essentially, we aim to match the original $q_0^{\mu}(\x_0|c,y)$ with $p_0(\x_0|y^c)$. The KL-divergence for larger $t$ can provide reasonable optimization direction during the early stage of training. During training, while $\vx^*$ is approaching the support of $p_0(\vx^*|y^c)$, a smaller $t$ can narrow the gap between $p_t(\vx^*|y^c)$ and $p_0(\vx^*|y^c)$, and provide elaborate details aligning with $p_0(\vx^*|y^c)$.

\textbf{Mesh representation and fine-tuning.}$\mbox{ }$ We adopt a coordinate-based hash grid encoder inherited from NeRF stage to represent the mesh texture.
We follow Fantasia3D~\cite{chen2023fantasia3d} to disentangle the optimization of geometry and texture by first optimizing the geometry using the normal map and then optimizing the texture. In our initial experiments, we find that optimizing geometry with VSD provides no more details than using SDS. This may be because the mesh resolution is not large enough to represent high-frequency details. Thus, we optimize geometry with SDS for efficiency. But unlike Fantasia3D~\cite{chen2023fantasia3d}, our texture optimization is supervised by VSD with CFG = 7.5 with the annealed time schedule, which can provide more details than SDS.

\section{Experiments}
\label{sec:experiments}

\begin{figure*}[t]
	\centering
	\begin{minipage}{0.24\linewidth}
		\centering
        \includegraphics[width=\linewidth]{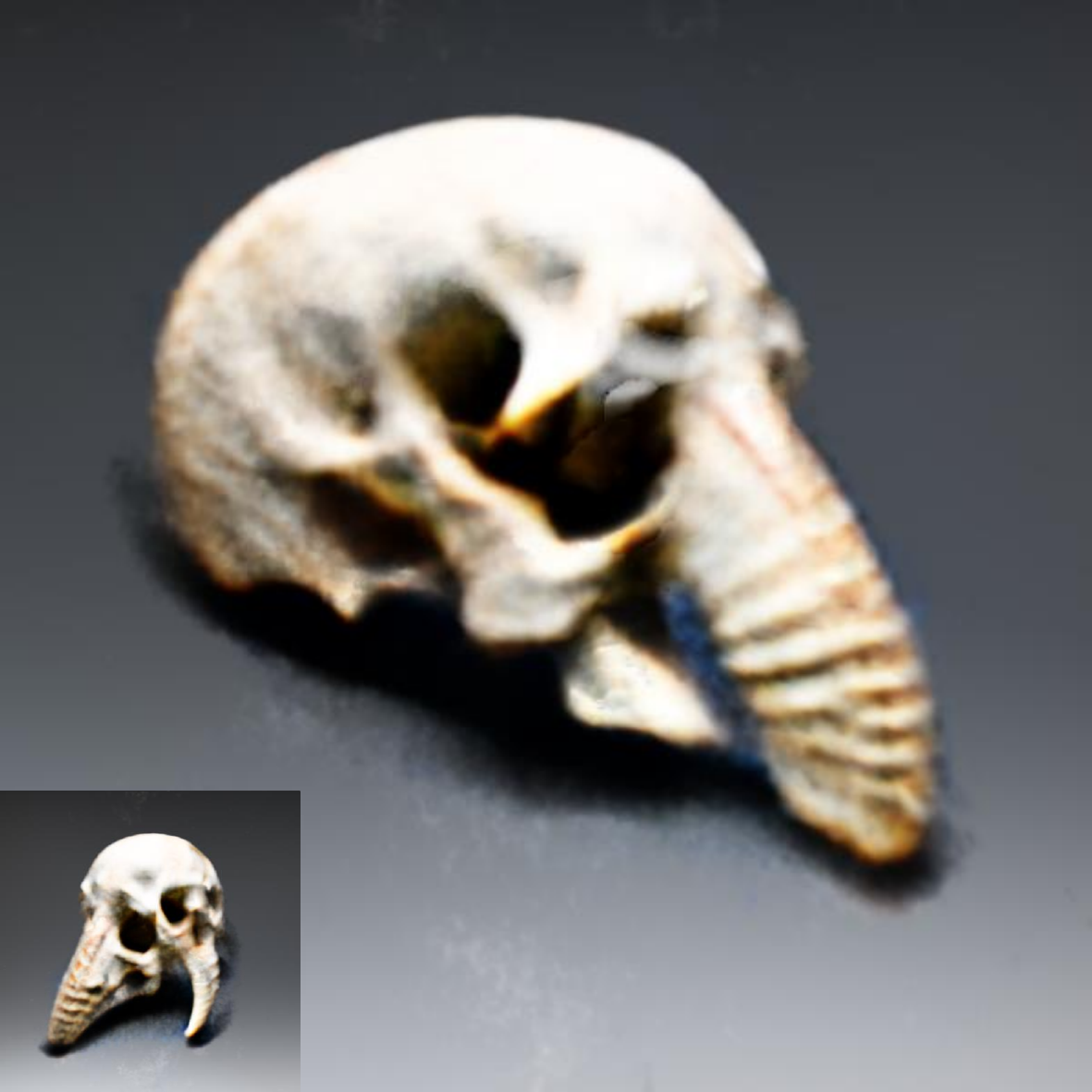}\\
        \small{(1) 64 rendering} \\
	\end{minipage}
	\begin{minipage}{0.24\linewidth}
		\centering
        \includegraphics[width=\linewidth]{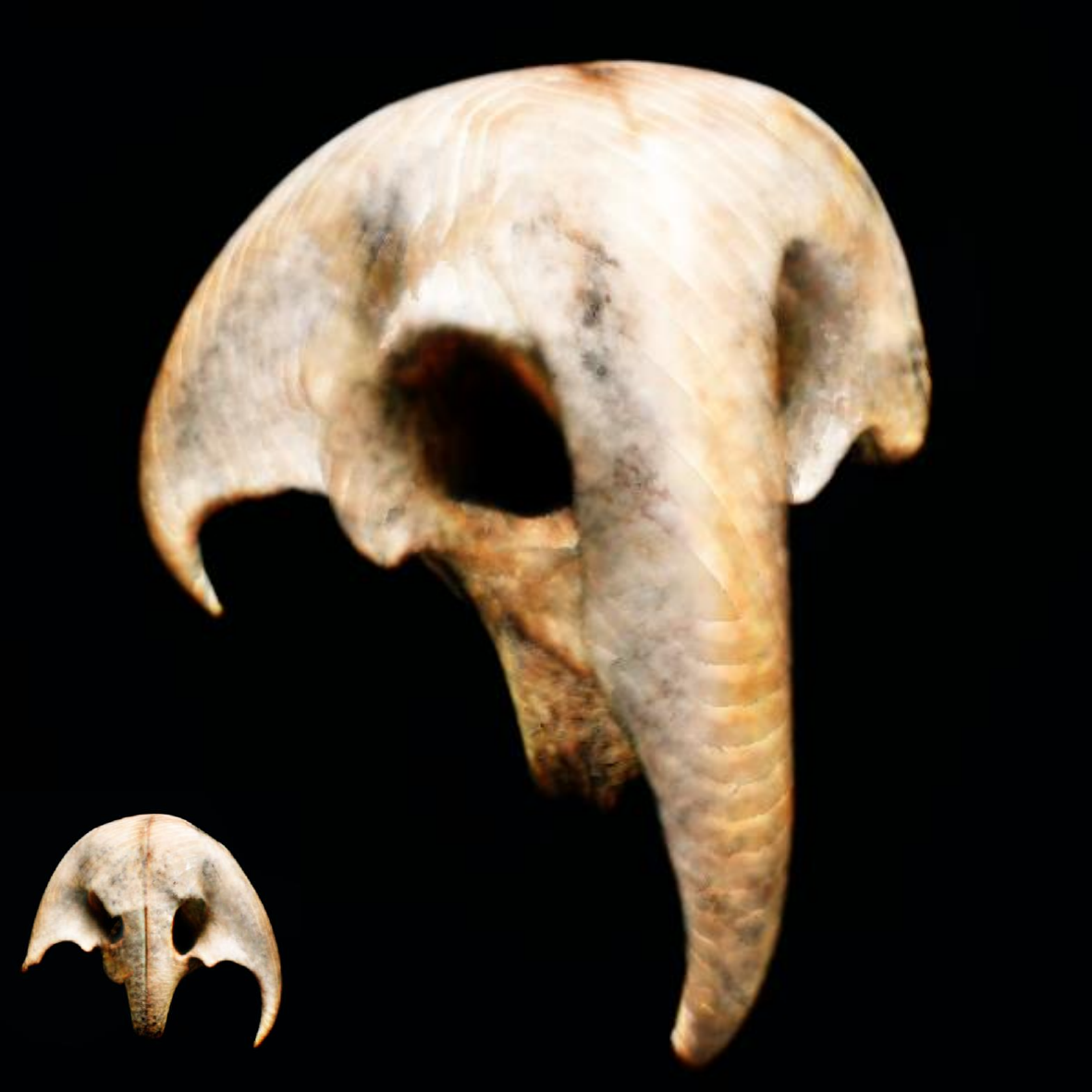}\\
        \small{(2) + 512 rendering} \\
	\end{minipage}
    \begin{minipage}{0.24\linewidth}
		\centering
        \includegraphics[width=\linewidth]{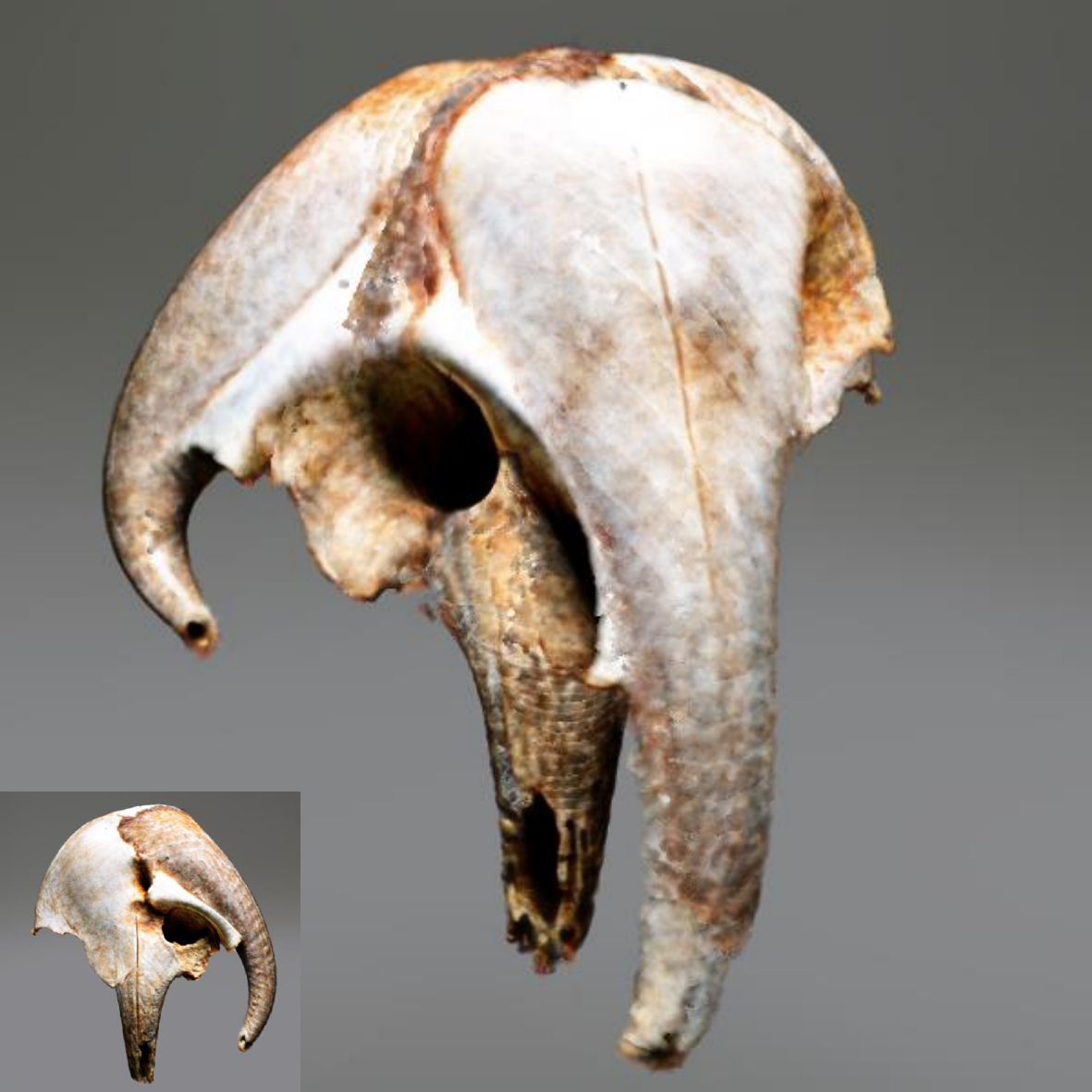}\\
        \small{(3) + Annealed $t$} \\
	\end{minipage}
    \begin{minipage}{0.24\linewidth}
		\centering
        \includegraphics[width=\linewidth]{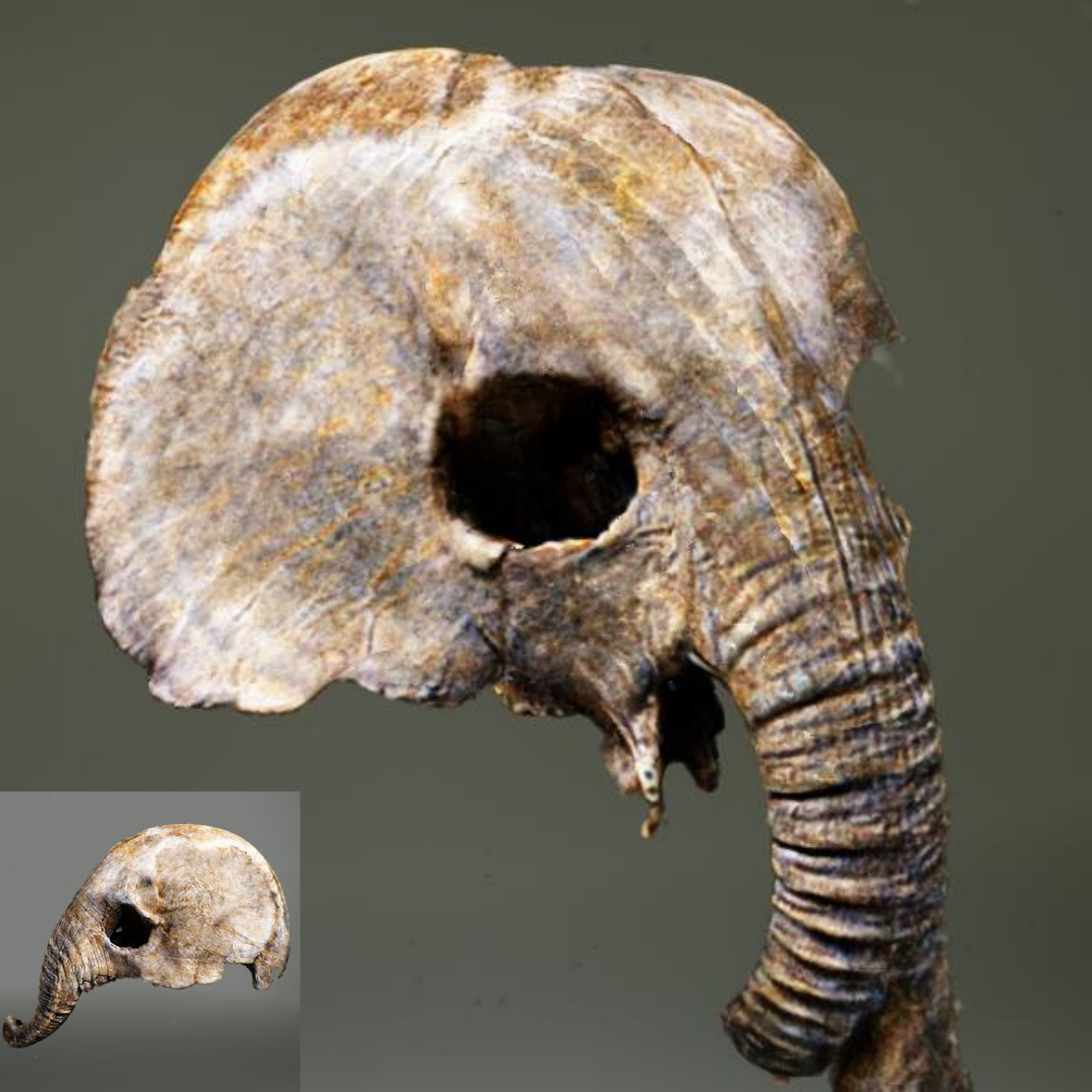}\\
        \small{(4) + VSD} \\
	\end{minipage}
    \caption{Ablation study of proposed improvements for high-fidelity NeRF generation. The prompt is \textit{an elephant skull}. (1) The common setting~\cite{poole2022dreamfusion,lin2022magic3d} adopts $64$ rendering resolution and SDS loss. (2) We improve the generated quality by increasing the rendering resolution. (3) Annealed time schedule adds more details to the generated result. (4) VSD makes the results even better with richer details.}
	\label{fig:ablation}
\end{figure*}

\subsection{Results of \model}

We show the generated results of \model ~in Fig.~\ref{fig:high-mesh} and~\ref{fig:high-nerf}, including high-fidelity mesh and NeRF results. All the results are generated by VSD. For all experiments without mentioned, VSD uses $n=1$ particle for a fair comparison with SDS. In Fig.~\ref{fig:diversity}, we also demonstrate VSD can generate diverse results, showing that different particles in a round are diverse (with $n=4$).

\textbf{Object-centric generation.}$\mbox{ }$ We compare our method with three SOTA baselines, DreamFusion~\cite{poole2022dreamfusion}, Magic3D~\cite{lin2022magic3d} and Fantasia3D~\cite{chen2023fantasia3d}. All of the baselines are based on SDS. Since none of them is open-sourced, we use the figures from their papers. As shown in Fig.~\ref{fig:baselines}, \model ~generates 3D objects with higher fidelity and more details, which demonstrates the effectiveness of our method.

In appendix, we add user study (Section~\ref{sec:user-study}) and quantitative results (Section~\ref{sec:quan}) of ProlificDreamer against the baselines to demonstrate the effectiveness of our method. 

\textbf{Large scene generation.}$\mbox{ }$ As shown in Fig.~\ref{fig:high-nerf}, our method can generate $360^{\circ}$ scenes with high-fidelity and fine details. The depth map shows that the scenes have geometry instead of being a $360^{\circ}$ textured sphere, verifying that with our scene initialization alone we can generate high-fidelity large scenes without much modification to existing components. See more results in Appendix~\ref{appendix:results_nerf}.

\subsection{Ablation Study}
\textbf{Ablation on NeRF training.}$\mbox{ }$ Fig.~\ref{fig:ablation} provides the ablation on NeRF training. Starting from the common setting~\cite{poole2022dreamfusion,lin2022magic3d} with 64 rendering resolution and SDS loss, we ablate our proposed improvements step by step, including increasing resolution, adding annealed time schedule, and adding VSD all improve the generated results. It demonstrates the effectiveness of our proposed components. We provide more ablation on large scene generation in Appendix~\ref{sec:ablation}, with a similar conclusion.

\textbf{Ablation on mesh fine-tuning.}$\mbox{ }$ We ablate between SDS and VSD on mesh fine-tuning, as shown in Appendix~\ref{sec:ablation}. Fine-tuning texture with VSD provides higher fidelity than SDS. As the fine-tuned results of textured mesh are highly dependent on the initial NeRF, getting a high-quality NeRF at the first stage is crucial. Note that the provided results of both VSD and SDS in mesh fine-tuning are based on and benefit from the high-fidelity NeRF results in the first stage by our VSD.

\textbf{Ablation on CFG.}$\mbox{ }$ We perform ablation to explore how CFG affects generation diversity. We find that smaller CFG encourages more diversity. Our VSD works well with small CFG and provides considerable diversity, while SDS cannot generate plausible results with small CFG (e.g., $7.5$), which limits its ability to generate diverse results. Results and more details are shown in Appendix~\ref{sec:cfg-ablate}.

\section{Related Works}
\paragraph{Diffusion models.} Score-based generative model~\cite{NEURIPS2019_3001ef25,song2021scorebased} and diffusion models~\citep{ho2020denoising} have shown great performance in image synthesis~\cite{ho2020denoising,dhariwal2021diffusion}. Recently, large-scale diffusion models have shown great performance in text-to-image synthesis~\cite{ramesh2022hierarchical,saharia2022photorealistic,rombach2022high}, which provides an opportunity to utilize it for zero-shot text-to-3D generation.

\paragraph{Text-to-3D generation.} DreamField~\cite{jain2022zeroshot} proposes a text-to-3D method using CLIP~\cite{radford2021learning} guidance. DreamFusion~\cite{poole2022dreamfusion} proposes a text-to-3D method using 2D diffusion models. Score Jacobian Chaining (SJC)~\cite{wang2022score} derives the training objective of text-to-3D using a 2D diffusion model from another theoretical basis. Magic3D~\cite{lin2022magic3d} extends text-to-3D to a higher resolution with mesh~\cite{shen2021dmtet} representation. Latent-NeRF~\cite{metzer2022latent} optimizes NeRF in latent space. Fantasia3D~\cite{chen2023fantasia3d} optimizes a mesh with DMTet~\cite{shen2021dmtet} from scratch. Although Fantasia3D achieves remarkable zero-shot text-to-3D generation, it requires user-provided shape guidance for the generation of complex geometry. \cite{hong2023debiasing} propose score debiasing and prompt debiasing to mitigate multiface problem and is orthogonal to our work. TextMesh~\cite{tsalicoglou2023textmesh} is contemporary with us and proposes a different pipeline for high-fidelity text-to-3D generation. 3DFuse~\cite{seo2023let} proposes to incorporate 3D awareness into 2D diffusion for better 3D consistency in text-to-3D generation. In addition, adjusting time schedule has also been discussed in previous works~\cite{graikos2022diffusion,wang2022score}. However, previous works either require carefully devising the schedule~\cite{graikos2022diffusion} or perform inferior to the simple random time schedule~\cite{wang2022score}. Instead, our 2-stage annealing schedule is easy to train and achieves better generation quality than the random time schedule.

\paragraph{Text-driven large scene generation.} Text2Room~\cite{hollein2023text2room} generates indoor rooms from a given prompt. However, it uses additional monocular depth estimation models as prior, and we do not use any additional models. Our method generates the wild text prompt and uses only a text-to-image diffusion model. Set-the-scene~\cite{cohen2023set} is a contemporary work with us aimed at large scene generation with a different pipeline. Overall, \model$\mbox{ }$uses an advanced optimization algorithm, i.e, VSD with our proposed two-stage annealed time schedule, which has a significant advantage over the previous SDS / SJC (see Appendix~\ref{sec:discuss_sds_and_vsd} for details). As a result, \model$\mbox{ }$achieves high-fidelity NeRF with rich structure and complex effects (e.g., smoke and drops) and photo-realistic mesh results.

\section{Conclusion}

In this work, we systematically study the problem of text-to-3D generation. In terms of the algorithmic formulation, we propose variational score distillation (VSD), a principled particle-based variational framework that treats the 3D parameter as a random variable and infers its distribution. VSD naturally generalizes SDS in the variational formulation and addresses the practical issues of SDS observed before. With other orthogonal improvements to 3D representations, our overall approach, \model, can generate high-fidelity NeRF and photo-realistic textured meshes.

\textbf{Limitations and broader impact.}$\mbox{ }$ Although \model$\mbox{ }$achieves remarkable text-to-3D results, the generation takes hours of time, which is much slower than image generation by a diffusion model. Although large scene generation can be achieved with our scene initialization, the camera poses during training are regardless of the scene structure, which may be improved by devising an adaptive camera pose range according to the scene structure for better-generated details. Moreover, due to the limited expressiveness of the base 2D model, the generation for complex prompts may fail, and the generated samples may have multi-face Janus problem~\citep{enwiki:1153346658} in some cases because of the poor text-image alignment for the view-dependent prompts. In addition, content creation by generative models may cause harmful social impacts on the labor market. Also, like other generative models, our method may be utilized to generate fake and malicious contents, which needs more caution.

\section*{Acknowledgement}

This work was supported by NSF of China Projects (Nos. 62061136001, 61620106010, 62076145,
U19B2034, U1811461, U19A2081, 6197222); Beijing Outstanding Young Scientist Program NO.
BJJWZYJH012019100020098; a grant from Tsinghua Institute for Guo Qiang; the High Performance Computing Center, Tsinghua University; and the Research Funds of Renmin University of China (22XNKJ13). J.Z was also supported by the XPlorer Prize. C. Li was also sponsored by Beijing Nova Program (No. 20220484044).

\bibliographystyle{plain}
\bibliography{example}

\clearpage
\appendix

\section{Additional Experiment Results of 3D Textured Meshes}

\begin{figure*}[!thb]
	\centering
	\includegraphics[width=.99\linewidth]{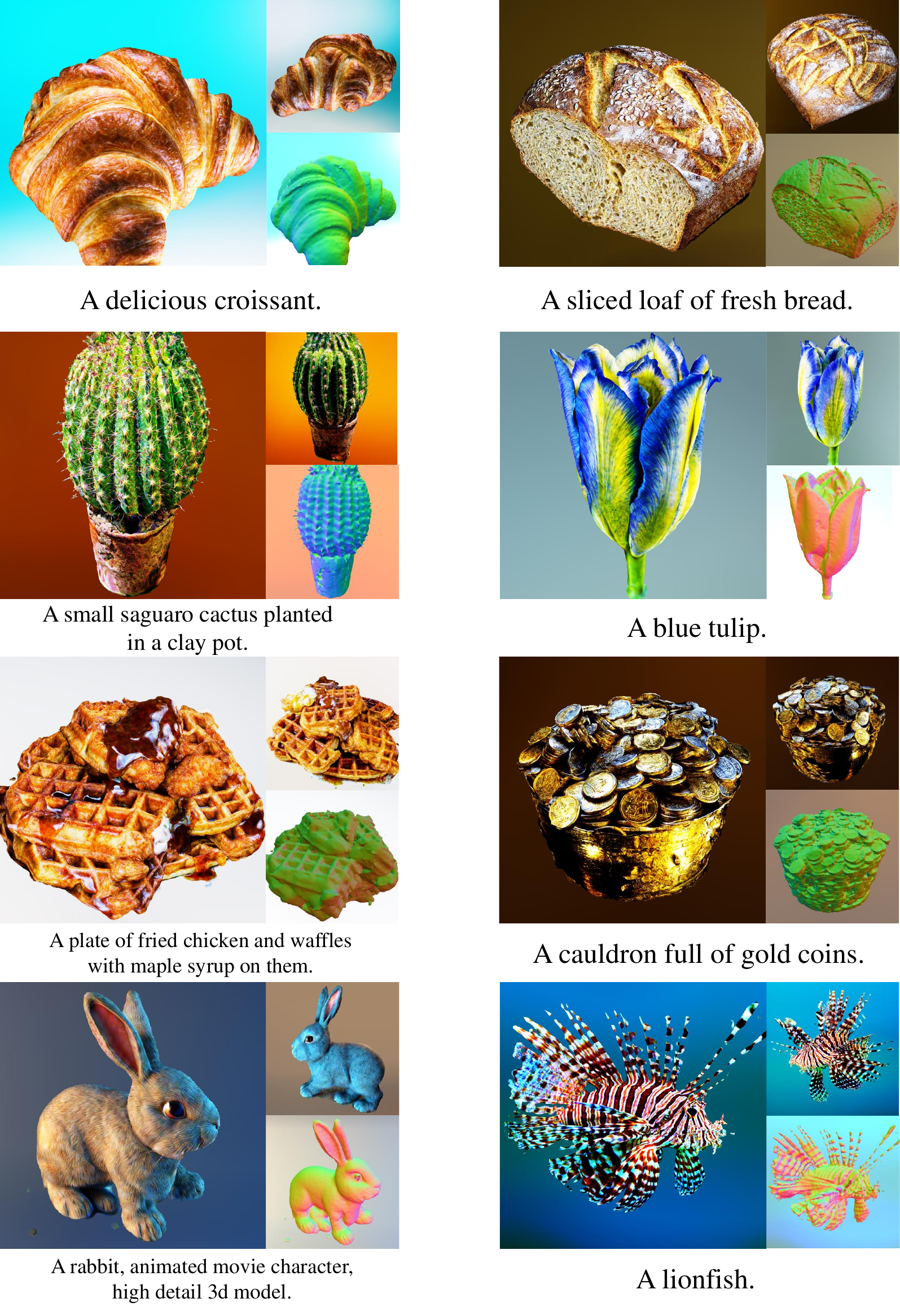}
	\caption{More results of ProlificDreamer of 3D textured meshes.}
	\label{fig:appendix-more}
\end{figure*}
\begin{figure*}[!thb]
	\centering
	\includegraphics[width=.99\linewidth]{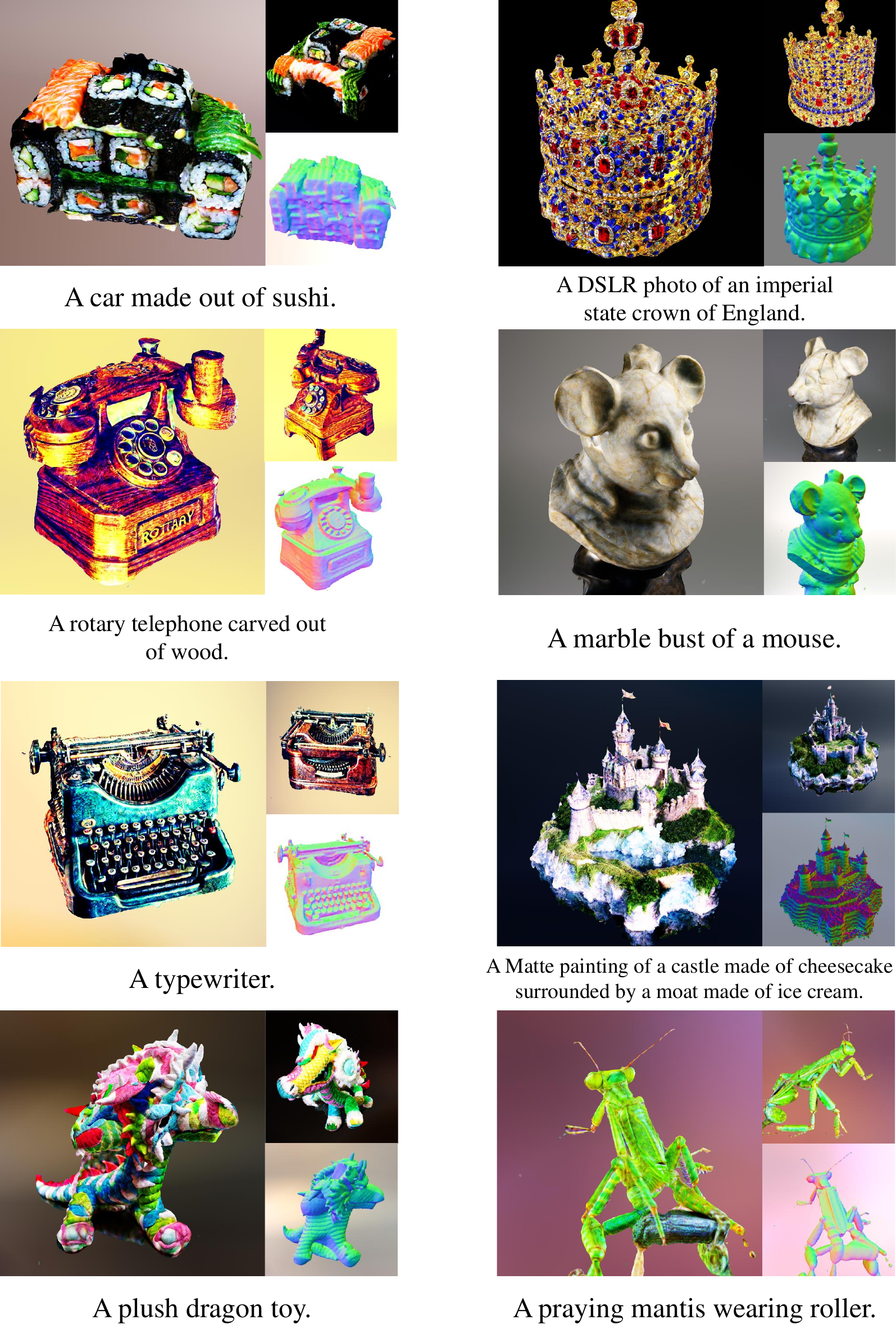}
	\caption{More results of ProlificDreamer of 3D textured meshes.}
	\label{fig:appendix-more2}
\end{figure*}

We provide more results of ProlificDreamer of 3D textured meshes in Fig.~\ref{fig:appendix-more} and Fig.~\ref{fig:appendix-more2}.

\section{Additional Experiment Results of 3D NeRF}
\label{appendix:results_nerf}

\begin{figure*}[htb]
	\centering
	\includegraphics[width=0.98\linewidth]{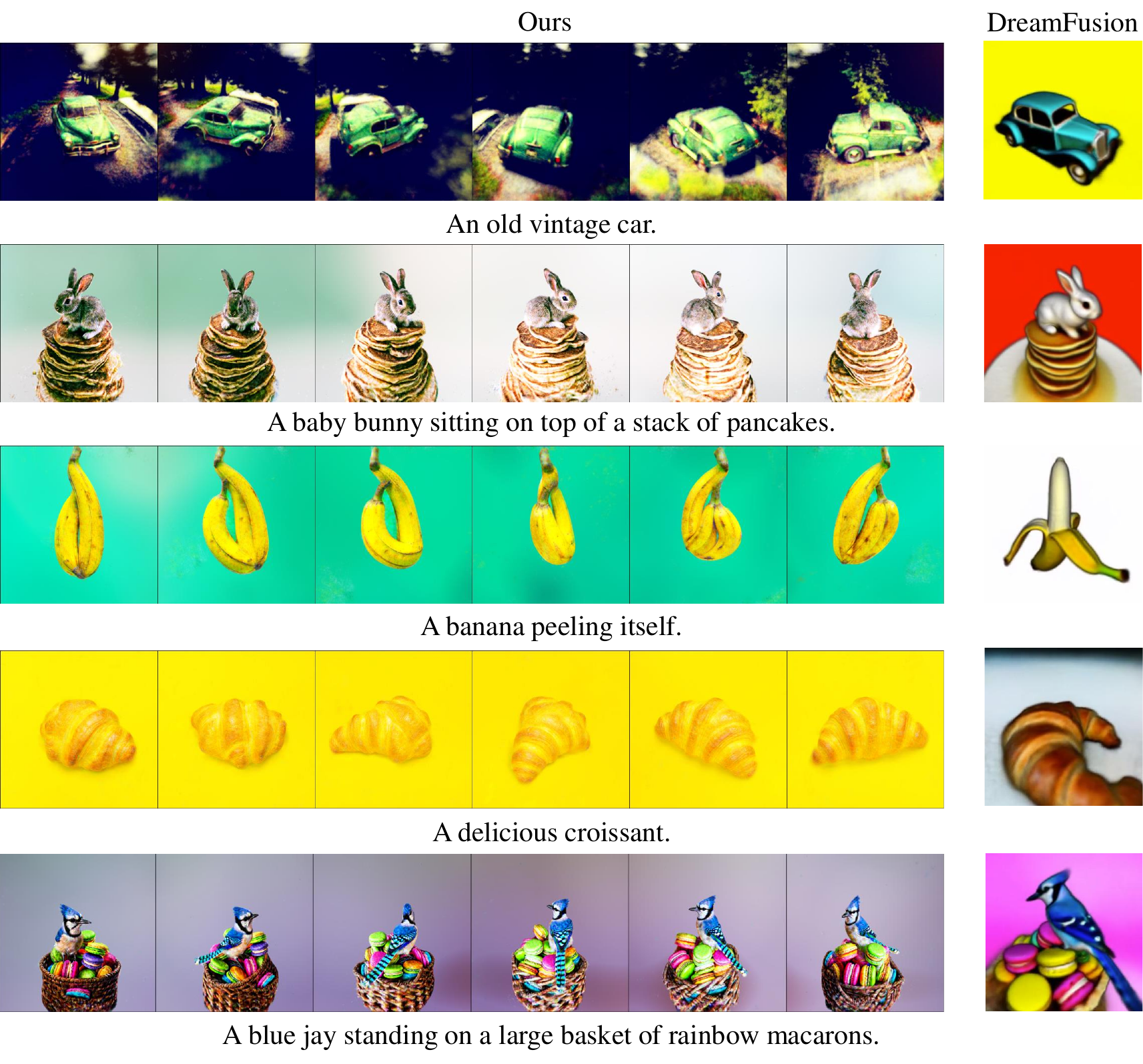}
	\caption{Results of high quality NeRF from \model. Compared with DreamFusion, our ProlificDreamer generates better NeRF results in terms of fidelity and details, which demonstrates the effectiveness of our VSD against SDS.}
	\label{fig:appendix-nerf}
\end{figure*}

\begin{figure*}[htb]
	\centering
	\includegraphics[width=0.98\linewidth]{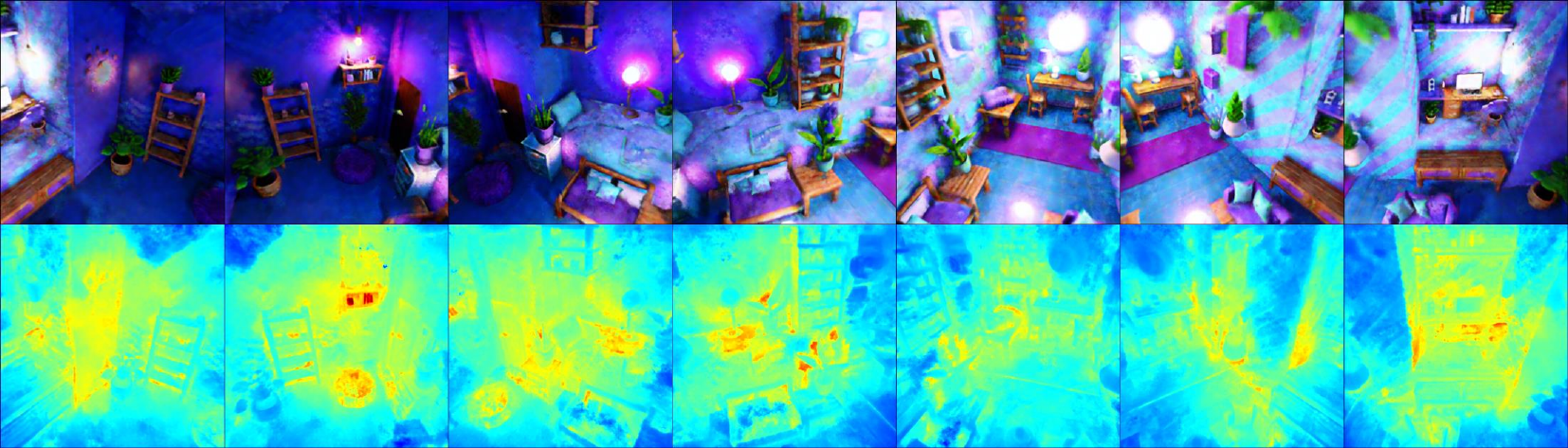}
	\caption{Result of high quality NeRF from \model, a large scene with $360^\circ$ environment using our scene initialization. The prompt here is \textit{Small lavender isometric room, soft lighting, unreal engine render, voxels.}}
	\label{fig:scene3}
\end{figure*}

\begin{figure*}[htb]
	\centering
	\includegraphics[width=0.65\linewidth]{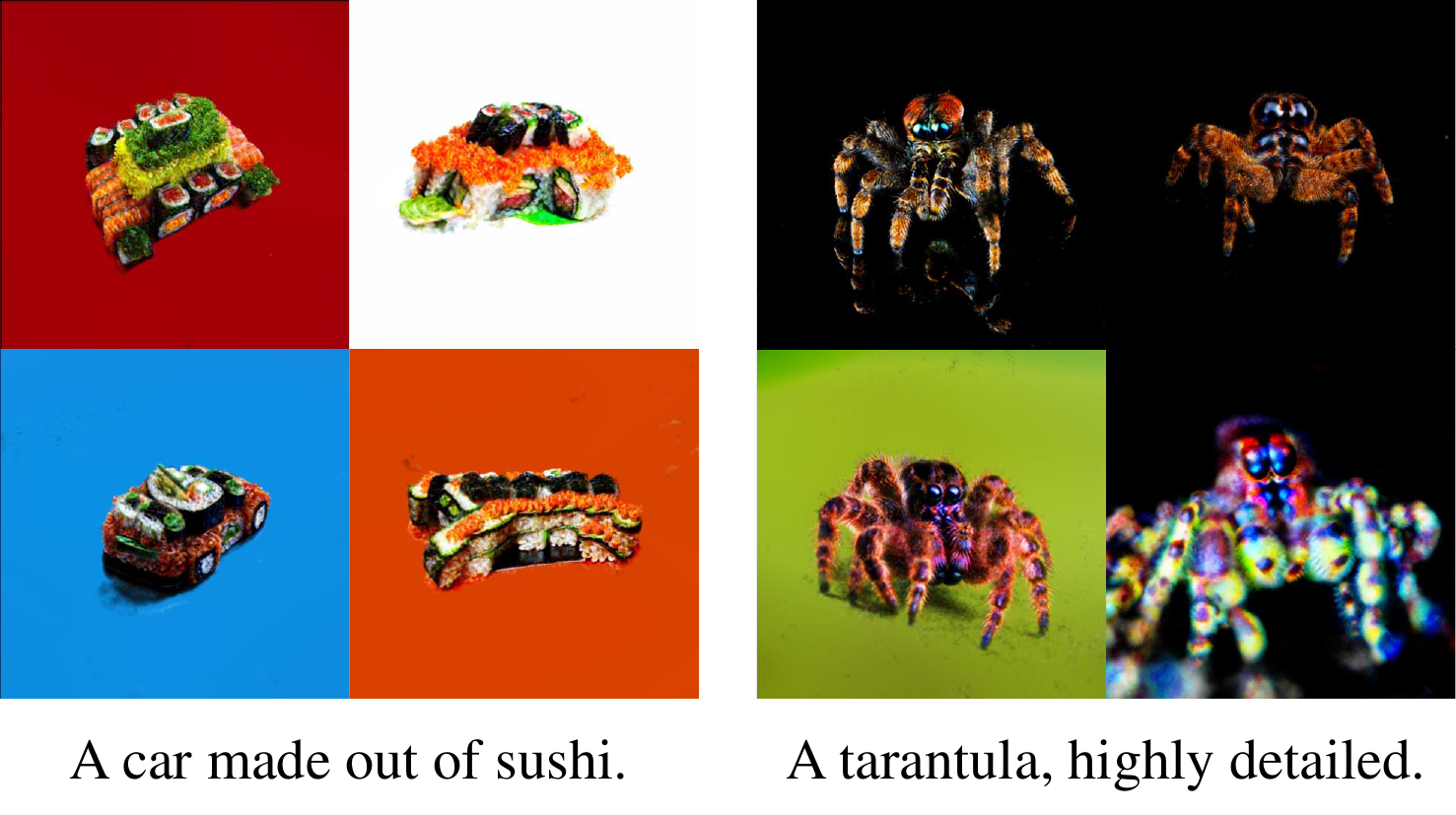}
	\caption{More results of multi-particle experiments in NeRF.}
	\label{fig:p4}
\end{figure*}

Here, we provide more generated results in Fig.~\ref{fig:appendix-nerf} of high quality NeRF. We compare with DreamFusion, as DreamFusion also uses NeRF representation. It can be seen from the figure that ProlificDreamer generates better NeRF results in terms of fidelity and details, which demonstrates the effectiveness of our VSD against SDS.

In Fig.~\ref{fig:scene3}, we provide more results of large scenes with $360^\circ$ environment using our scene initialization.

In Fig.~\ref{fig:p4}, we provide more results of multiple particle experiments.

\section{Theory of Variational Score Distillation}
\label{sec:theory_VSD}

\subsection{Particle-based Variational Inference}
\label{sec:bg-parVI}

Particle-based variational inference (ParVI)~\citep{liu2016stein,detommaso2018stein,chen2018unified,wang2019function,liua2022geometry,dong2022particle} aims to draw (particle) samples from a desired distribution by minimizing the KL-divergence between particle samples and the desired distribution, and is widely-used in Bayesian inference~\citep{liu2016stein,feng2017learning,liu2019understanding}. Specifically, denote $\gP(\Theta)$ as the set containing all the distributions on a support space $\Theta$ with Euclidean distance, and $\sW_2(\Theta)\coloneqq \{\mu\in\gP(\Theta): \exists \theta_0\in\Theta \text{ s.t. } \E_{\mu(\theta)}[\|\theta-\theta_0\|^2] < +\infty \}$ as the 2-Wasserstein space~\citep{villani2009optimal} equipped with the Wasserstein 2-distance~\citep{villani2009optimal}. Given a desired distribution $\mu^*\in\gP(\Theta)$, ParVI starts with some ($n$) particles $\{\theta_0^{(i)}\}_{i=1}^n$ sampled from an initial distribution $\mu_0\in \gP(\Theta)$ and update these particles with a vector field $\vv_\tau(\cdot)$ on $\Theta$ by $\dm\theta_\tau^{(i)} = \vv_\tau(\theta_\tau^{(i)})\dm \tau$ at each time $\tau\geq 0$, such that the distribution $\mu_\tau$ of $\{\theta_\tau^{(i)}\}_{i=0}^n$ converges to $\mu^*$ as $\tau\rightarrow \infty$ and $n\rightarrow \infty$. Typically, the measure $\mu_\tau$ follows the ``steepest descending curves'' of a functional on $\sW_2(\Theta)$, which is known as the \textit{Wasserstein gradient flow}. A common setting is to define the functional of $\mu_\tau$ as the KL-divergence $\kl{\mu_\tau}{\mu^*}$, and then $\mu_\tau$ follows $\partial_\tau \mu_\tau=-\nabla_{\theta}\cdot(\mu_\tau \nabla_{\theta}\log \frac{\mu^*}{\mu_\tau})$, and the corresponding vector field is $\vv_\tau(\theta_\tau)=\nabla_\theta \log \frac{\mu^*}{\mu_\tau}$. However, directly computing the vector field is intractable because it is non-trivial to compute the time-dependent score function $\nabla_\theta\log \mu_\tau(\theta)$. To tackle this issue, traditional ParVI methods either restrict the functional gradient within RKHS and leverage analytical kernels to approximate the vector field~\citep{liu2016stein,chen2018unified,liu2018accelerated}, or learn a neural network to estimate the vector field~\citep{di2021neural}; but all of these methods are hard to scale up to high-dimensional data such as images.

\subsection{VSD as Particle-based Variational Inference}
In this section, we formally propose the theoretical guarantee of VSD.

Denote $\sW_2(\Theta)$ as the 2-Wasserstein space~\citep{villani2009optimal} of the probability distributions defined on the parameter space $\Theta$. 
For any distribution $\mu\in\sW_2(\Theta)$ with a corresponding random variable $\theta \sim \mu(\theta|y)$, denote the implicit distribution for $\x_0\coloneqq \vg(\theta,c)$ as $q^{\mu}_0(\x_0|c,y)$, and let $q^{\mu}_t(\x_t|c,y)\coloneqq \int q^\mu_0(\x_0|c,y)q_{t0}(\x_t|\x_0)\dm\x_0$ be the marginal distribution of the random variable $\x_t$ during the diffusion process. We want to optimize the distribution $\mu(\theta|y)$ in the function space $\sW_2(\Theta)$ by distilling a pretrained diffusion model $p_t(\x_t|y^c)$ via minimizing the following functional of $\mu$:
\begin{equation}
\label{eq:raw_objective}
    \min_{\mu\in \sW_2(\Theta)} \gE[\mu] \coloneqq \E_{t,\vepsilon,c}\left[\frac{\sigma_t}{\alpha_t}\omega(t) \kl{q^{\mu}_t(\x_t|c,y)}{p_t(\x_t|y^c)} \right].
\end{equation}
Compared with the vanilla SDS in Eq.~\eqref{eq:SDS_kl} which optimizes the parameter $\theta$ in the parameter space $\Theta$, here we aim to optimize the distribution (measure) $\mu(\theta|y)$ in the function space $\sW_2(\Theta)$. Inspired by previous ParVI techniques~\citep{liu2016stein,chen2018unified,wang2019function}, we firstly derive the gradient flow minimizing $\gE[\mu]$ in $\sW_2(\Theta)$ and the update rule of each particle $\theta\sim \mu(\theta)$, as shown in the following theorem.

\begin{thm}[Wasserstein gradient flow of score distillation]
\label{thrm:gradient_flow}
Starting from an initial distribution $\mu_0(\theta|y)$, the gradient flow $\{\mu_\tau\}_{\tau\geq 0}$ minimizing $\gE[\mu_\tau]$ on $\sW_2(\Theta)$ at each time $\tau\geq 0$ satisfies
\begin{equation}
\frac{\partial \mu_\tau(\theta|y)}{\partial \tau} = -\nabla_{\theta}\cdot \left[
    \mu_\tau(\theta|y) \E_{t,\vepsilon,c}\left[
        \sigma_t\omega(t)
        \left( \nabla_{\x_t}\log p_t(\x_t|y^c) 
        \! - \! \nabla_{\x_t}\log q^{\mu_\tau}_t(\x_t|c,y)  \right)
        \frac{\partial \vg(\theta,c)}{\partial \theta}
    \right]
\right],
\end{equation}
and the corresponding update rule for each $\theta_\tau\sim\mu_\tau$ satisfies
\begin{equation}
\label{eq:particle_ode}
\frac{\dm\theta_\tau}{\dm\tau} = -\E_{t,\vepsilon,c}\left[
    \omega(t)
    \left( \underbrace{-\sigma_t\nabla_{\x_t}\log p_t(\x_t|y^c)}_{\approx \vepsilon_{\text{pretrain}}(\x_t,t,y^c)} - \underbrace{(-\sigma_t\nabla_{\x_t}\log q^{\mu_\tau}_t(\x_t|c,y))}_{\approx \vepsilon_{\phi}(\x_t,t,c,y)}  \right)
    \frac{\partial \vg(\theta_\tau,c)}{\partial \theta_\tau}
\right].
\end{equation}
\end{thm}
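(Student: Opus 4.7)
The plan is to recognize that Eq.~\eqref{eq:raw_objective} defines a functional $\gE[\mu]$ on the 2-Wasserstein space $\sW_2(\Theta)$, whose Wasserstein gradient flow is prescribed by the standard first-variation recipe: if $\tfrac{\delta \gE}{\delta \mu}$ denotes the first variation of $\gE$ at $\mu$, the flow satisfies the continuity equation $\partial_\tau \mu_\tau = \nabla_\theta\cdot\bigl(\mu_\tau \,\nabla_\theta \tfrac{\delta \gE}{\delta \mu}[\mu_\tau]\bigr)$, so each particle evolves by $\tfrac{\dm\theta_\tau}{\dm\tau} = -\nabla_\theta \tfrac{\delta \gE}{\delta \mu}[\mu_\tau](\theta_\tau)$. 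The whole proof then reduces to computing $\nabla_\theta \tfrac{\delta \gE}{\delta \mu}$ and matching it against the right-hand side of Eq.~\eqref{eq:particle_ode}.

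First, I would compute the first variation. Since rendering is deterministic, $q_t^\mu(\x_t|c,y)=\int \mu(\theta|y)\,p_{t0}(\x_t|\vg(\theta,c))\dm\theta$. Perturbing $\mu$ by an admissible direction $\nu$ (with $\int \nu \dm\theta=0$) and using the elementary identity $\tfrac{\dm}{\dm\epsilon}\kl{q+\epsilon r}{p}\big|_{\epsilon=0}=\int r(\x_t)\log(q/p)\dm\x_t$, Fubini gives $\tfrac{\delta \gE}{\delta \mu}(\theta)=\E_{t,c}\bigl[\tfrac{\sigma_t}{\alpha_t}\omega(t)\int p_{t0}(\x_t|\vg(\theta,c))\log\tfrac{q_t^\mu(\x_t|c,y)}{p_t(\x_t|y)}\dm\x_t\bigr]$.

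Second, I would take $\nabla_\theta$ using the chain rule through $\vg$, which turns the derivative into $\tfrac{\partial \vg(\theta,c)}{\partial\theta}^\top \nabla_{\x_0} p_{t0}(\x_t|\x_0)$ evaluated at $\x_0=\vg(\theta,c)$. The crucial Gaussian identity $\nabla_{\x_0} p_{t0}(\x_t|\x_0)=-\alpha_t\,\nabla_{\x_t} p_{t0}(\x_t|\x_0)$ lets me integrate by parts in $\x_t$ and transfer the derivative onto $\log(q_t^\mu/p_t)$, producing $\alpha_t\,\tfrac{\partial\vg}{\partial\theta}^\top\!\int p_{t0}(\x_t|\vg(\theta,c))\bigl(\nabla_{\x_t}\log q_t^\mu-\nabla_{\x_t}\log p_t\bigr)\dm\x_t$. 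Rewriting this integral as an expectation $\E_{\vepsilon\sim\gN(\vzero,\mI)}$ with $\x_t=\alpha_t\vg(\theta,c)+\sigma_t\vepsilon$, the factor $\alpha_t$ cancels the $1/\alpha_t$ in the weight, leaving $\sigma_t\omega(t)$ as in the statement.

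Finally, I would assemble the pieces: substituting the closed-form $\nabla_\theta \tfrac{\delta \gE}{\delta \mu}$ back into the continuity equation produces the PDE for $\mu_\tau$, and into the characteristic ODE produces the particle update rule in Eq.~\eqref{eq:particle_ode}; the approximate identifications $-\sigma_t\nabla_{\x_t}\log p_t\approx \vepsilon_{\text{pretrain}}$ and $-\sigma_t\nabla_{\x_t}\log q_t^{\mu_\tau}\approx\vepsilon_\phi$ follow from the standard score-matching relation used to derive Eq.~\eqref{eq:diffusion_loss}, with $\vepsilon_\phi$ trained by Eq.~\eqref{eq:unet-training}. The main technical point will be justifying the integration by parts and the interchange of $\nabla_\theta$ with integrals, but both follow from the rapid Gaussian decay of $p_{t0}$ together with mild growth assumptions on $\log p_t$ and $\log q_t^\mu$; existence and uniqueness of the flow itself for KL-type functionals on $\sW_2$ are covered by standard Ambrosio--Gigli--Savar\'e theory and are assumed throughout as in the ParVI literature cited.
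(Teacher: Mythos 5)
Your proposal is correct and follows essentially the same route as the paper's proof: Wasserstein gradient flow as a continuity equation driven by $-\nabla_\theta\frac{\delta\gE}{\delta\mu}$, the first variation computed by the functional chain rule through $q_t^{\mu}$ (your expression matches the paper's Lemma~\ref{lem:first_variation_kl_qt} up to the harmless additive constant), and the particle ODE read off from the characteristics. The only difference is cosmetic: where you differentiate the Gaussian kernel in $\x_0$ and integrate by parts to move the derivative onto $\log(q_t^{\mu}/p_t)$, the paper reparameterizes $\x_t=\alpha_t\vg(\theta,c)+\sigma_t\vepsilon$ and applies the chain rule $\partial\x_t/\partial\theta=\alpha_t\,\partial\vg/\partial\theta$ directly, yielding the same cancellation of $\alpha_t$.
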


Theorem~\ref{thrm:gradient_flow} shows that by letting the random variable $\theta_\tau\sim \mu_\tau(\theta_\tau|y)$ move across the ODE trajectory in Eq.~\eqref{eq:particle_ode}, its underlying distribution $\mu_\tau$ will move by the direction of the steepest
descent that minimizes $\gE[\mu]$. Therefore, to obtain samples (in $\Theta$) from $\mu^* = \argmin_{\mu}\gE[\mu]$, we can simulate the ODE in Eq.~\eqref{eq:particle_ode} by estimating two score functions $\nabla_{\x_t}\log p_t(\x_t|y^c)$ and $\nabla_{\x_t}\log q_t^{\mu_\tau}(\x_t|c,y)$ at each ODE time $\tau$, which corresponds to the VSD objective in Eq.~\eqref{eq:vsd}.

\subsection{Discussions of SDS / SJC and VSD}
\label{sec:discuss_sds_and_vsd}

There are three main differences between our proposed VSD and previous SDS~\citep{poole2022dreamfusion} / SJC~\citep{wang2022score}: \textbf{(1)} VSD can optimize multiple ($n \geq 1$) 3D objects, while SDS / SJC only optimizes a single ($n=1$) 3D object. \textbf{(2)} VSD can generate high-fidelity results with the common (e.g., $7.5$) CFG (for both $n=1$ and $n>1$), while SDS / SJC needs extremely large (e.g. $100$) CFG. \textbf{(3)} SDS / SJC is a special case of VSD by using a single-point Dirac distribution $\mu(\theta|y)\approx \delta(\theta-\theta^{(1)})$ as the variational distribution.

\textbf{SDS / SJC as a special case of VSD.}$\mbox{ }$
Here we explain SDS / SJC under our variational inference framework. If we directly approximate $\mu(\theta|y) \approx \delta(\theta - \theta^{(1)})$ by its empirical distribution with the single sample $\theta^{(1)}$, then we have $q_t^{\mu}(\x_t|c,y)\approx \gN(\x_t | \alpha_t\vg(\theta^{(1)},c), \sigma_t^2\mI)$ and thus $-\sigma_t\nabla_{\x_t}\log q_t^{\mu}(\x_t|c,y)\approx (\x_t - \alpha_t\vg(\theta^{(1)},c))/\sigma_t = \vepsilon$, which recovers the vanilla SDS / SJC. Therefore, SDS / SJC is a special case of VSD for the underlying distribution $\mu(\theta|y)$ by the empirical distribution with a single point. Such an approximation has no generalization ability for $\theta\neq \theta^{(1)}$, and thus the updating direction by the score function $-\sigma_t\nabla_{\x_t}\log q_t^{\mu}(\x_t|c,y)$ (or equivalently, the Gaussian noise $\vepsilon$) may be rather bad at low-density regions, resulting poor samples for the final $\theta^{(1)}$. Instead, VSD regards $\theta^{(1)}$ as samples from the underlying distribution $\mu(\theta|y)$ and trains a neural network $\vepsilon_{\phi}$ to approximate the corresponding score functions, thus can leverage the generalization ability of neural networks for better approximating the underlying distribution $\mu(\theta|y)$. Moreover, by using LoRA, VSD can additionally exploit the text prompt $y$ in the estimation $\vepsilon_{\phi}(\x_t,t,c,y)$, while the Gaussian noise $\vepsilon$ used in SDS cannot leverage the information from $y$. 

\textbf{SDS / SJC as mode-seeking, VSD as sampling.}$\mbox{ }$
VSD aims to sample $\theta$ from the optimal $\mu^*$, while SDS / SJC aims to find the optimal $\theta^*$ minimizing the objective in Eq.~\eqref{eq:SDS_kl}. As the global optimum $\theta^*$ in SDS / SJC is the mode of Eq.~\eqref{eq:SDS_kl}, SDS / SJC is also known as performing mode-seeking~\citep{poole2022dreamfusion}. To demonstrate the differences between sampling and mode-seeking, we consider a special case of the rendering function $\vg(\theta)$ to decouple the optimization algorithm and the 3D representation. In particular, we set $\vg(\theta,c)\equiv \theta$ for any $c$ and $\theta\in\R^d$, then the rendered image $\x=\vg(\theta,c)=\theta$ is the same 2D image as $\theta$. In such a case, we have $q_0^{\mu}=\mu$, and it is easy to prove that $\mu^*=p_0$ (according to Theorem~\ref{thrm:marginal_matching}). For VSD, sampling $\theta$ from $\mu^*$ is corresponding to the traditional ancestral sampling~\citep{lu2022dpm} from $p_0$; while for SDS / SJC, we have $\nabla_{\theta}\gL_{\text{SDS}}(\theta)=\E_{t,\vepsilon}[\omega(t)\vepsilon_\text{pretrain}(\x_t,t,y^c)]\approx \E_{t,\vepsilon}[-\sigma_t\omega(t)\nabla_{\x_t}\log p_t(\x_t|y^c)]$, and thus the optimal $\theta^*$ is the mode of the ``averaged likelihood'' of $p_t$ for all $t$. However, it is common that the mode of deep generative models may have poor sample quality~\citep{nalisnick2018deep}. We show in Fig.~\ref{fig:vsd} that under the same CFG ($7.5$), both VSD and ancestral sampling can generate good samples but the sample quality of SDS is quite poor.

\textbf{SDS / SJC requires large CFG, while VSD is friendly to CFG.}$\mbox{ }$ As VSD aims to sample $\theta$ from the optimal $\mu^*$ defined by the pretrained model $\vepsilon_{\text{pretrain}}$, the effects by tuning the CFG in $\vepsilon_{\text{pretrain}}$ for 3D samples $\theta$ by VSD are quite similar to the effects for the 2D samples by the traditional ancestral sampling~\citep{ho2020denoising,lu2022dpm}. Therefore, VSD can tune CFG as flexibly as the classic text-to-image methods, and we use the same setting of CFG (e.g. $7.5$) as the common text-to-image generation task for the best performance. To the best of our knowledge, this for the first time addresses the problem in previous SDS~\citep{poole2022dreamfusion,lin2022magic3d,chen2023fantasia3d,metzer2022latent} that it usually requires a large CFG (i.e., $100$). Specifically, SDS (Eq.~\eqref{eq:sds}) uses $(\vepsilon_{\text{pretrain}}(\vx_t,t,y^c) - \vepsilon)$ while VSD (Eq.~\eqref{eq:vsd})) uses $(\vepsilon_{\text{pretrain}}(\vx_t,t,y^c) - \vepsilon_{\phi}(\vx_t,t,c,y))$. For example, for the 2D special case of $\vg(\theta,c)\equiv \theta$, we have $\nabla_{\theta}\gL_{\text{SDS}}(\theta) = \E_{t,\vepsilon}[\omega(t)\vepsilon_{\text{pretrain}}(\vx_t,t,y^c)]$ and $\nabla_{\theta}\gL_{\text{VSD}}(\theta) = \E_{t,\vepsilon}[\omega(t)(\vepsilon_{\text{pretrain}}(\vx_t,t,y^c) - \vepsilon_{\phi}(\vx_t,t,c,y))]$. Intuitively, to obtain highly detailed samples, the updating direction for $\theta$ needs to be ``fine'' and ``sharp''. As SDS only depends on $\vepsilon_{\text{pretrain}}$, it needs a large CFG ($=100$) to make sure $\vepsilon_{\text{pretrain}}$ to be ``sharp'' enough; however, large CFG, in turn, reduces the diversity of the results and also hurts the quality.
Instead, VSD leverages an additional score function $\vepsilon_{\phi}(\vx_t,t,c,y)$ to give a more elaborate direction than the zero-mean Gaussian noise, and the updating direction can be rather ``fine'' and ``sharp'' due to the difference between $\vepsilon_{\text{pretrain}}$ and $\vepsilon_\phi$. We empirically find that VSD can obtain much better quality than SDS in both 2D and 3D generation.

\subsection{Proof of Main Theorem}
\label{sec:proof}

\subsubsection{Proof of Theorem~\ref{thrm:marginal_matching}}
\begin{proof}[Proof of Theorem~\ref{thrm:marginal_matching}]
Denote $\x_0^{q}$ as the random variable following $q_0^{\mu}(\x_0|c,y)$ and $\x_t^{q}$ as the random variable following $q_t^{\mu}(\x_t|c,y)$. By the definition of $q_t^{\mu}$, we have
\begin{align}
    \x_t^{q} = \alpha_t\x_0^{q} + \sigma_t\vepsilon,
\end{align}
where $\vepsilon\sim\gN(\bm{0},\mI)$. Therefore, the characteristic functions of $q_t^{\mu}$ and $q_0^{\mu}$ satisfy
\begin{equation}
\varphi_{q_t^{\mu}}(s) = \varphi_{q_0^{\mu}}(\alpha_t s) \cdot \varphi_{\gN(\bm{0},\mI)}(\sigma_t s) = e^{-\frac{\sigma_t^2 s^2}{2}}\varphi_{q_0^{\mu}}(\alpha_t s).
\end{equation}
Similarly, the characteristic functions of $p_t$ and $p_0$ satisfy
\begin{equation}
\varphi_{p_t}(s) = e^{-\frac{\sigma_t^2 s^2}{2}}\varphi_{p_0}(\alpha_t s).
\end{equation}
Therefore, we have
\begin{equation}
    \kl{q_t^\mu(\x_t|c,y)}{p_t(\x_t|y^c)} = 0 \Leftrightarrow q_t^\mu = p_t \Leftrightarrow \varphi_{q_t^\mu} = \varphi_{p_t} \Leftrightarrow \varphi_{q_0^\mu} = \varphi_{p_0} \Leftrightarrow q_0^{\mu} = p_0
\end{equation}
\end{proof}

\subsubsection{Proof of Theorem~\ref{thrm:gradient_flow}}
As the gradient flow minimizing $\gE[\mu]$ on $\sW_2(\Theta)$ satisfies
\begin{equation}
    \frac{\partial \mu_\tau}{\partial \tau} = -\nabla_{\sW_2}\gE[\mu] = \nabla_{\theta}\cdot \left( \mu_\tau \nabla_{\theta}\frac{\delta \gE[\mu_\tau]}{\delta \mu_\tau} \right),
\end{equation}
thus we only need to compute the first variation $\frac{\delta \gE[\mu]}{\delta \mu}$. We propose the following lemmas for computing $\frac{\delta \gE[\mu]}{\delta \mu}$.

\begin{lem}
For $p, q \in \sW_2(\R^d)$, for any $\x\in\R^d$,
\label{lem:first_variation_kl}
\begin{equation}
    \left(\frac{\delta \kl{q}{p}}{\delta q}\right)[\x] = \log q(\x) - \log p(\x) + 1
\end{equation}
\end{lem}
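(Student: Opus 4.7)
The plan is to apply the definition of the first variation (functional derivative) directly. Recall that for a functional $F[q]$, the first variation $\frac{\delta F}{\delta q}$ is characterized by
\[
\left.\frac{d}{d\epsilon}\right|_{\epsilon=0} F[q + \epsilon \eta] = \int \frac{\delta F}{\delta q}[\x]\, \eta(\x)\, \dm\x
\]
for admissible perturbations $\eta$. So the strategy is to write $\kl{q}{p}$ as an explicit integral, carry out the Gateaux derivative in $q$, and identify the integrand.

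First I would write out
\[
\kl{q}{p} = \int q(\x) \log q(\x)\, \dm\x - \int q(\x)\log p(\x)\, \dm\x,
\]
and perturb by $q_\epsilon \coloneqq q + \epsilon \eta$ (the usual smoothness and integrability conditions on $q$, $p$, and $\eta$ justify swapping differentiation and integration). Differentiating each piece at $\epsilon = 0$, the second term contributes $-\int \eta(\x)\log p(\x)\,\dm \x$ (linear in $q$), while the first term, by the product rule on $q\log q$, contributes
\[
\int \eta(\x)\log q(\x)\, \dm\x + \int q(\x)\cdot \frac{\eta(\x)}{q(\x)}\, \dm\x = \int \eta(\x)\log q(\x)\, \dm\x + \int \eta(\x)\, \dm\x.
\]
Collecting the three terms yields $\int \eta(\x)\big(\log q(\x) - \log p(\x) + 1\big)\, \dm\x$, and reading off the kernel against $\eta$ gives the stated identity.

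There is essentially no obstacle: the computation is a one-line Gateaux derivative once the KL is written as an integral. The only subtlety worth remarking on is the ``$+1$'' constant term: under the convention used later in the paper (Theorem~\ref{thrm:gradient_flow}), the first variation will appear under a gradient $\nabla_\theta$, so this constant drops out and does not affect the Wasserstein gradient flow; equivalently, if one restricts to mass-preserving perturbations $\int \eta\,\dm\x = 0$, the $+1$ is invisible. Either way, the identity as stated is precisely what one obtains from the unrestricted Gateaux derivative, so no further argument is needed.
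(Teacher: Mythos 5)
Your computation is correct: writing $\kl{q}{p}=\int q\log q - \int q\log p$ and taking the Gateaux derivative along $q+\epsilon\eta$ gives exactly the kernel $\log q(\x)-\log p(\x)+1$, with the $+1$ arising from the product rule on $q\log q$. The paper does not actually carry out this derivation --- it simply cites the identity as a classic result from the particle-based variational inference literature --- so your proof supplies the standard argument behind that citation; your remark that the additive constant is immaterial (it disappears under $\nabla_\theta$ in Theorem~\ref{thrm:gradient_flow}, or equivalently under mass-preserving perturbations with $\int\eta\,\dm\x=0$) is also exactly the right observation about why the convention for the constant does not matter downstream.
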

\begin{proof}
This is a classic conclusion in particle-based variational inference (e.g., see Sec.3.2. in \citep{chen2018unified}). 
\end{proof}

\begin{lem}
\label{lem:first_variation_qt}
For a fixed $c$ and $\x_t$,
\begin{equation}
    \frac{\delta q_t^{\mu}(\x_t|c,y)}{\delta \mu}[\theta] = q_{t0}(\x_t|\x_0) = \gN(\x_t | \alpha_t \x_0, \sigma_t^2\mI),
\end{equation}
where $\x_0=\vg(\theta,c)$.
\end{lem}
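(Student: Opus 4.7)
The plan is to exhibit $q_t^{\mu}(\x_t|c,y)$ as a functional that is \emph{linear} in $\mu$, after which the first variation reduces to reading off the integral kernel. Linearity is not manifest from the given definition $q_t^{\mu}(\x_t|c,y) = \int q_0^{\mu}(\x_0|c,y)\, q_{t0}(\x_t|\x_0)\,\dm\x_0$, since $q_0^{\mu}$ is defined only implicitly as the pushforward of $\mu$ under the rendering map $\vg(\cdot, c)$. The first step is therefore to make this implicit dependence explicit.

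Concretely, I would write the pushforward in distributional form as
\[
q_0^{\mu}(\x_0|c,y) = \int \mu(\theta|y)\, \delta\!\big(\x_0 - \vg(\theta, c)\big)\, \dm\theta,
\]
substitute into the definition of $q_t^{\mu}$, apply Fubini (legitimate since $q_{t0}(\x_t|\x_0)$ is a bounded Gaussian kernel in $\x_0$), and use the sifting property of the Dirac delta to collapse the $\x_0$ integral, arriving at
\[
q_t^{\mu}(\x_t|c,y) = \int \mu(\theta|y)\, q_{t0}\!\big(\x_t \mid \vg(\theta, c)\big)\, \dm\theta.
\]
With $\x_t$ and $c$ held fixed, this is a linear functional $F[\mu] = \int \mu(\theta)\, K(\theta)\, \dm\theta$ whose kernel is $K(\theta) = q_{t0}(\x_t \mid \vg(\theta, c))$. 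The definition of the first variation (checked by computing the Gateaux derivative $\tfrac{\dm}{\dm\epsilon}F[\mu+\epsilon\nu]|_{\epsilon=0} = \int K(\theta)\nu(\theta)\dm\theta$ along admissible perturbations $\nu$) then immediately yields $\delta F/\delta\mu\,[\theta] = K(\theta) = q_{t0}(\x_t|\x_0) = \gN(\x_t\mid \alpha_t\x_0, \sigma_t^2\mI)$ with $\x_0=\vg(\theta,c)$, which is the claim.

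The only step requiring even mild care is the delta-function manipulation, and it can be bypassed rigorously by testing $q_0^{\mu}$ against an arbitrary bounded measurable $\varphi(\x_0)$: by the change-of-variables interpretation of the pushforward, $\int \varphi(\x_0)\, q_0^{\mu}(\x_0|c,y)\,\dm\x_0 = \E_{\theta\sim\mu}[\varphi(\vg(\theta, c))]$. Taking $\varphi(\x_0) = q_{t0}(\x_t|\x_0)$ for each fixed $\x_t$ and swapping the two integrals (again by Fubini, justified by boundedness of the Gaussian kernel) reproduces the linear representation above. No regularity of $\vg$ beyond measurability is needed, since differentiation is with respect to $\mu$ rather than $\theta$, so I anticipate no substantive obstacle: the content of the lemma is essentially that linear functionals have constant (in $\mu$) first variations equal to their kernels.
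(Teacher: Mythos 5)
Your proposal is correct and follows essentially the same route as the paper: both rewrite $q_t^{\mu}(\x_t|c,y)$ as the linear functional $\E_{\mu(\theta|y)}[q_{t0}(\x_t|\vg(\theta,c))]$ and then read off the first variation as the kernel $q_{t0}(\x_t|\vg(\theta,c))$. You simply make explicit (via the pushforward/Fubini argument) the step the paper states in one line.
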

\begin{proof}[Proof of Lemma~\ref{lem:first_variation_qt}]
By the definition of $q_t^\mu$, we have
\begin{equation}
    q_t^{\mu}(\x_t|c,y) = \E_{q_0^{\mu}(\x_0|c,y)}[q_{t0}(\x_t|\x_0)] = \E_{\mu(\theta|y)}[q_{t0}(\x_t|\vg(\theta,c))],
\end{equation}
so
\begin{equation}
    \frac{\delta q_t^\mu(\x_t|c,y)}{\delta \mu}[\theta] = q_{t0}(\x_t|\vg(\theta,c)) = \gN(\x_t | \alpha_t \vg(\theta,c), \sigma_t^2\mI)
\end{equation}
\end{proof}

\begin{lem}
\label{lem:first_variation_kl_qt}
\begin{equation}
\frac{\delta \kl{q^{\mu}_t(\x_t|c,y)}{p_t(\x_t|y^c)}}{\delta \mu}[\theta]
= \E_{\vepsilon}[\log q_t^\mu(\x_t|c,y) - \log p_t(\x_t|y^c) + 1],
\end{equation}
where $\x_t = \alpha_t \vg(\theta,c) + \sigma_t\vepsilon$, $\vepsilon\sim\gN(\vepsilon|\bm{0},\mI)$.
\end{lem}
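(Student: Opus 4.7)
The plan is to derive this by applying the chain rule for first variations to the composition $\mu \mapsto q_t^{\mu}(\x_t|c,y) \mapsto \kl{q_t^{\mu}(\x_t|c,y)}{p_t(\x_t|y)}$, and then recognizing the resulting integral as an expectation under the Gaussian transition kernel.

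First I would invoke the chain rule
\begin{equation*}
\frac{\delta \kl{q_t^{\mu}(\cdot|c,y)}{p_t(\cdot|y)}}{\delta \mu}[\theta] = \int \frac{\delta \kl{q_t^{\mu}(\cdot|c,y)}{p_t(\cdot|y)}}{\delta q_t^{\mu}(\cdot|c,y)}[\x_t] \cdot \frac{\delta q_t^{\mu}(\x_t|c,y)}{\delta \mu}[\theta]\, \dm \x_t,
\end{equation*}
which reduces the problem to two functional derivatives that have already been computed. By Lemma~\ref{lem:first_variation_kl} applied with $q = q_t^{\mu}(\cdot|c,y)$ and $p = p_t(\cdot|y)$, the first factor equals $\log q_t^{\mu}(\x_t|c,y) - \log p_t(\x_t|y) + 1$. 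By Lemma~\ref{lem:first_variation_qt}, the second factor equals $\gN(\x_t\mid \alpha_t\vg(\theta,c),\sigma_t^2 \mI)$.

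Next I would substitute both expressions into the chain-rule identity to obtain
\begin{equation*}
\frac{\delta \kl{q_t^{\mu}(\cdot|c,y)}{p_t(\cdot|y)}}{\delta \mu}[\theta] = \int \bigl(\log q_t^{\mu}(\x_t|c,y) - \log p_t(\x_t|y) + 1\bigr)\, \gN(\x_t\mid \alpha_t\vg(\theta,c),\sigma_t^2 \mI)\, \dm \x_t.
\end{equation*}
Finally, I would perform the reparameterization $\x_t = \alpha_t \vg(\theta,c) + \sigma_t \vepsilon$ with $\vepsilon \sim \gN(\bm{0},\mI)$, under which the Gaussian density becomes the pushforward measure of $\vepsilon$, so the integral rewrites as the expectation $\E_{\vepsilon}[\log q_t^{\mu}(\x_t|c,y) - \log p_t(\x_t|y) + 1]$, which is the claimed expression.

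The main obstacle is the rigorous justification of the chain rule for first variations on $\sW_2$, which requires sufficient regularity (finite second moments, integrability of $\log q_t^{\mu}/p_t$ against the Gaussian kernel, and smoothness of $\vg(\cdot,c)$). Since $q_t^{\mu}$ and $p_t$ are both smoothed by Gaussian convolution at any $t>0$, they are smooth and strictly positive, so the log-ratio is well defined and the exchange of differentiation and integration is standard; I would note these regularity facts briefly rather than grinding through dominated-convergence details, consistent with the level of rigor used in the preceding lemmas.
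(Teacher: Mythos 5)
Your proof is correct and follows exactly the same route as the paper: apply the chain rule for first variations, substitute the results of Lemma~\ref{lem:first_variation_kl} and Lemma~\ref{lem:first_variation_qt}, and reparameterize the Gaussian integral as an expectation over $\vepsilon$. The remarks on regularity are a reasonable addition but do not change the argument.
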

\begin{proof}[Proof of Lemma~\ref{lem:first_variation_kl_qt}]
By the chain rule of functional derivative, according to Lemma~\ref{lem:first_variation_kl} and Lemma~\ref{lem:first_variation_qt}, we have
\begin{align}
\frac{\delta \kl{q^{\mu}_t(\x_t|c,y)}{p_t(\x_t|y^c)}}{\delta \mu}[\theta]
&= \int \frac{\delta \kl{q^{\mu}_t}{p_t}}{\delta q_t^{\mu}}[\x_t]\cdot \frac{\delta q_t^\mu(\x_t|c,y)}{\delta\mu}[\theta]\dm\x_t \\
&= \int (\log q_t^\mu(\x_t|c,y) - \log p_t(\x_t|y^c) + 1)q_{t0}(\x_t|\x_0)\dm\x_t \\
&= \E_{q_{t0}(\x_t|\x_0)}\left[ \log q_t^\mu(\x_t|c,y) - \log p_t(\x_t|y^c) + 1 \right] \\
&= \E_{\vepsilon}\left[ \log q_t^\mu(\x_t|c,y) - \log p_t(\x_t|y^c) + 1 \right],
\end{align}
where $\vepsilon\sim \gN(\vepsilon|\bm{0},\mI)$, $\x_0=\vg(\theta,c)$, $\x_t=\alpha_t\x_0+\sigma_t\vepsilon$.
\end{proof}

Below we provide the proof of Theorem~\ref{thrm:gradient_flow}.
\begin{proof}[Proof of Theorem~\ref{thrm:gradient_flow}]
According to Lemma~\ref{lem:first_variation_kl_qt}, we have
\begin{equation}
    \frac{\delta\gE[\mu]}{\delta\mu}[\theta] = \E_{t,\vepsilon,c}\left[
        \frac{\sigma_t}{\alpha_t}\omega(t)\left(
            \log q_t^\mu(\x_t|c,y) - \log p_t(\x_t|y^c) + 1
        \right)
    \right],
\end{equation}
where $\x_t=\alpha_t\vg(\theta,c)+\sigma_t\vepsilon$. So
\begin{align}
    \nabla_{\theta}\frac{\delta\gE[\mu]}{\delta\mu}[\theta] &= 
    \E_{t,\vepsilon,c}\left[
        \frac{\sigma_t}{\alpha_t}\omega(t)\left(
            \nabla_{\x_t}\log q_t^\mu(\x_t|c,y) - \nabla_{\x_t}\log p_t(\x_t|y^c)
        \right)\frac{\partial \x_t}{\partial \theta}
    \right] \\
    &= \E_{t,\vepsilon,c}\left[
        \sigma_t\omega(t)\left(
            \nabla_{\x_t}\log q_t^\mu(\x_t|c,y) - \nabla_{\x_t}\log p_t(\x_t|y^c)
        \right)\frac{\partial \vg(\theta,c)}{\partial \theta}
    \right].
\end{align}
Thus, the measure $\mu_\tau$ at step $\tau$ during the Wasserstein gradient flow minimizing $\gE[\mu]$ on $\sW_2(\Theta)$ satisfies
\begin{align}
    \frac{\partial \mu_\tau}{\partial \tau} &= -\nabla_{\sW_2}\gE[\mu]\\
    &= \nabla_{\theta}\cdot \left( \mu_\tau \nabla_{\theta}\frac{\delta \gE[\mu_\tau]}{\delta \mu_\tau} \right) \\
    &= \nabla_{\theta}\cdot \left[
        \mu_\tau(\theta|y) \E_{t,\vepsilon,c}\left[
            \sigma_t\omega(t)
            \left( \nabla_{\x_t}\log q^{\mu_\tau}_t(\x_t|c,y) - \nabla_{\x_t}\log p_t(\x_t|y^c)  \right)
            \frac{\partial \vg(\theta,c)}{\partial \theta}
        \right]
    \right].
\end{align}
By the definition of Fokker-Planck equation, the corresponding process of each particle $\theta_\tau$ at time $\tau$ satisfies
\begin{equation}
    \frac{\dm\theta_\tau}{\dm\tau} = \E_{t,\vepsilon,c}\left[
        \sigma_t\omega(t)
        \left(  \nabla_{\x_t}\log p_t(\x_t|y^c) - \nabla_{\x_t}\log q^{\mu_\tau}_t(\x_t|c,y)  \right)
        \frac{\partial \vg(\theta,c)}{\partial \theta}
    \right].
\end{equation}
\end{proof}

\section{Additional Ablation Study}
\label{sec:ablation}
\subsection{Ablation Study on Large Scene Generation} 
Here we perform an ablation study on large scene generation to validate the effectiveness of our proposed improvements. We start from 64 rendering resolution, with SDS loss and our scene initialization. The results are shown in Figure~\ref{fig:scene-sds}. It can be seen from the figure that, with our scene initialization, the results are with $360^\circ$ surroundings instead of being object-centric. Increasing rendering resolution is slightly beneficial. Adding annealed time schedule improves the visual quality of the results. Replacing SDS with VSD makes the results more realistic with more details.

\begin{figure*}[thb]
	\centering
	\includegraphics[width=\linewidth]{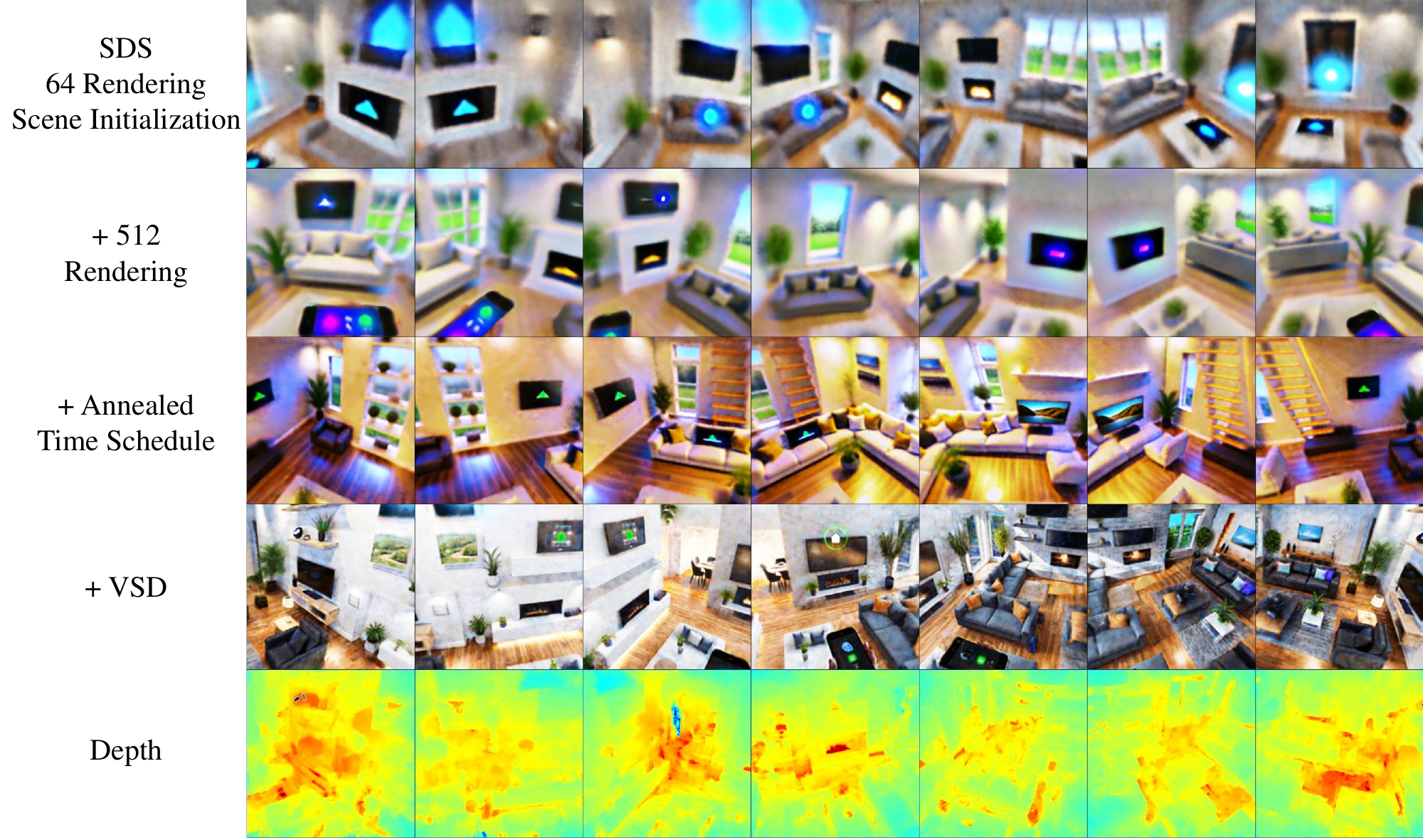}
	\caption{Ablation study on scene generation. With our scene initialization, the results are with $360^\circ$ surroundings instead of being object-centric. Our annealed time schedule and VSD are both beneficial for the generation quality. The prompt here is \textit{Inside of a smart home, realistic detailed photo, 4k.}}
	\label{fig:scene-sds}
\end{figure*}

\subsection{Ablation Study on Mesh Fine-tuning}
Here we provide an ablation study on mesh fine-tuning. Fine-tuning with textured mesh further improves the quality compared to the NeRF result. Fine-tuning texture with VSD provides higher fidelity than SDS. Note that both VSD and SDS in mesh fine-tuning are based on and benefit from the high-fidelity NeRF results by our VSD. And it's crucial to get a high-quality NeRF with VSD at the first stage.

\begin{figure*}[t]
	\centering
	\begin{minipage}{0.24\linewidth}
		\centering
        \includegraphics[width=\linewidth]{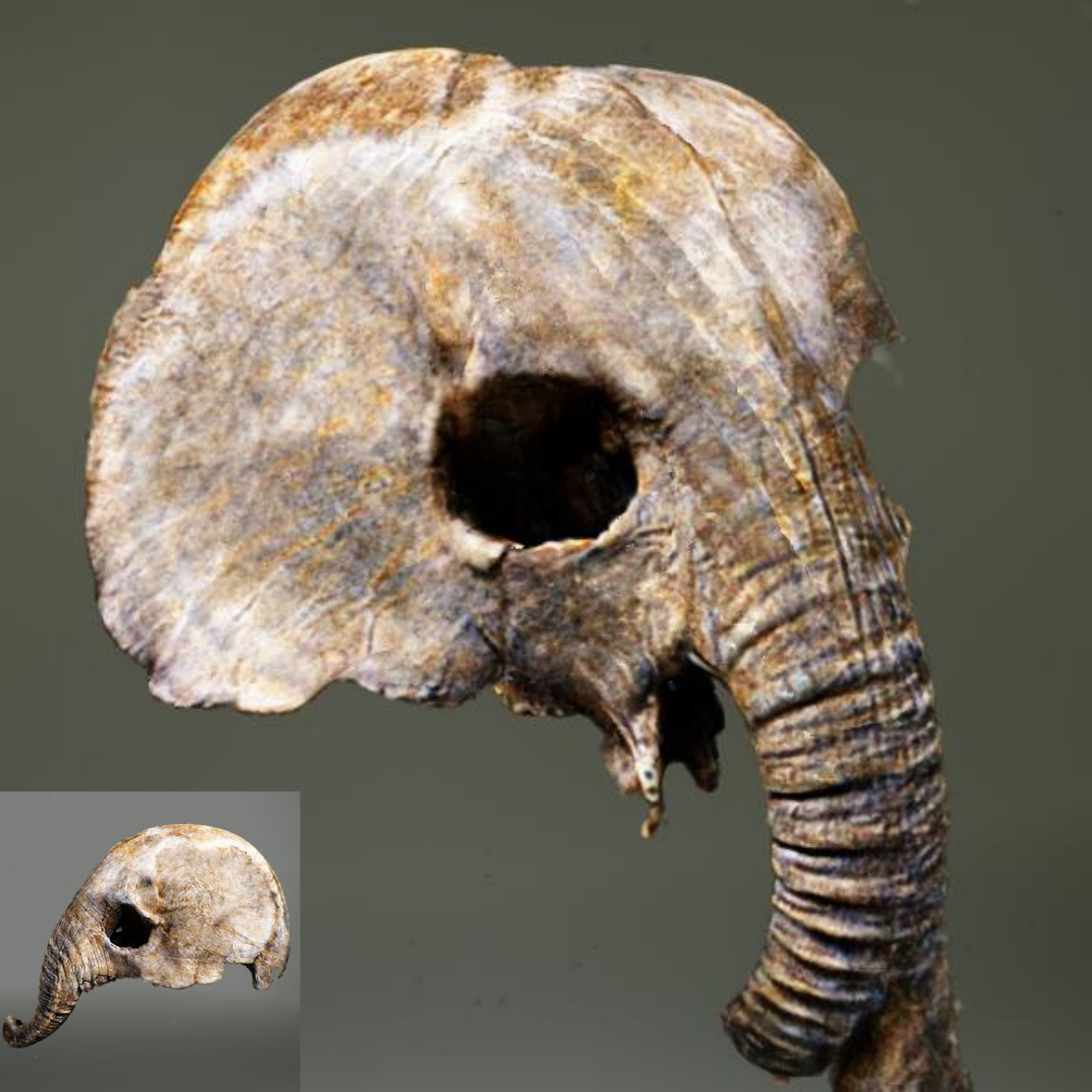}\\
        \small{(1) High fidelity NeRF generated by VSD (\textbf{ours}).} \\
	\end{minipage}
	\begin{minipage}{0.24\linewidth}
		\centering
        \includegraphics[width=\linewidth]{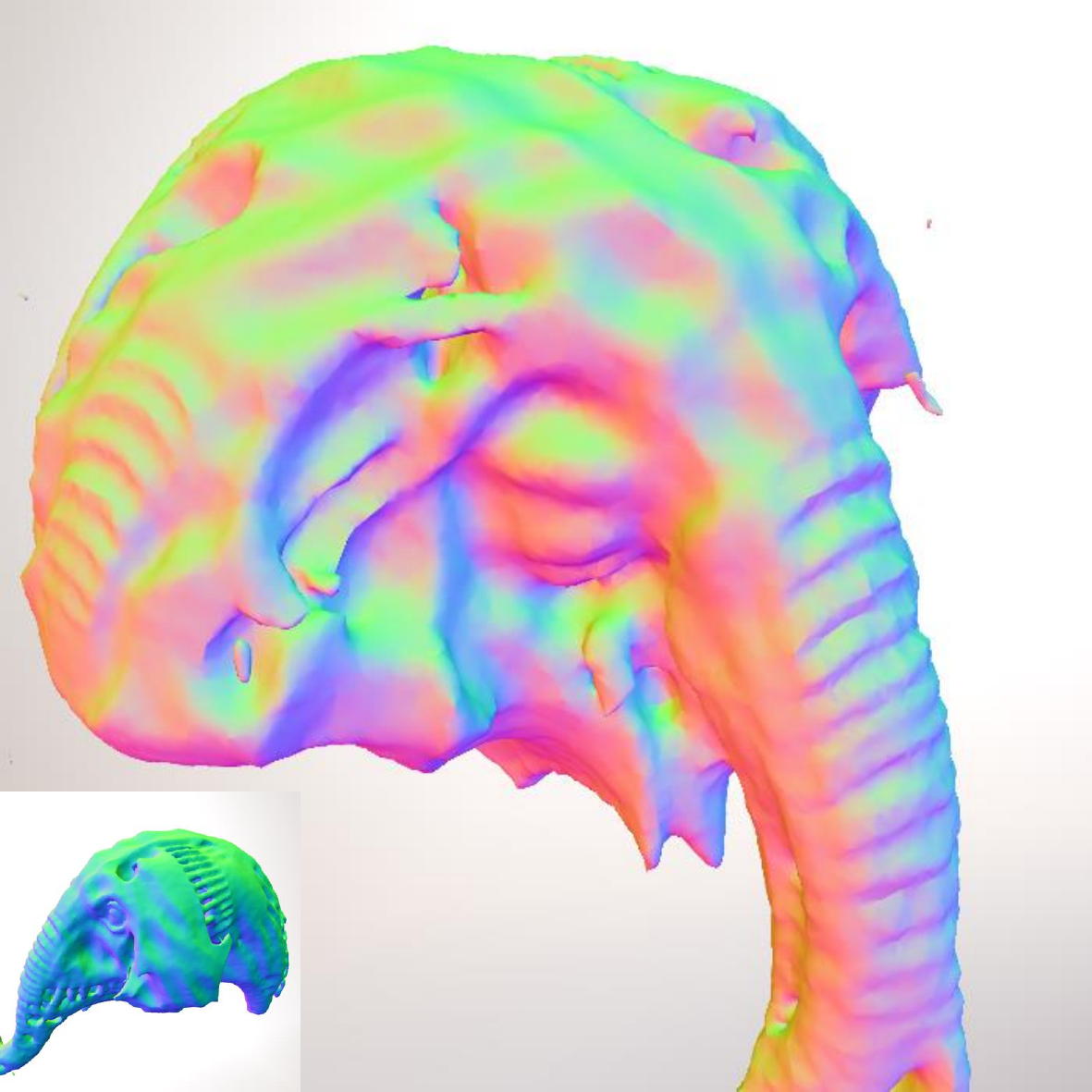}\\
        \small{(2) Extract and fine-tune geometry from NeRF.} \\
	\end{minipage}
	\begin{minipage}{0.24\linewidth}
		\centering
        \includegraphics[width=\linewidth]{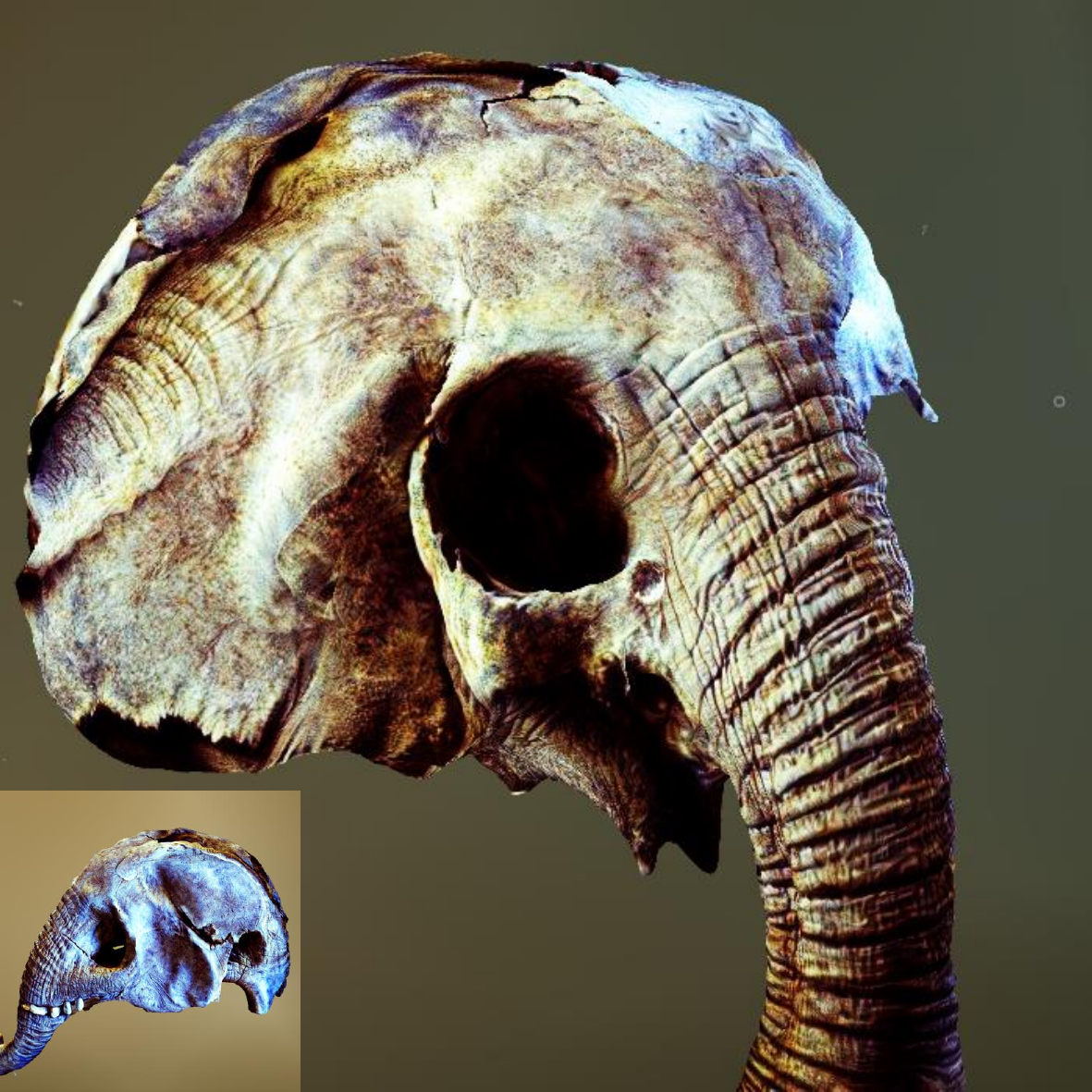}\\
        \small{(3a)  Texture fine-tuned by VSD (\textbf{ours}).} \\
	\end{minipage}
	\begin{minipage}{0.24\linewidth}
		\centering
        \includegraphics[width=\linewidth]{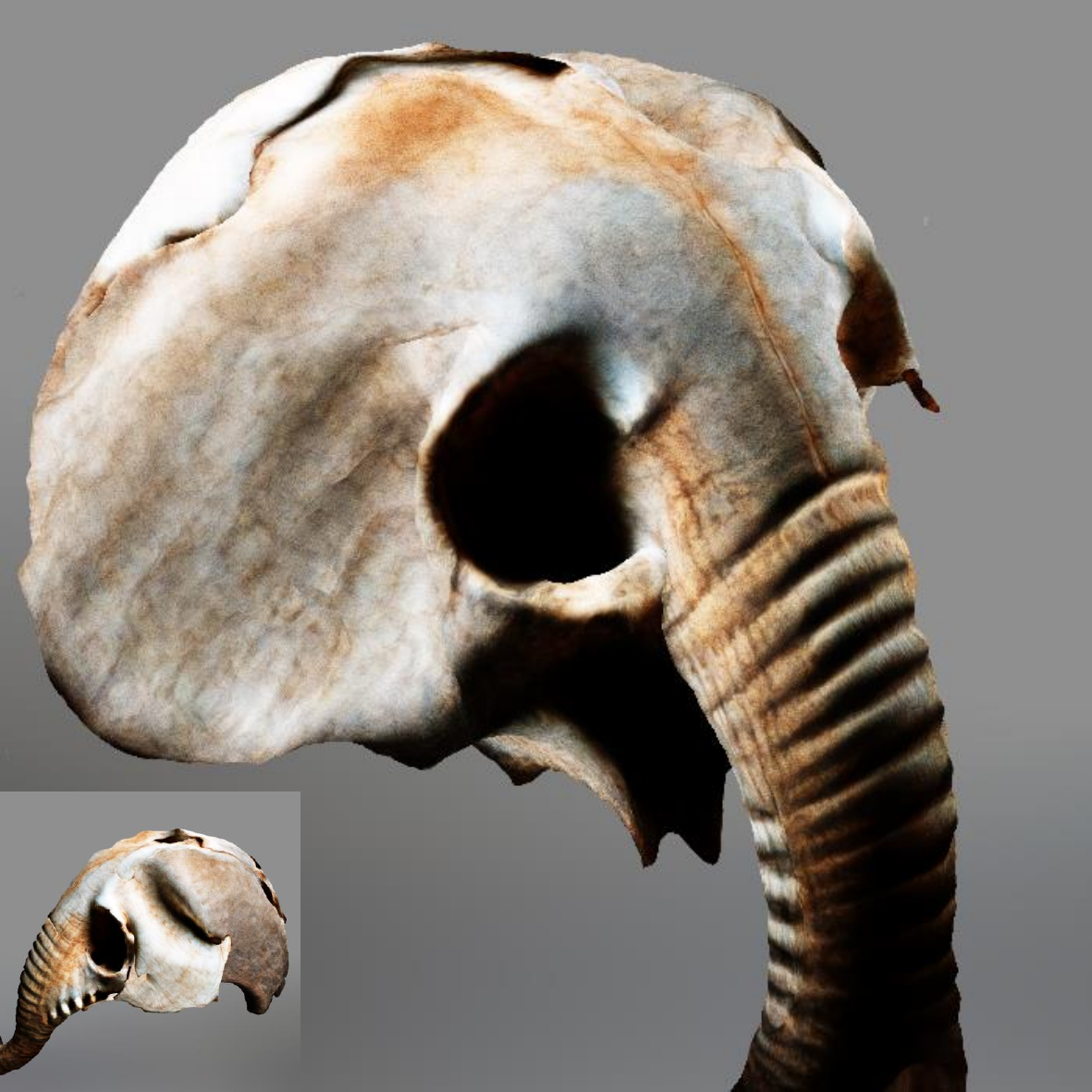}\\
        \small{(3b) Texture fine-tuned by SDS.} \\
	\end{minipage}
    \caption{Pipeline of \model$\text{ }$along with ablation study of VSD. After generating a high-quality NeRF, we extract and finetune a textured mesh. VSD provides high-fidelity texture, while SDS tends to generate smoother results. Note that both VSD and SDS in mesh fine-tuning are based on and benefit from the high-fidelity NeRF results by our VSD. The prompt here is \textit{an elephant skull}.}
	\label{fig:text-ablate}
\end{figure*}

\subsection{Ablation on Number of Particles}
\label{sec:n_particles}

Here we provide ablation study on number of particles. We vary the number of particles in $1,2,4,8$ and examine how the number of particles affects the generated results. The CFG of VSD is set as $7.5$. The results are shown in Fig.~\ref{fig:nparticle-ablate}. As is shown in the figure, the diversity of the generated results is slightly larger as the number of particles increases. Meanwhile, the quality of generated results is not affected much by the number of particles. Owing to the high computation overhead to optimize 3D representations and limitations on computation resources, we now only test at most $8$ particles. We provide a 2D experiment with $2048$ particles in Appendix~\ref{sec:2dvsd} to demonstrate the scalability of VSD. We leave the experiments of more particles in 3D as future work.

\begin{figure*}[!ht]
	\centering
	\includegraphics[width=\linewidth]{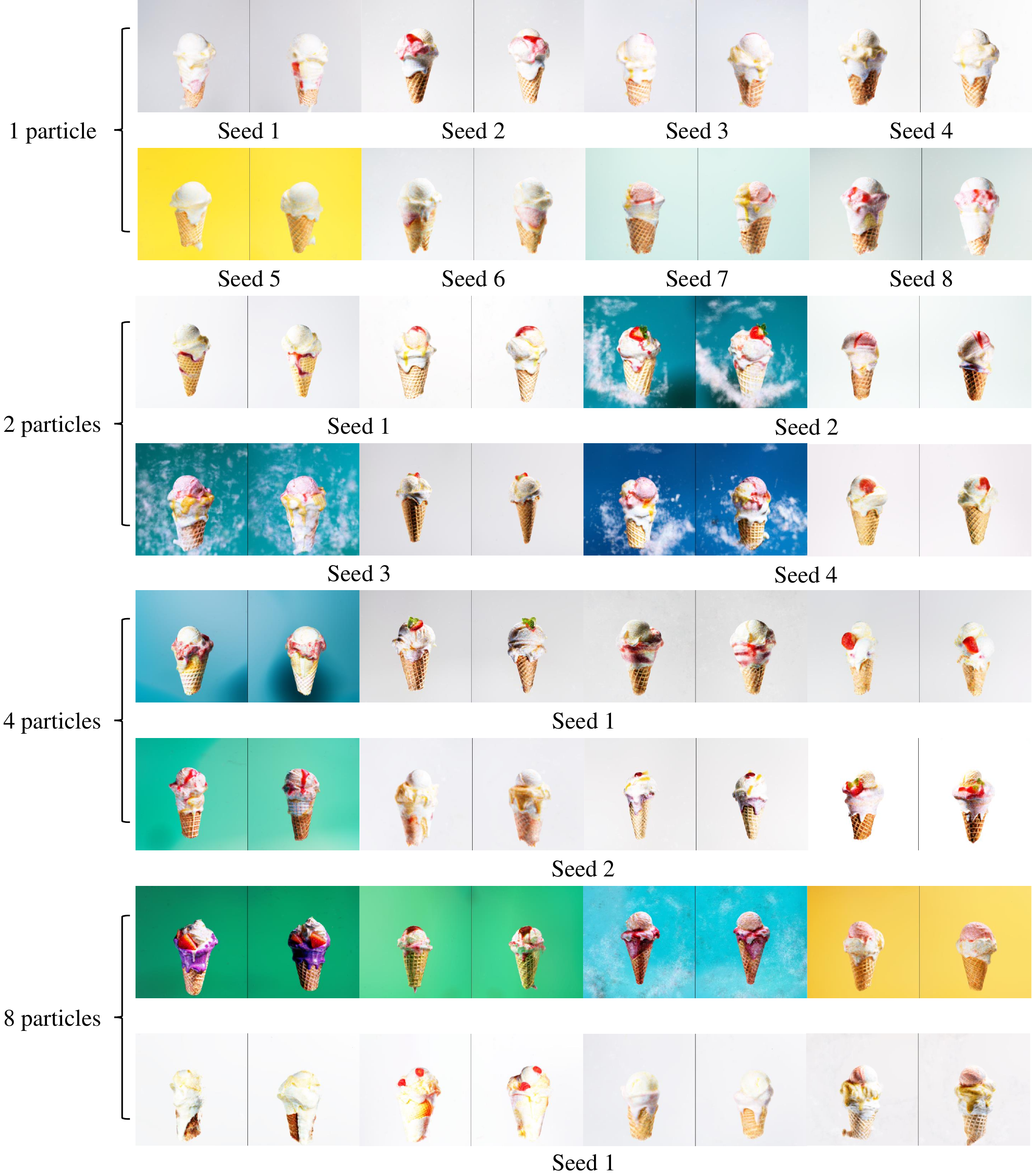}
	\caption{Ablation on how number of particles affects the results. The diversity of the generated results is slightly larger as the number of particles increases. The quality of generated results is not affected much by the number of particles. The prompt is \textit{A high quality photo of an ice cream sundae}.}
	\label{fig:nparticle-ablate}
\end{figure*}

\subsection{Ablation on Rendering Resolution}

Here we provide an ablation study on the rendering resolution during NeRF training with VSD. As shown in Fig.~\ref{fig:res-ablate}, training with a higher resolution produces better results with finer details. In addition, our VSD still provides competitive results under a lower training resolution ($128$ or $256$), which is more computationally efficient than the $512$ resolution.

\begin{figure*}[t]
	\centering
	\begin{minipage}{0.24\linewidth}
		\centering
        \includegraphics[width=\linewidth]{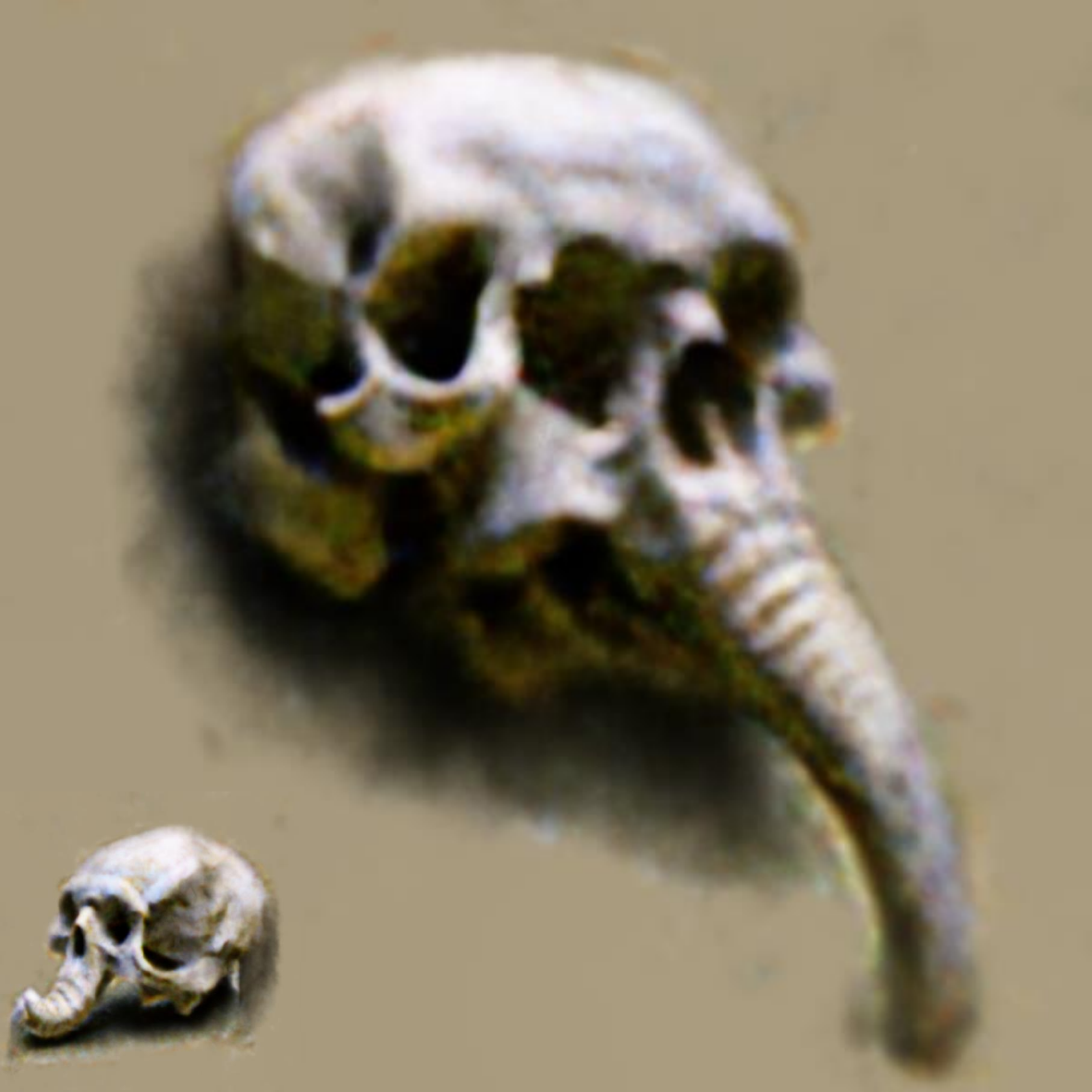}\\
        \small{(1) 64 rendering.} \\
	\end{minipage}
	\begin{minipage}{0.24\linewidth}
		\centering
        \includegraphics[width=\linewidth]{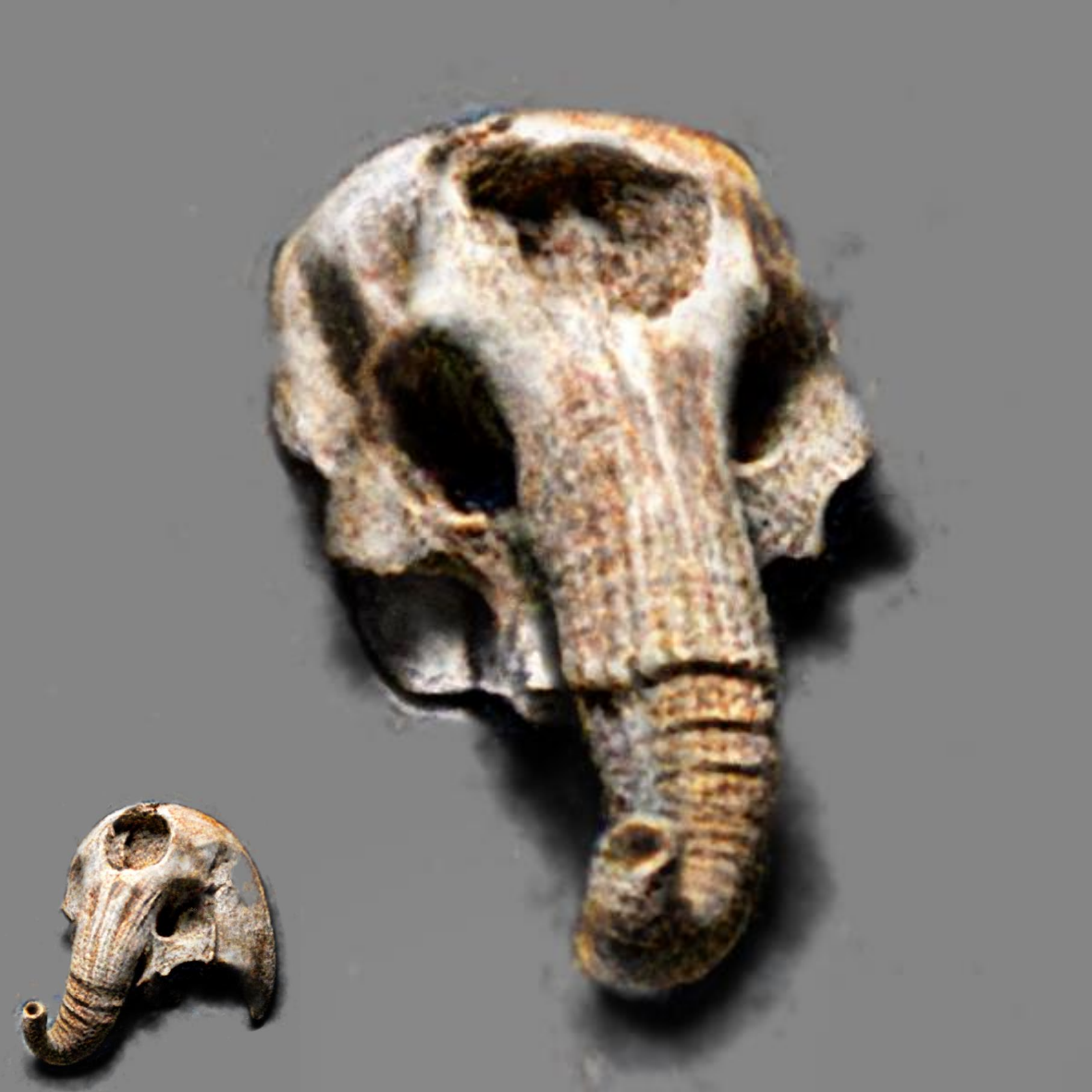}\\
        \small{(2) 128 rendering.} \\
	\end{minipage}
	\begin{minipage}{0.24\linewidth}
		\centering
        \includegraphics[width=\linewidth]{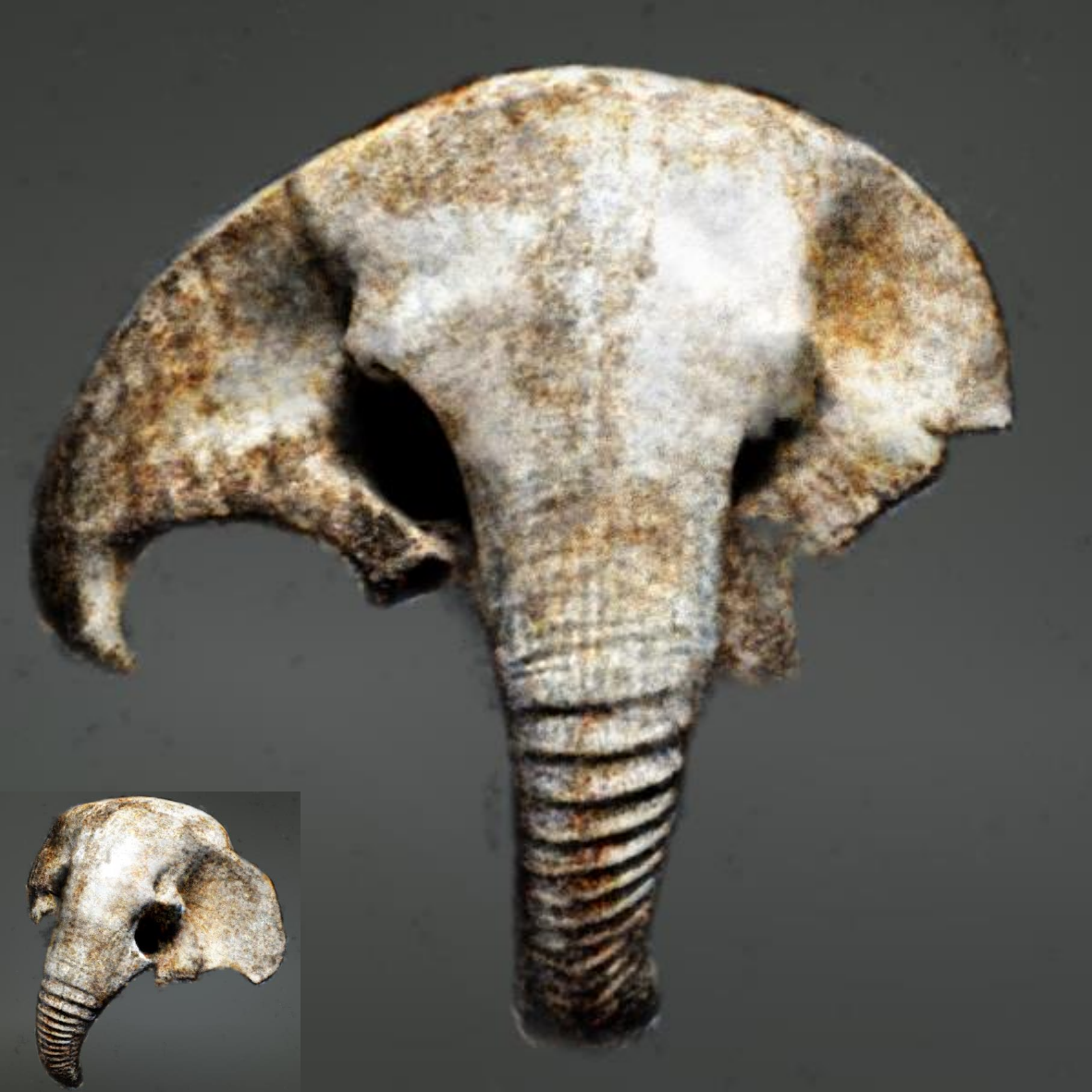}\\
        \small{(3) 256 rendering.} \\
	\end{minipage}
	\begin{minipage}{0.24\linewidth}
		\centering
        \includegraphics[width=\linewidth]{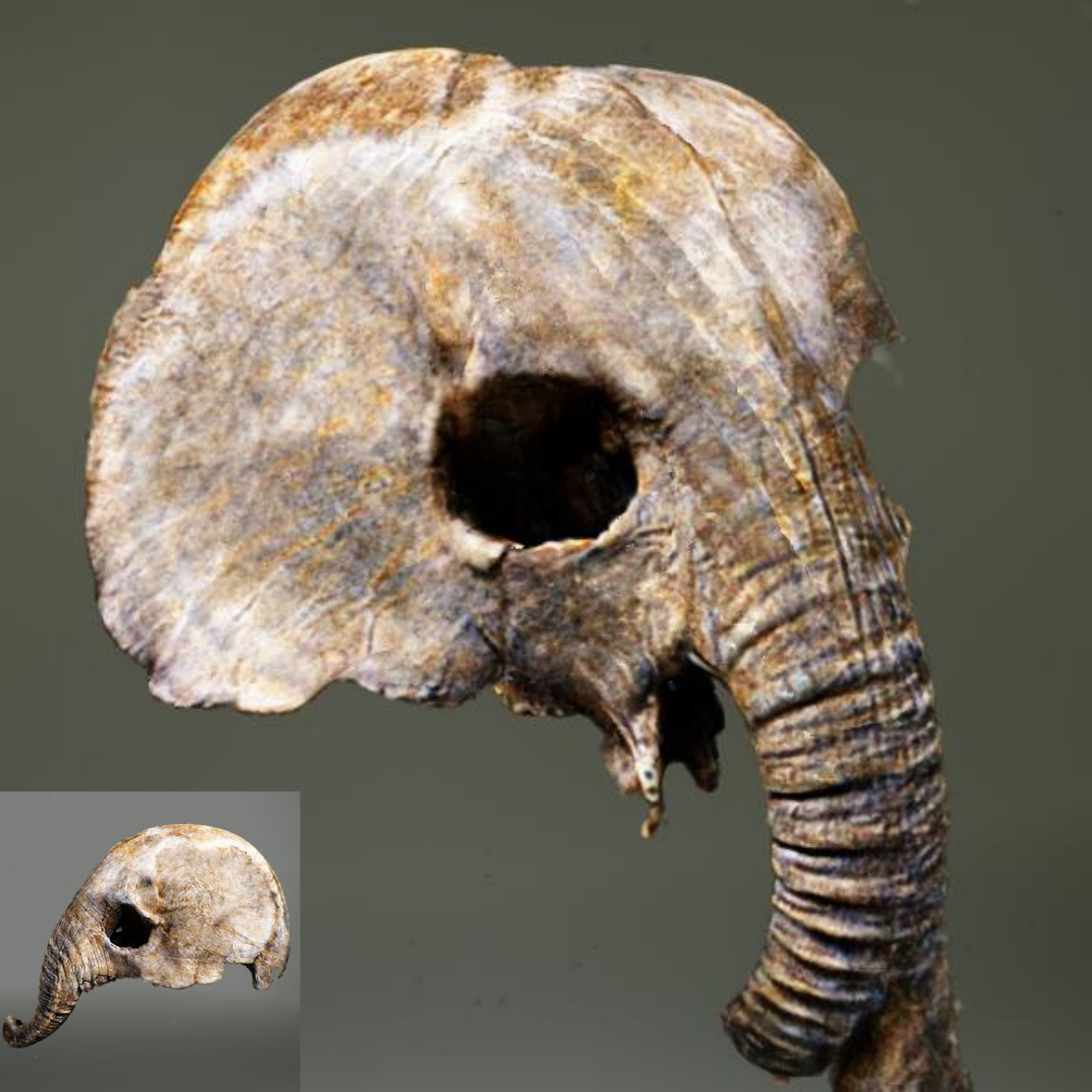}\\
        \small{(4) 512 rendering.} \\
	\end{minipage}
    \caption{Ablation study on rendering resolution during NeRF training. Training with higher resolution produces better results with finer details. However, our VSD still provides competitive results under a lower resolution ($128$ or $256$), which is more computationally efficient. The prompt here is \textit{an elephant skull}.}
	\label{fig:res-ablate}
\end{figure*}

\section{Algorithm for Variational Score Distillation}
\label{appendix:algorithm}
We provide a summarized algorithm of variational score distillation in Algorithm~\ref{alg:vsd}. 

We initialize one or several 3D structures $\{\theta^{(i)}\}_{i=1}^{n}$, a noise prediction model $\mathbf{\epsilon}_{\phi}$ parameterized by $\phi$. At each iteration, we sample a camera pose $c$ from a pre-defined distribution as previous works~\cite{poole2022dreamfusion,lin2022magic3d}. Then we render 2D image from 3D structures at pose $c$ with differentiable rendering $\vx_0=\vg(\theta, c)$. To optimize 3D parameters $\theta$, we compute the gradient direction of 2D image and then back propagate to the parameter of NeRF, using VSD as Eq.~\eqref{eq:vsd}. To model the score of the variational distribution, we train an additional diffusion model $\vepsilon_{\phi}$ parameterized by LoRA. We optimize Eq.~\eqref{eq:unet-training} to train LoRA after optimization of 3D parameters, using the rendered image $\vx_0=\vg(\theta, c)$ and pose $c$. Note that, at each iteration, we perform differentiable rendering only once but use the rendered image twice for both computing gradient direction with VSD and training LoRA. Thus the computation cost will not increase much compared to SDS.

\begin{algorithm}[tb]
\caption{Variational Score Distillation}
\label{alg:vsd}
\textbf{Input: }{Number of particles $n$ $(\ge 1)$. Large text-to-image diffusion model 
$\mathbf{\epsilon}_{\text{pretrain}}$. Learning rate $\eta_{1}$ and $\eta_{2}$ for 3D structures and diffusion model parameters, respectively. A prompt $y$.}

\begin{algorithmic}[1]
\STATE {\bfseries initialize} $n$ 3D structures $\{\theta^{(i)}\}_{i=1}^{n}$, a noise prediction model $\mathbf{\epsilon}_{\phi}$ parameterized by $\phi$.
\WHILE{not converged}
\STATE Randomly sample $\theta\sim \{\theta^{(i)}\}_{i=1}^{n}$ and a camera pose $c$.
\STATE Render the 3D structure $\theta$ at pose $c$ to get a 2D image $\vx_0=\vg(\theta,c)$.
\STATE $\theta \leftarrow \theta - \eta_{1} \E_{t,\vepsilon,c}\left[
    \omega(t)
    \left( \vepsilon_{\text{pretrain}}(\x_t,t,y^c) - \vepsilon_{\phi}(\x_t,t,c,y)  \right)
    \frac{\partial \vg(\theta,c)}{\partial \theta}
\right]$
\STATE $\phi \leftarrow \phi - \eta_{2}\nabla_{\phi} \E_{t,\mathbf{\epsilon}}||\mathbf{\vepsilon}_{\phi}(\vx_t,t,c,y)-\vepsilon||_2^2\mbox{.}$
\ENDWHILE
\STATE {\bfseries return} 
\end{algorithmic}
\end{algorithm}

\section{More Details on Implementation and Hyper-Parameters}
\label{appendix:implement_details}
\paragraph{Training details.} We use v-prediction~\cite{salimans2022progressive} to train our additional diffusion model $\vepsilon_{\phi}$. The camera pose $c$ is fed into a 2-layer MLP and then added to timestep embedding at each U-Net block. NeRF/mesh and LoRA batch sizes are set to $1$ owing to the computation limit. A larger batch size of NeRF/mesh may improve the generated quality and we leave it in future work. The learning rate of LoRA is $0.0001$ and the learning rate of hash grid encoder is $0.01$. We render in RGB color space for high-resolution synthesis, unlike~\cite{metzer2022latent,wang2022score} that render in latent space. In addition, we set $\omega(t) = \sigma_t^2$. For most experiments, we only use $n=1$ particle for VSD to reduce the computation time (and we only use a batch size of 1, due to the computation resource limits). For NeRF rendering, we sample 96 points along each ray, with 64 samples in coarse stage and 32 samples in fine stage. We choose a single-layer MLP to decode the color and volumetric density from the hash grid encoder as previous work~\cite{lin2022magic3d}.

For object-centric scenes, we set the camera radius as $\gU(1.0, 1.5)$. The bounding box size is set as $1.0$. For large scene generation, we enlarge the range of camera radius to $\gU(0.1, 2.3)$ for better details and geometry consistency. We enlarge the bounding box size to $5.0$. For large scenes with a centric object, we set the range of camera radius as $\gU(1.0, 2.0)$.

DreamFusion~\citep{poole2022dreamfusion} introduces an explicit shading model based on normal vectors during the training process to enhance the geometry. We currently disable the shadings and it reduces the computational cost and memory consumption. We leave incorporating the shading model into our method as future work.

\paragraph{Optimization.} We optimize $25k$ steps for each particle with AdamW~\cite{loshchilov2017decoupled} optimizer in NeRF stage. We optimize $15k$ steps for geometry fine-tuning and $30k$ steps for texture fine-tuning. At each stage, for the first $5k$ steps we sample time steps $t\sim \gU(0.02, 0.98)$ and then directly change into $t\sim \gU(0.02,0.50)$. For large scene generation, we delay the annealing time to $10k$ steps since large scene generation requires more iterations to converge. The NeRF training stage consumes $17/17/18/27$GB GPU memory with $64/128/256/512$ rendering resolution and batch size of $1$. The Mesh fine-tuning stage consumes around $17$GB GPU memory with $512$ rendering resolution and batch size of $1$. The whole optimization process takes around several hours per particle on a single NVIDIA A100 GPU. We believe adding more GPUs in parallel will accelerate the generation process, and we leave it for future work.

\paragraph{Licenses} Here we provide the URL, citations and licenses of the open-sourced assets we use in this work.

\begin{table}[htb]
	\centering
    \caption{URL, citations and licenses of the open-sourced assets we use in this work.}
         \resizebox{\textwidth}{!}{
	\begin{tabular}{llll}
		\toprule
		URL       & Citation  & License       \\
		\midrule
		\url{https://github.com/ashawkey/stable-dreamfusion}                   & \cite{stable-dreamfusion}   &  Apache License 2.0      \\
        \url{https://github.com/threestudio-project/threestudio}     &  \cite{threestudio2023}  &  Apache License 2.0      \\
        \url{https://github.com/Stability-AI/stablediffusion}&\cite{rombach2022high}&MIT License\\
        \url{https://github.com/NVIDIAGameWorks/kaolin}&\cite{KaolinLibrary}& Apache License 2.0\\
        \url{https://github.com/huggingface/diffusers}&\cite{von-platen-etal-2022-diffusers}&Apache License 2.0\\
		\bottomrule
	\end{tabular}%
        }
	\label{table:license}
\end{table}

\section{2D Experiments of Variational Score Distillation}
\label{sec:2dvsd}
Here we describe the details of the experiments on 2D images with Variation Score Distillation in the main text of Fig.~\ref{fig:vsd}. Here we set the number of particles as $8$. We train a smaller U-Net from scratch to estimate the variational score since the distribution of several 2D images is much simpler than the distribution of images rendered from different poses of 3D structures. The optimization takes around $6000$ steps. Additional U-Net is optimized $1$ step on every optimization step of the particle images. The learning rate of particle images is $0.03$ and the learning rate of U-Net is 0.0001. The batch size of particles and U-Net are both set as $8$. Here we do not use annealed time schedule for both VSD and SDS for a fair comparison.

To further demonstrate the effectiveness of our VSD, we increase the number of particles to a large number of $2048$. The optimization takes around $20k$ steps. The batch size of U-Net is $192$ and the batch size of particles is set as $16$. The results are shown in Figure~\ref{fig:2d-2048}. It can be seen from the figure that our VSD generates plausible results even under a large number of particles, which demonstrates the scalability of VSD. Although, due to the high optimization cost of 3D representation, the number of particles in 3D experiments is relatively small, we demonstrate that VSD has the potential to scale up to more particles.

\begin{figure*}[!ht]
	\centering
	\includegraphics[width=\linewidth]{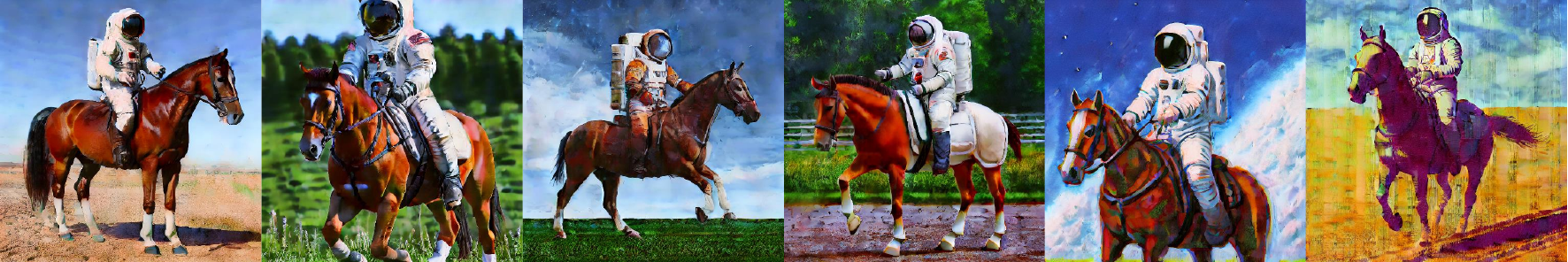}\\
	\caption{Selected samples from $2048$ particles in a 2D experiment of VSD. Our VSD generates plausible results even under a large number of particles, which demonstrates the scalability of VSD. The prompt is \textit{an astronaut is riding a horse.}}
	\label{fig:2d-2048}
\end{figure*}

To match the 3D experiment setting, we also provide 2D experiments with LoRA and add annealed time schedule. The results are shown in Figure~\ref{fig:2d-lora}. The number of particles is set as $6$ and CFG$=7.5$. Our VSD provides high-fidelity and diverse results in this setting. We also set the number of particles as $2048$ and the results are shown in Figure~\ref{fig:2048-lora}.

\begin{figure*}[!ht]
	\centering
	\includegraphics[width=\linewidth]{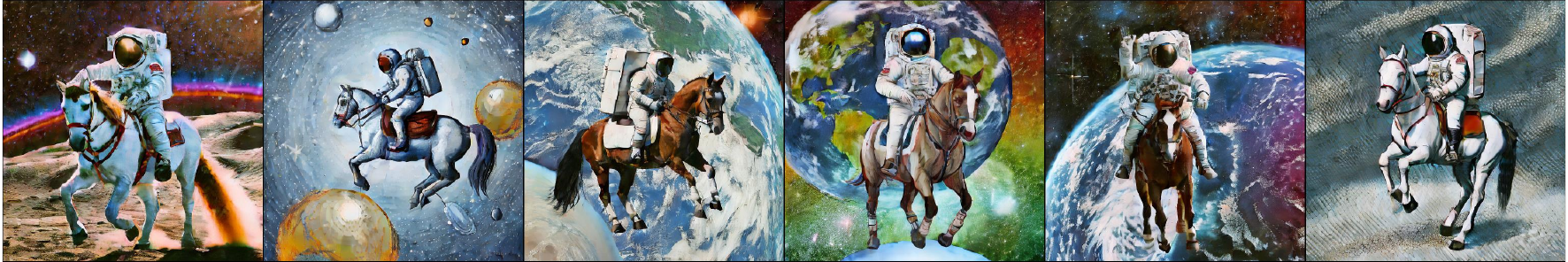}\\
	\caption{2D experiments of VSD (CFG$=7.5$) results with LoRA and annealed time schedule. The number of particles is $6$. Our VSD provides high-fidelity and diverse results in this setting. The prompt is \textit{an astronaut is riding a horse.}}
	\label{fig:2d-lora}
\end{figure*}

We also provide the results of SDS for comparison in Fig.~\ref{fig:2d-sds-ast}. Compared to our VSD, the results of SDS are smoother and lack details.

\begin{figure*}[!ht]
    \begin{minipage}{0.72\linewidth}
	\centering
    \includegraphics[width=\linewidth]{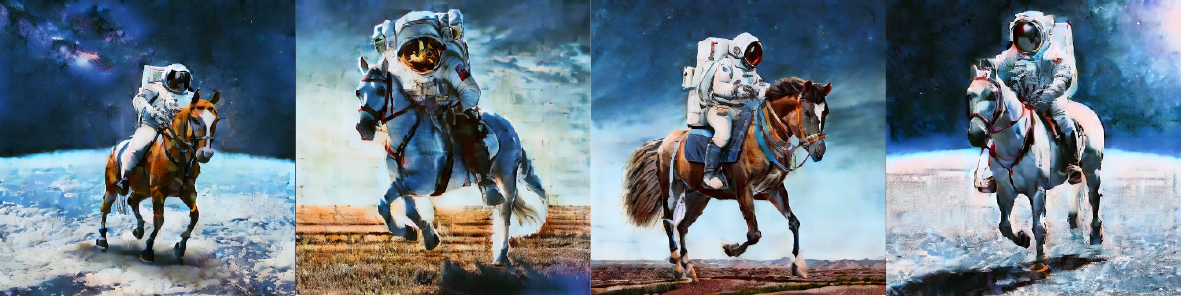}\\
    \caption{Selected samples of 2D experiments of VSD (CFG$=7.5$) results with LoRA. The number of particles is $2048$.}
    \label{fig:2048-lora}
    \end{minipage}
    \hspace{0.2in}
    \begin{minipage}{0.18\linewidth}
	\centering
    \includegraphics[width=\linewidth]{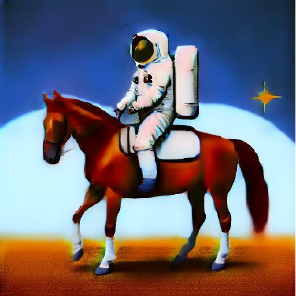}\\
	\caption{SDS result in 2D.}
	\label{fig:2d-sds-ast}
    \end{minipage}
\end{figure*}

Moreover, we visualize VSD/SDS training phase of 2D in Fig.~\ref{fig:gradvis}. Since the gradient is not directly readable, we visualize $\vx+\Delta\vx$, which is the updated results if current sample optimizes via this gradient direction. As shown in Fig.~\ref{fig:gradvis}, SDS tends to provide over-saturated and over-smooth gradient while VSD provides more natural-looking gradients with more details. As a consequence, VSD provides better final results.

\begin{figure*}[ht]
	\centering
	\begin{minipage}{0.48\linewidth}
		\centering
        \includegraphics[width=\linewidth]{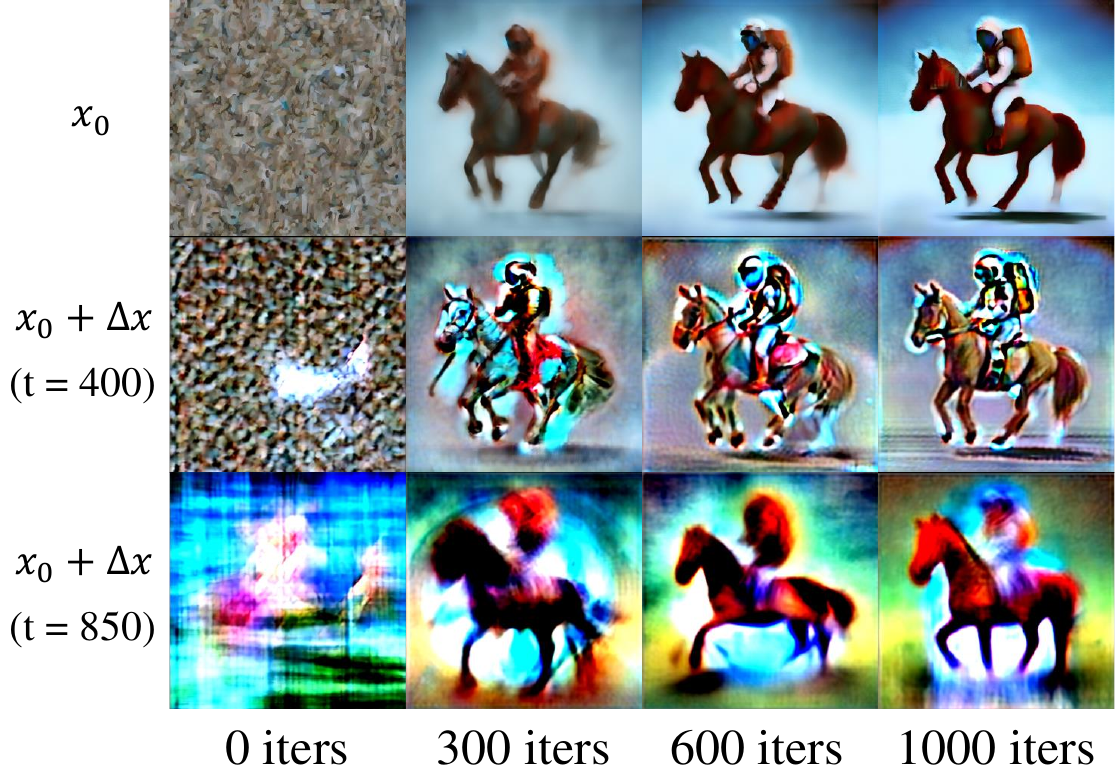}\\
        \small{(1) Gradient visualization of SDS.} \\
	\end{minipage}
	\begin{minipage}{0.48\linewidth}
		\centering
        \includegraphics[width=\linewidth]{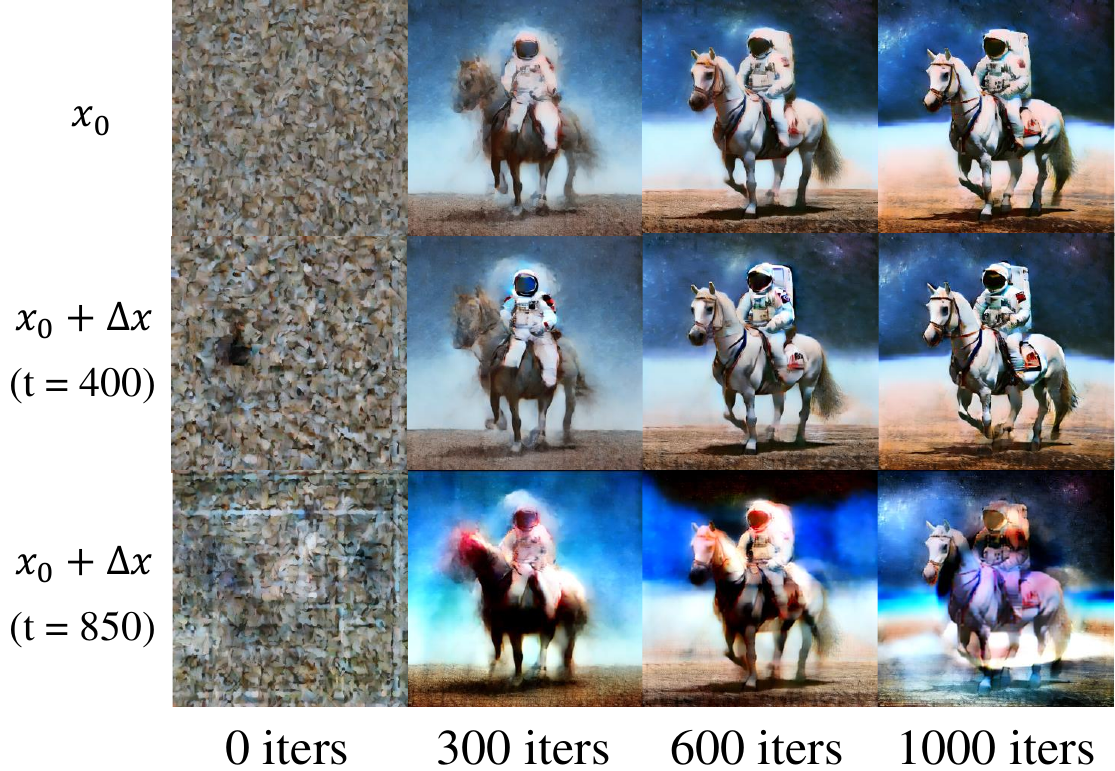}\\
        \small{(2) Gradient visualization of VSD (\textbf{ours}).} \\
	\end{minipage}
    \vspace{-.1cm}
    \caption{Gradient visualization of VSD and SDS during the training phase in 2D experiment. SDS tends to provide over-saturated and over-smooth gradient while VSD provides more natural-looking gradients with more details.}
	\label{fig:gradvis}
\end{figure*}

\begin{figure*}[!ht]
	\centering
	\includegraphics[width=0.95\linewidth]{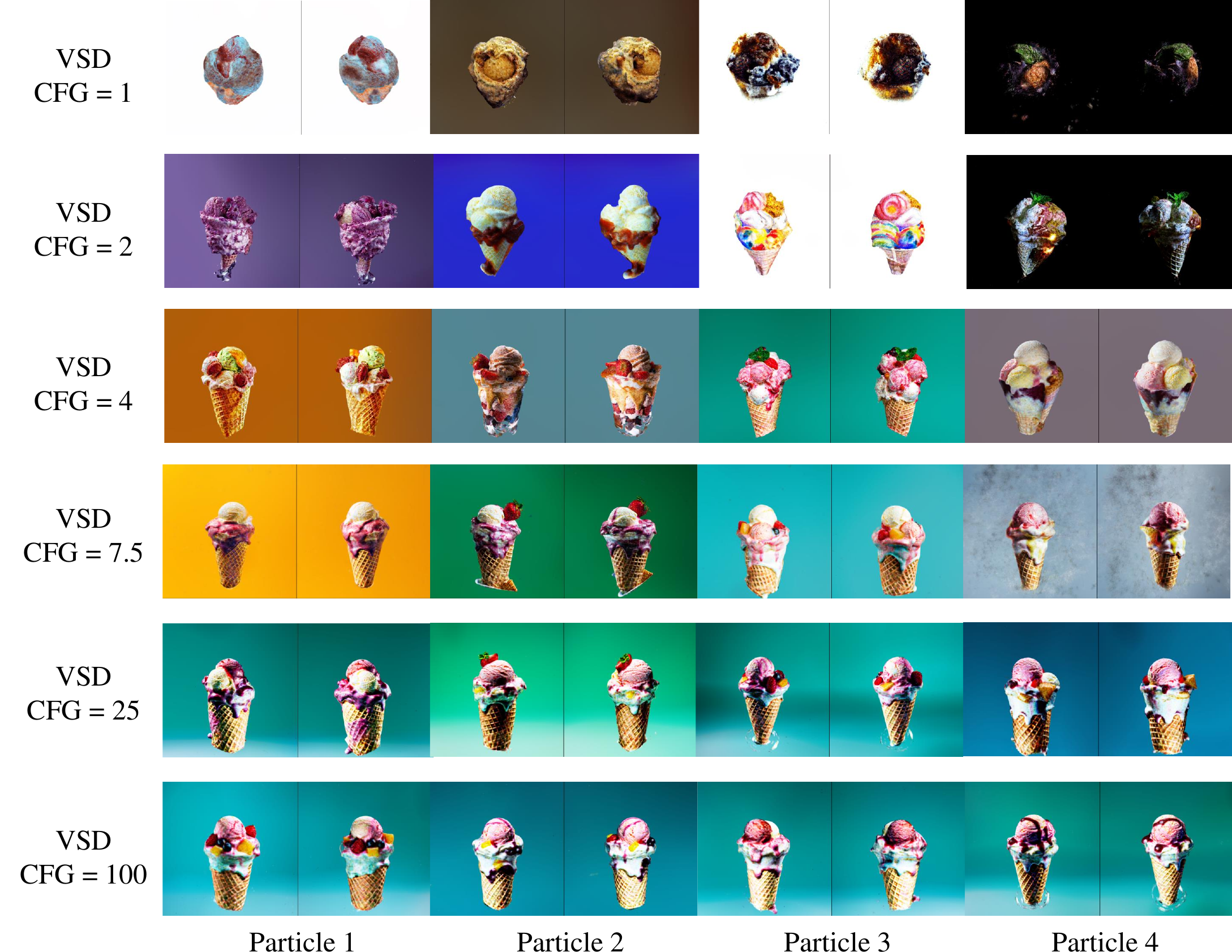}\\
        (a) VSD provides more diversity with lower CFG between particles.\\
        \includegraphics[width=0.95\linewidth]{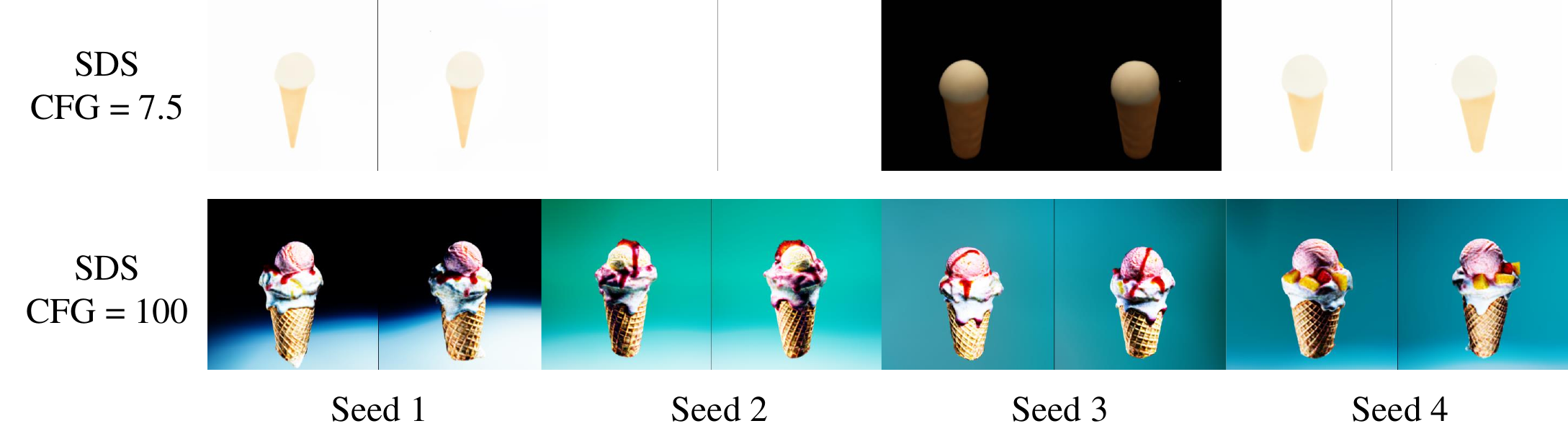}\\
        (b) SDS works with only large CFG weights.\\
        \includegraphics[width=0.95\linewidth]{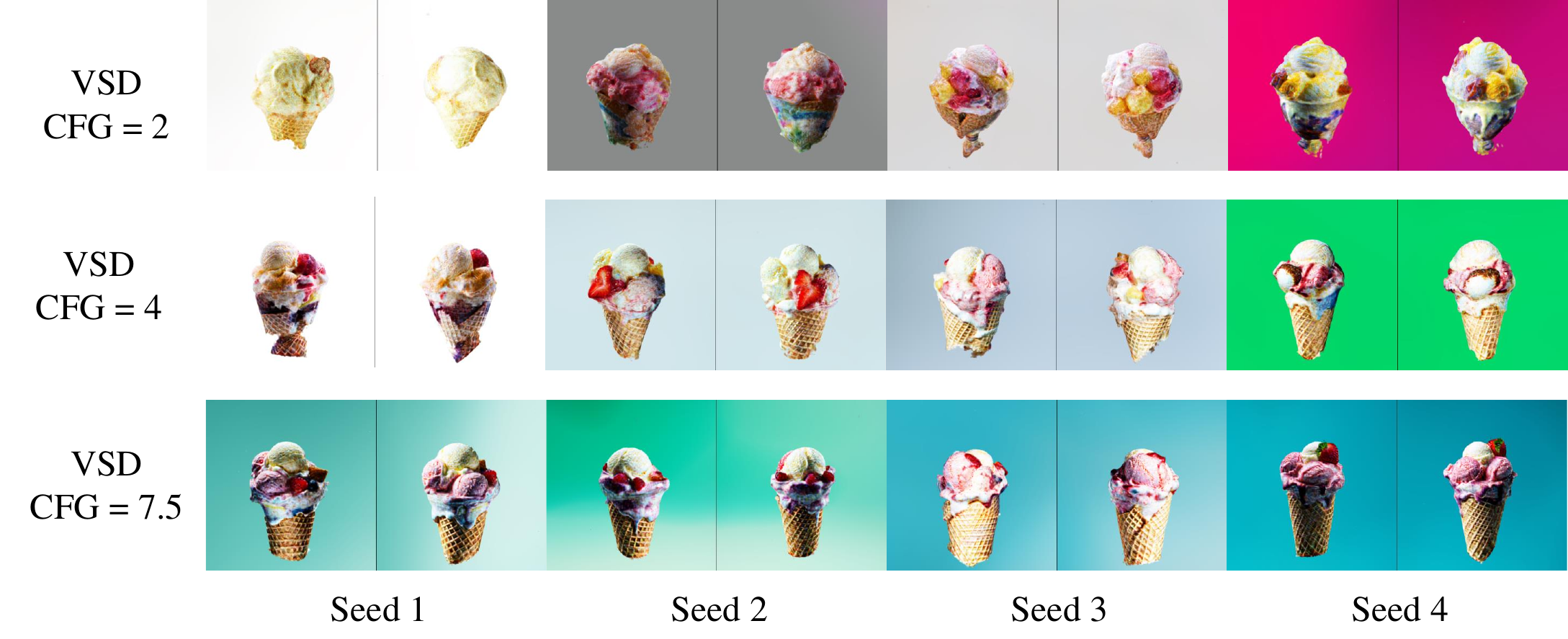}\\
        (c) VSD with single particle and different seeds provide slightly less diversity.
	\caption{Ablation on how CFG weight affects the randomness. Smaller CFG provides more diversity. But too small CFG provides less optimization stability. The prompt is \textit{A high quality photo of an ice cream sundae}.}
	\label{fig:cfg-ablate}
\end{figure*}

\section{How will CFG weights affect diversity?}
\label{sec:cfg-ablate}
In this section, we explore how CFG affects the diversity of generated results. For VSD, we set the number of particles as $4$ and run experiments with different CFG. For SDS, we run $4$ times of generation with different random seeds. The results are shown in Figure~\ref{fig:cfg-ablate}. As shown in the figure, smaller CFG provides more diversity. We conjecture that this is because the distribution of smaller guidance weights has more diverse modes. However, when the CFG becomes too small (e.g., CFG$= 1$), it cannot provide enough guidance strength to generate plausible results. Therefore, for all our 3D experiments shown in the results, we set CFG $=7.5$ as a trade-off between diversity and optimization stability. Note that SDS could not work well in such small CFG weights. Instead, our VSD provides a trade-off option between CFG weight and diversity, and it can generate more diverse results by simply setting a smaller CFG.

\section{Limitations and Discussions}

Although \model$\mbox{ }$achieves remarkable text-to-3D results, the generation takes hours (especially for high-resolution NeRF training), which is much slower than the vanilla image generation by a diffusion model. Speeding up the text-to-3D generation is another critical problem, and we leave it in future work.

Besides, our proposed scene initialization demonstrates its effectiveness in generating expansive scenes, yet there are still limitations, particularly with respect to camera positioning. Our current model sets the camera with a fixed view toward the scene's center, which serves object-centric scenes effectively but may be suboptimal for scenes with intricate geometry and detailed textures. Furthermore, despite our efforts to produce outcomes with a rich structure, occasional failures occur and the geometry reverts to a simplistic textured sphere. In order to address these limitations, future research could focus on developing improved camera poses capable of capturing and rendering scenes in finer detail. Furthermore, our present model relies solely on the text-to-image diffusion model without the assistance of other models. The integration of off-the-shelf depth estimation models~\citep{ranftl2020towards}, as utilized in other studies~\citep{hollein2023text2room,zhang2023text2nerf}, could potentially enhance the accuracy and detail of the scene generation process.

In addition, the correspondence between text prompts and generated results is sometimes insufficient, especially for complex prompts. We conjecture that it is because the ability to generate from complex prompts is limited by the text encoder of Stable Diffusion. In addition, the multi-face Janus problem~\citep{enwiki:1153346658} also exists in our case. Nevertheless, we believe our proposed VSD and other contributions are orthogonal to these problems, and the generation quality can be further improved by introducing more techniques, such as  
using a more powerful text-to-image diffusion model that understands view-dependent prompts better~\citep{poole2022dreamfusion,saharia2022photorealistic}, or a diffusion model with more 3D priors~\cite{liu2023zero1to3}.

\section{User Study}
\label{sec:user-study}
For completeness, we follow previous works~\cite{lin2022magic3d,chen2023fantasia3d} and conduct a user study by comparing \model$\mbox{ }$with DreamFusion~\cite{poole2022dreamfusion}, Magic3D~\cite{lin2022magic3d} and Fantasia3D~\cite{chen2023fantasia3d} under $15$ prompts, $5$ prompts for each baseline. Since none of the baselines have released their codes, we can only use the figures from their papers, which limits the number of results from baselines for us to compare. So we currently only compare under $15$ prompts. The volunteers are shown the generated results of our \model$\mbox{ }$and baselines and asked to choose the better one in terms of fidelity, details and vividness. We collect results from $109$ volunteers, yielding $1635$ pairwise comparisons. The results are shown in Table~\ref{table:user-study}. Our method outperforms all of the baselines.

\begin{table}[htb]
	\centering
    \caption{Results of user study. The percentage of user preference~($\uparrow$) is reported in the table.}
	\begin{tabular}{lllll}
		\toprule
		Name       & DreamFusion  & Magic3D    &Fantasia3D    \\
		\midrule
		Prefer baseline                   & 6.87   &  5.50 & 9.73      \\
        Prefer \model$\mbox{ }$(Ours)     &  \textbf{94.13}  &  \textbf{94.50} & \textbf{90.27}     \\
		\bottomrule
	\end{tabular}
	\label{table:user-study}
\end{table}

\section{Quantitative Results}
\label{sec:quan}
\begin{table}[htb]
	\centering
    \caption{3D sample quality by SDS or VSD, 100 prompts.}
	\begin{tabular}{llll}
		\toprule
		Method       & SDS  & VSD ($n$=1)     \\
		\midrule
		3D-FID (↓)   & 118.92   &  107.02       \\
		\bottomrule
	\end{tabular}
	\label{table:quana}
\end{table}
\begin{table}[htb]
	\centering
    \caption{3D sample quality by SDS or VSD, 25 prompts.}
	\begin{tabular}{lllll}
		\toprule
		Method       & SDS  & VSD ($n$=1) & VSD ($n$=4)    \\
		\midrule
		3D-FID (↓)   & 	191.82   &  186.87 & 185.88     \\
		\bottomrule
	\end{tabular}
	\label{table:quanb}
\end{table}
\begin{table}[htb]
	\centering
    \caption{2D sample quality by different samplers, 1000 prompts.}
	\begin{tabular}{llllll}
		\toprule
		Method       & SDS  & VSD (n=4) & VSD (n=8) & DPM++  \\
		\midrule
		FID (↓)   & 	90.09   &  68.02 & 66.68  & 47.91  \\
		\bottomrule
	\end{tabular}
	\label{table:quanc}
\end{table}

In this section, we add some quantitative results to demonstrate the effectiveness of VSD.

\paragraph{Experiment Setting} For 3D experiments, we compute the FID score between rendered images by SDS/VSD and 2D sampled images by ancestral sampling, named as 3D-FID. Specifically, we select 100 prompts from previous works including DreamFusion, Magic3D and Fantasia3D. For each prompt, we use VSD (number of particles $n$=1, CFG=7.5) or SDS (CFG=100) to optimize one 3D object and render 10 images uniformly from the circumference at an angle of 30° above the horizon, and collect 1k images in total. To isolatedly compare VSD with SDS, we run with the default setting of the stage-1 NeRF training of ProlificDreamer (i.e., both VSD and SDS are in 512 resolution and use annealed $t$).

In Table~\ref{table:quana}, We compute the FID score between the 1k samples from the 100 prompts and a 50k reference batch, which is sampled by 50-step DPM-Solver++~\cite{lu2023dpmsolver} with 500 images per prompt.
In Table~\ref{table:quanb}, due to the time and computation resource limits, we compare the results for VSD with $n$=4 under only 25 randomly-selected prompts from the aforementioned 100 prompts, and compare SDS, VSD ($n$=1) and VSD ($n$=4) with the 50-step DPM-Solver++ under these 25 prompts. For VSD ($n$=4), as we can get 4 particles (3D objects) per prompt, we randomly select one particle per prompt and render the corresponding 10 images of the selected particle for fair comparison.
For 2D experiments in Table~\ref{table:quanc}, we follow the common setting of evaluating text-to-image models by computing FID on MSCOCO2014 validation set. Specifically, we randomly select 1k prompts and sample one image per prompt by either 50-step DPM-Solver++, SDS (CFG=100), VSD ($n$=4, CFG=7.5) or VSD ($n$=8, CFG=7.5) to collect 1k samples for each method, and then compute the FID between the obtained samples and the entire COCO validation set. For VSD, as we can get $n$ images per prompt, we randomly select one image per prompt for fair comparison.

\paragraph{Results} VSD with $n$=1 still outperforms SDS in 3D (both with 512 resolution and annealed $t$), as shown in Table~\ref{table:quana} and Table~\ref{table:quanb}, which demonstrates the effectiveness of VSD.

Using more particles is slightly better. Due to the limitation of time and computation resources, we only compare $n$=1 and $n$=4 in 3D experiments, and $n$=4 with $n$=8 in the 2D experiments. As shown in Table~\ref{table:quanb}, VSD with 4 particles slightly outperforms VSD with 1 particles in the 3D setting; and as shown in Table~\ref{table:quanc}, VSD with 8 particles slightly outperforms VSD with 4 particles in the 2D setting.

VSD outperforms SDS in 2D. As shown in Table~\ref{table:quanc}, the FID by VSD is much better than SDS. As the 2D setting isolates the sampling algorithm from the 3D representations, we can directly compare different sampling algorithms, finding that VSD can get better sample quality than SDS (though still worse than SOTA diffusion samplers, it can generalize to 3D cases).

\section{Why using SDS in stage-2 for the geometry optimization of mesh?}
VSD can also be used to generate geometry. To validate this, we provide an ablation example in Fig.~\ref{fig:geo-ablation} (3a),(3b). As shown in the figure, VSD can obtain reasonable geometry. Although the some part of the geometry from VSD is with more details than SDS (including the tail of the horse), on the whole, the result from VSD is similar with SDS. We conjecture that this is because currently the triangle size of the mesh is relatively large and can't represent very fine details. Thus, for efficiency, we use SDS instead of VSD for mesh geometry optimization. We believe that VSD can be used to obtain high quality mesh if more advanced mesh representation is available.

Moreover, despite that we use SDS to optimize the geometry in stage-2, VSD is still crucial in stage-1 and stage-3, in which VSD significantly improves the generated quality.

\begin{figure*}[!ht]
	\centering
	\begin{minipage}{0.13\linewidth}
		\centering
        \includegraphics[width=\linewidth]{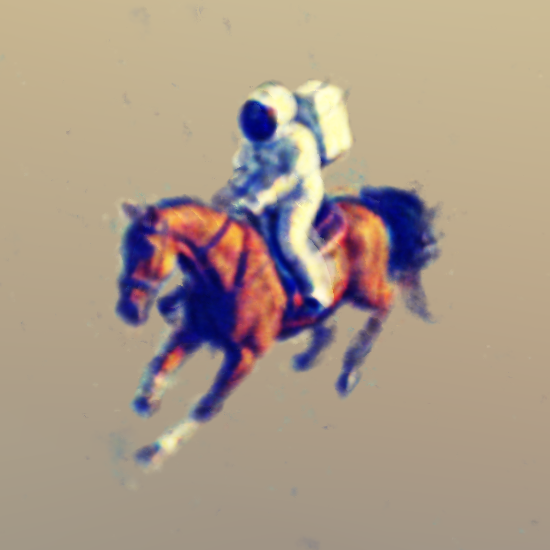}\\
        \small{(1) Generate NeRF with VSD.} \\
	\end{minipage}
	\begin{minipage}{0.13\linewidth}
		\centering
        \includegraphics[width=\linewidth]{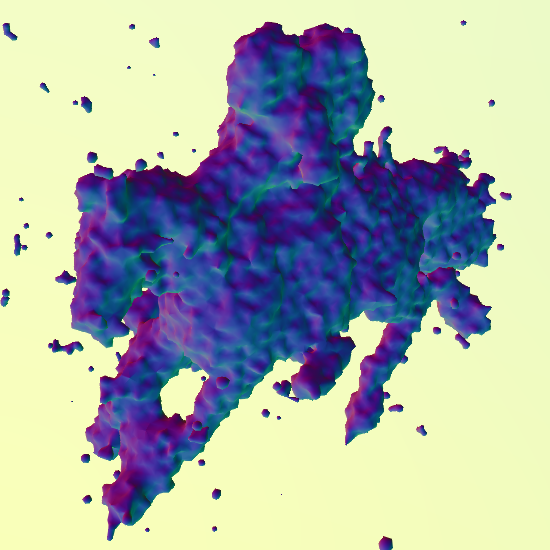}\\
        \small{(2) Get coarse mesh with Marching Tets.} \\
	\end{minipage}
	\begin{minipage}{0.13\linewidth}
		\centering
        \includegraphics[width=\linewidth]{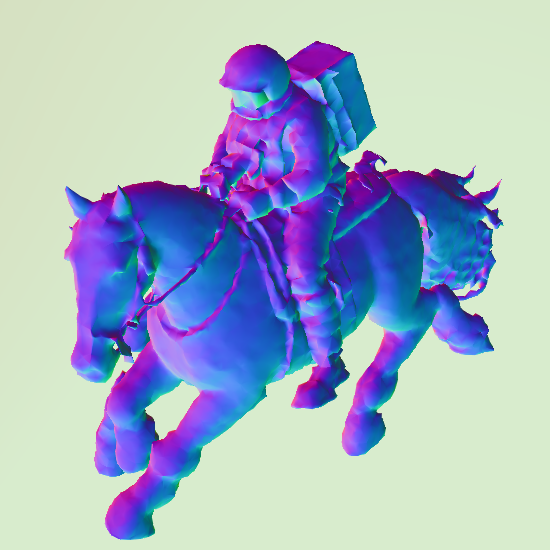}\\
        \small{(3a) Mesh optimization with SDS.} \\
	\end{minipage}
 	\begin{minipage}{0.13\linewidth}
		\centering
        \includegraphics[width=\linewidth]{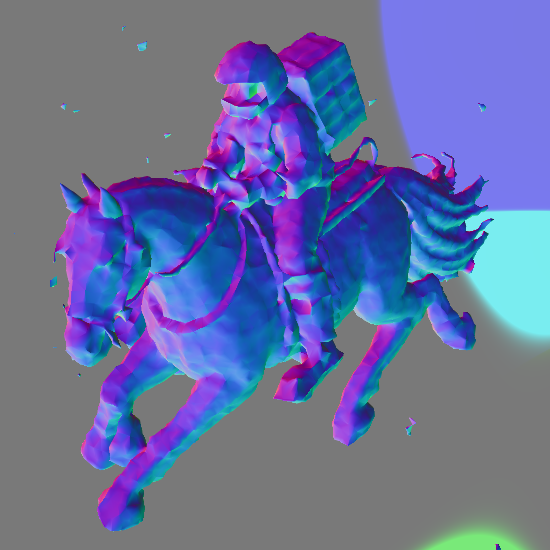}\\
        \small{(3b) Mesh optimization with VSD.} \\
	\end{minipage}
 	\begin{minipage}{0.13\linewidth}
		\centering
        \includegraphics[width=\linewidth]{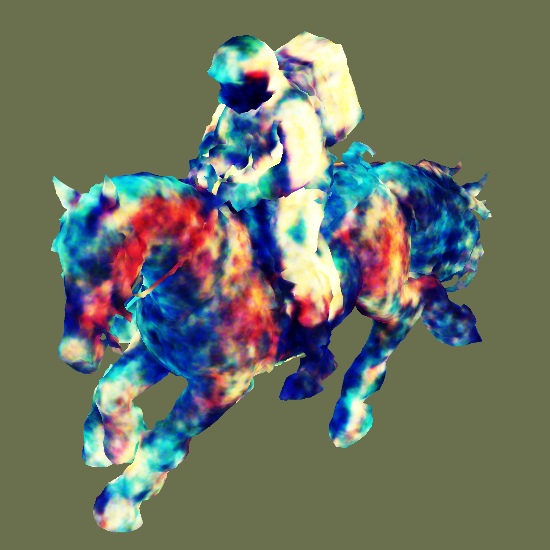}\\
        \small{(4) Inherit texture from NeRF.} \\
	\end{minipage}
 	\begin{minipage}{0.13\linewidth}
		\centering
        \includegraphics[width=\linewidth]{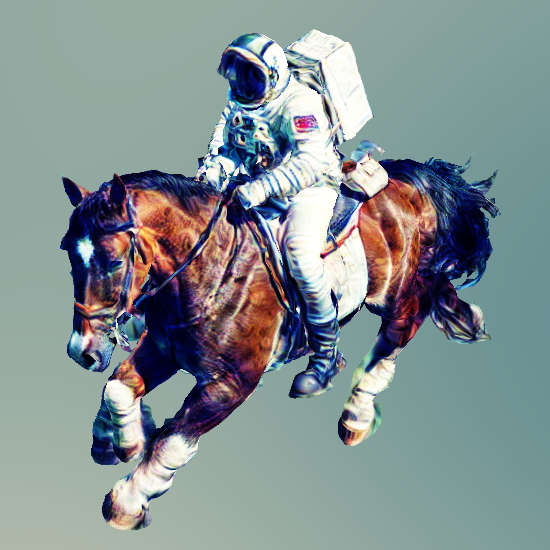}\\
        \small{(5) Finetune texture with VSD.} \\
	\end{minipage}
 	\begin{minipage}{0.13\linewidth}
		\centering
        \includegraphics[width=\linewidth]{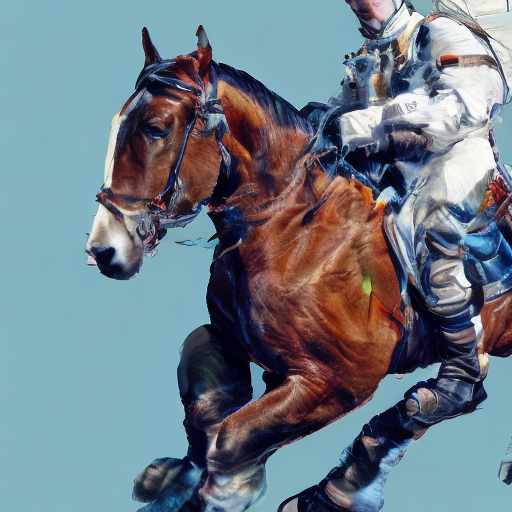}\\
        \small{LoRA sample during stage (5).} \\
    
	\end{minipage}
    
    \caption{Pipeline along with ablation study of VSD for geometry optimization. In (3a) and (3b), we show that VSD can also be used for mesh optimization of geometry. We also show intermediate results of our pipeline in this figure along with the samples from LoRA during the training phase in (5).}
    \label{fig:geo-ablation}
\end{figure*}

\section{More Comparisons with Baselines}

Here, we provide more comparisons with baselines in Fig.~\ref{fig:appendix-p6},  Fig.~\ref{fig:appendix-p2}, Fig.~\ref{fig:appendix-p3}, Fig.~\ref{fig:appendix-p4}, Fig.~\ref{fig:appendix-p1} and Fig.~\ref{fig:appendix-p5}. Since none of the baselines have released their codes, we can only directly copy the figures from the corresponding papers. Some baselines are missing given a specific prompt because the prompt is not included in the corresponding papers. To demonstrate geometry, some baselines choose textureless shading~\cite{poole2022dreamfusion,lin2022magic3d}, while the other~\cite{chen2023fantasia3d} prefers the normal map. For \model, we uniformly show the normal map for consistency. As shown in the figures, our method achieves better results in terms of fidelity and details.

\begin{figure*}[!thb]
	\centering
	\includegraphics[width=\linewidth]{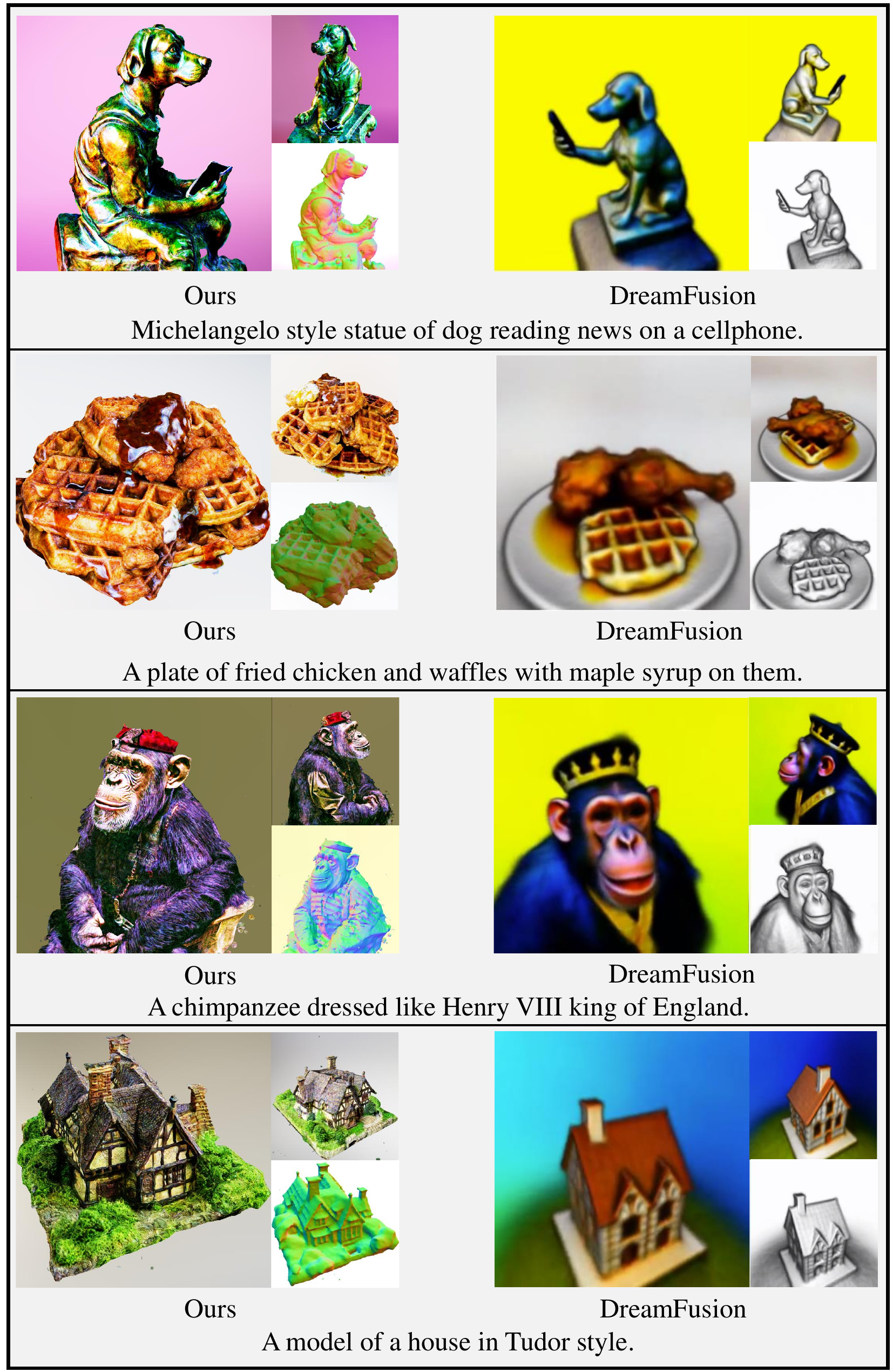}
	\caption{More results of ProlificDreamer compared with baselines.}
	\label{fig:appendix-p6}
\end{figure*}

\begin{figure*}[!thb]
	\centering
	\includegraphics[width=\linewidth]{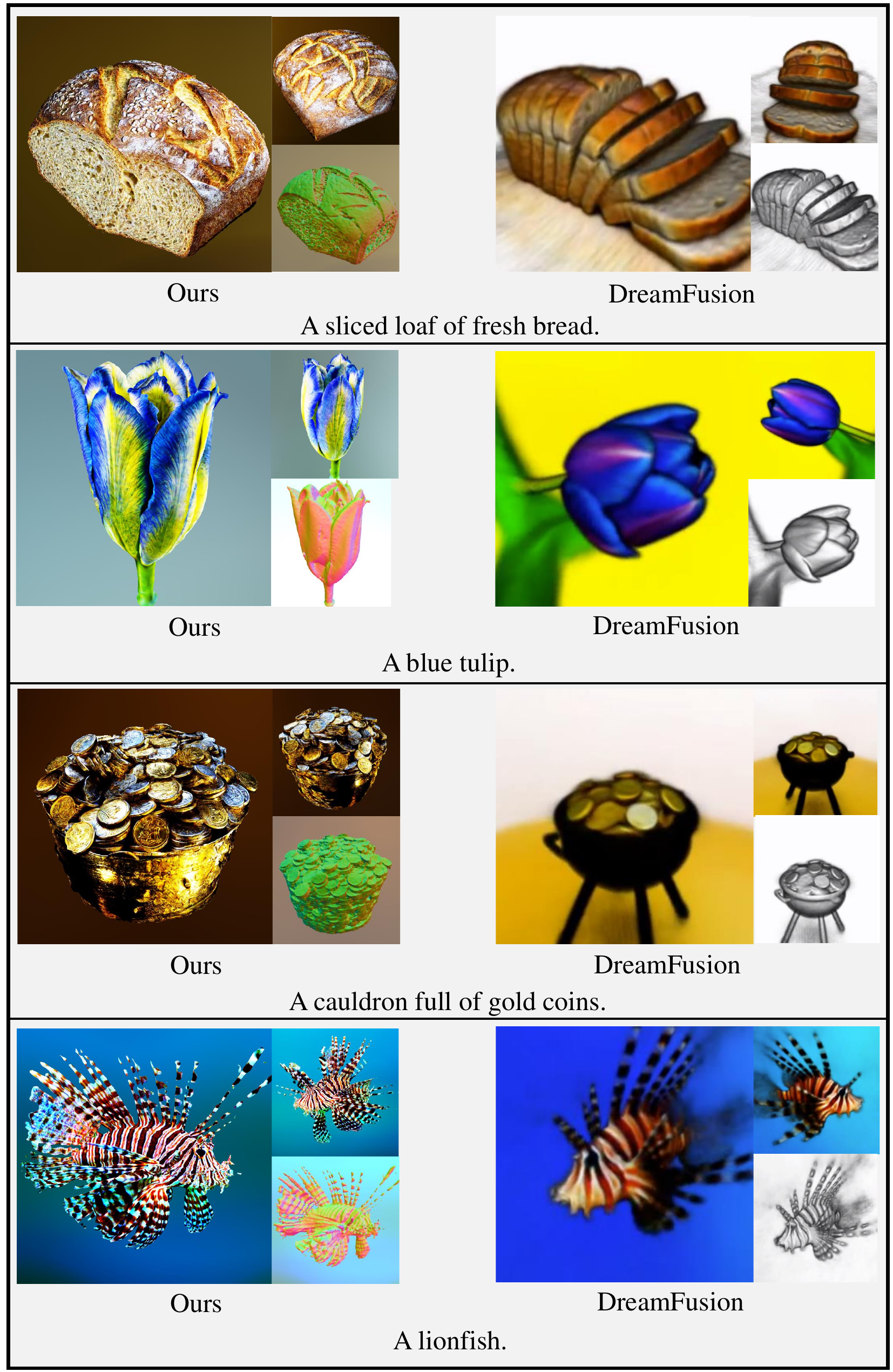}
	\caption{More results of ProlificDreamer compared with baselines.}
	\label{fig:appendix-p2}
\end{figure*}

\begin{figure*}[!thb]
	\centering
	\includegraphics[width=\linewidth]{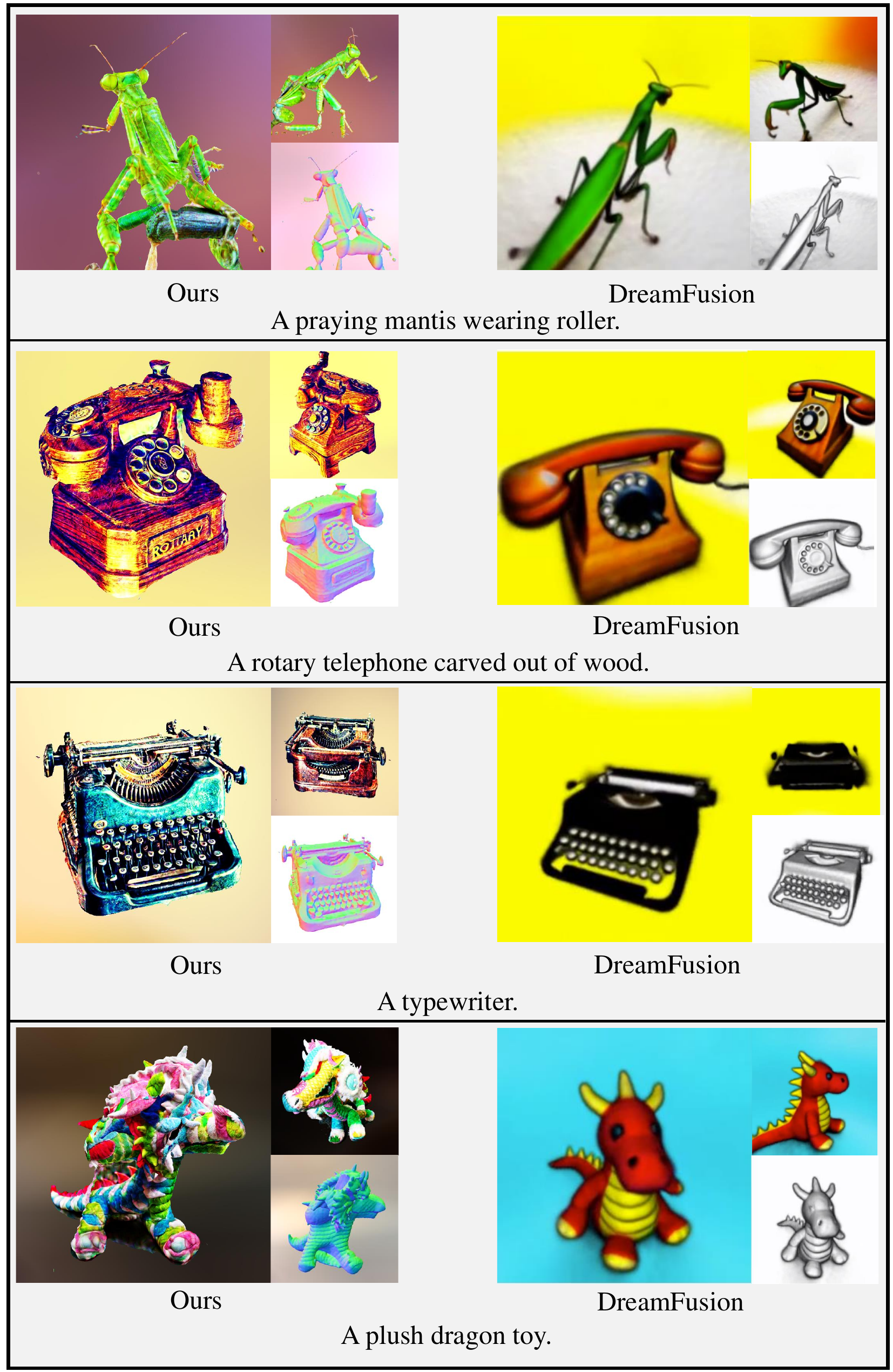}
	\caption{More results of ProlificDreamer compared with baselines.}
	\label{fig:appendix-p3}
\end{figure*}

\begin{figure*}[!thb]
	\centering
	\includegraphics[width=\linewidth]{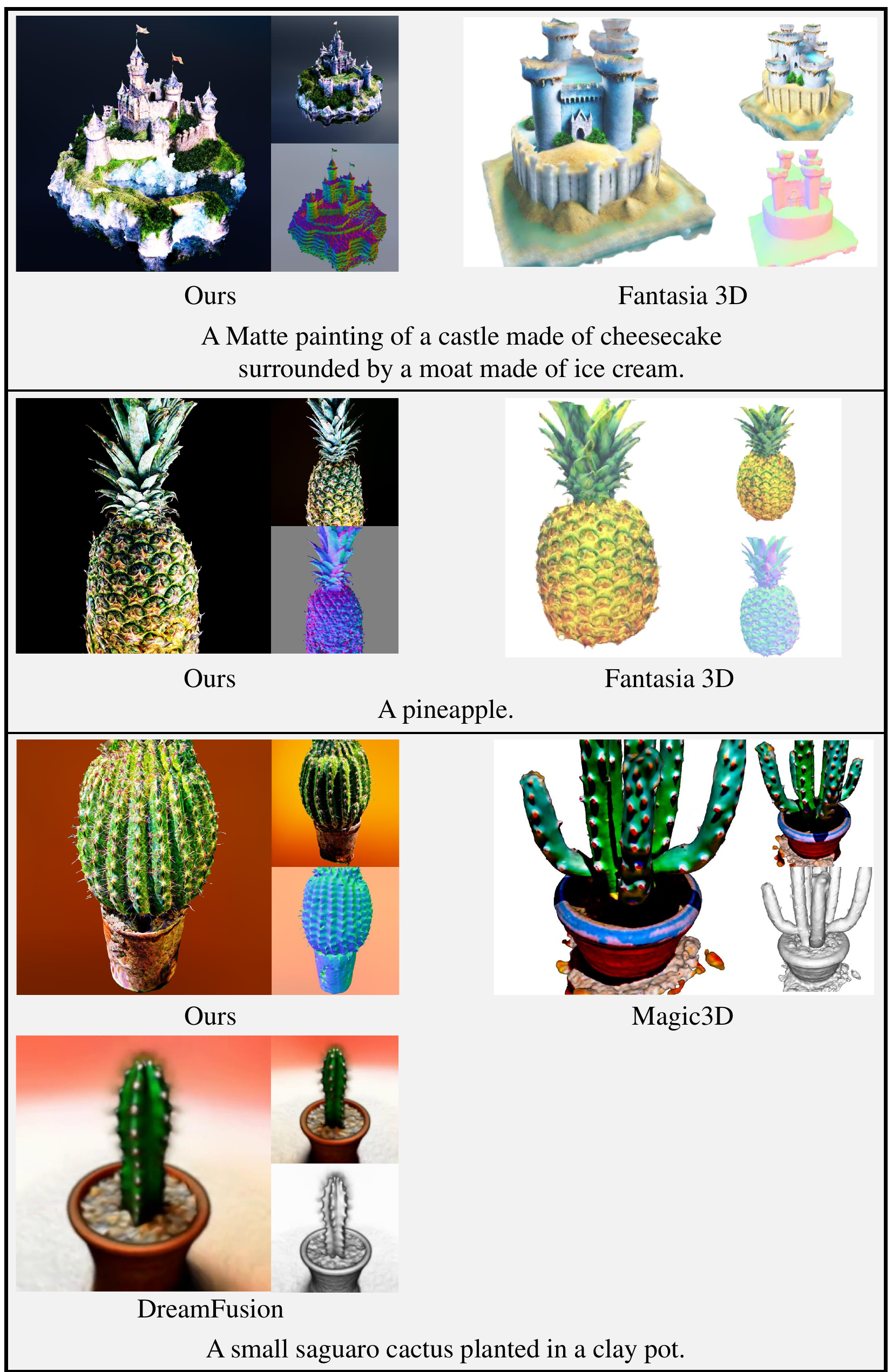}
	\caption{More results of ProlificDreamer compared with baselines.}
	\label{fig:appendix-p4}
\end{figure*}

\begin{figure*}[!thb]
	\centering
	\includegraphics[width=\linewidth]{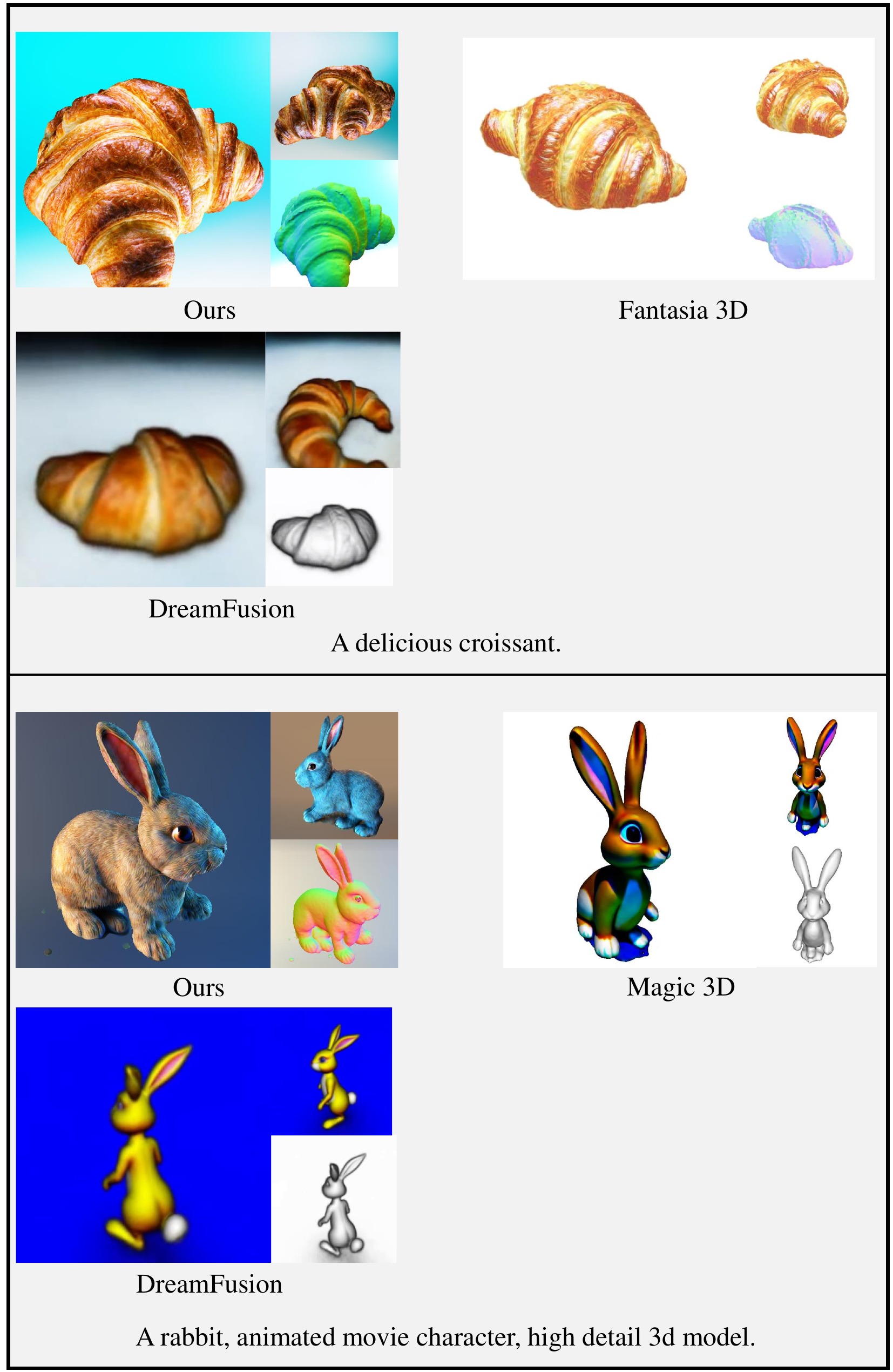}
	\caption{More results of ProlificDreamer compared with baselines.}
	\label{fig:appendix-p1}
\end{figure*}

\begin{figure*}[!thb]
	\centering
	\includegraphics[width=\linewidth]{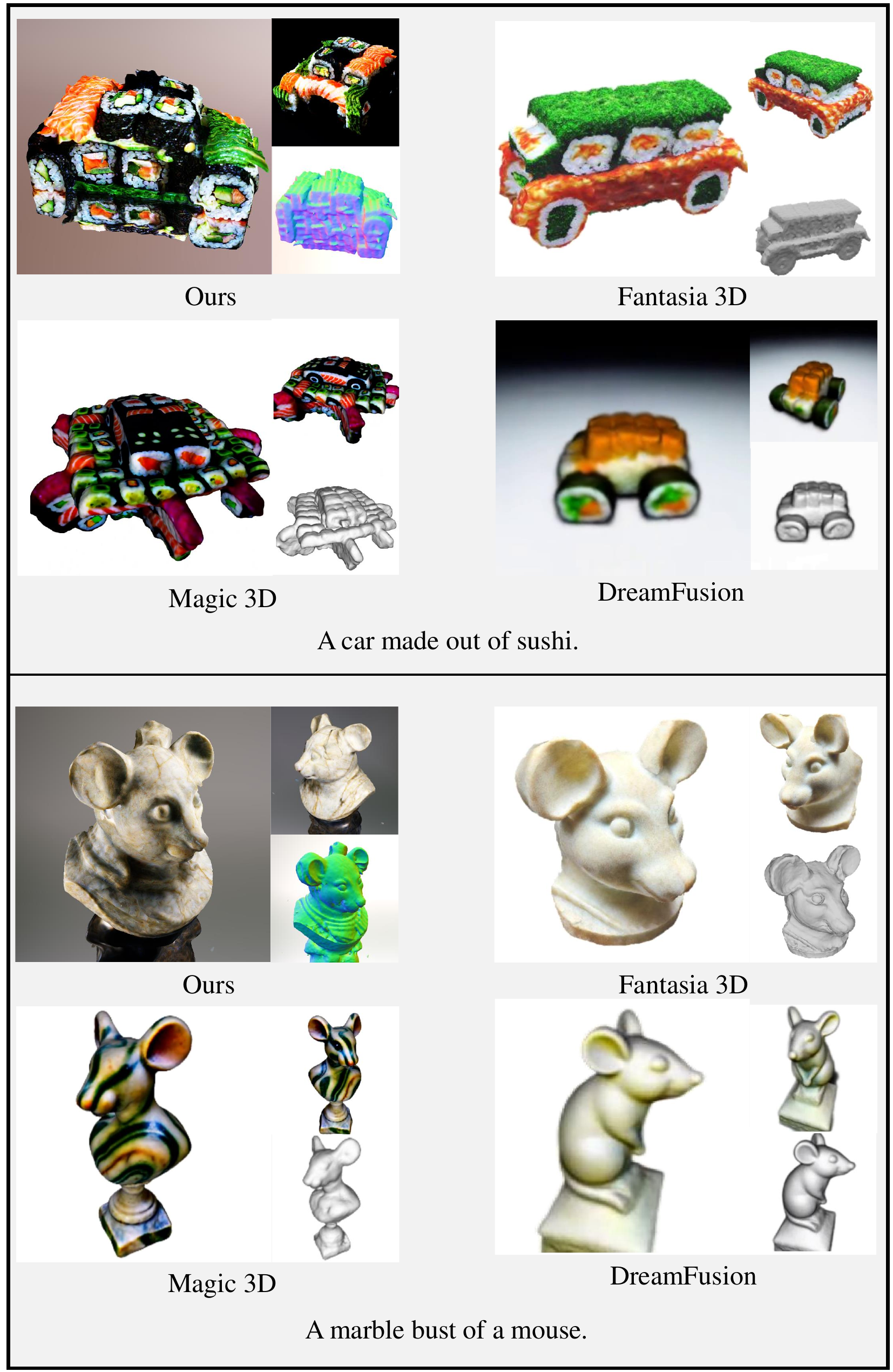}
	\caption{More results of ProlificDreamer compared with baselines.}
	\label{fig:appendix-p5}
\end{figure*}

\end{document}